\theoremstyle{plain}
  \newtheorem{theorem}{\sffamily Theorem}[section]
  \newtheorem{proposition}[theorem]{\sffamily Proposition}
  \newtheorem{lemma}[theorem]{\sffamily Lemma}
  \newtheorem{remark}[theorem]{\sffamily Remark}
  \newtheorem{definition}[theorem]{\sffamily Definition}
\providecommand{\Verr}{\boldsymbol{\mathsf{err}}}
\providecommand{\edge}{\bsf}
\providecommand{\Null}{\operatorname{ker}}
\providecommand{\trace}{\operatorname{trace}}                        
\providecommand{\cov}[1]{\operatorname{Cov{\LRP{#1}} }}
\providecommand{\Sgn}{\operatorname{sgn}}                    
\providecommand{\cov}[1]{ \operatorname{Cov{\LRP{#1}} }}
\providecommand{\Null}{\operatorname{ker}}
\providecommand{\ker}{\operatorname{ker}}
\providecommand{\Range}{\operatorname{range}}                        
\providecommand{\range}{\operatorname{range}}                        
\providecommand{\rank}{\operatorname{rank}}                        
\providecommand{\trace}{\operatorname{trace}}                        
\providecommand{\tr}{\operatorname{tr}}                        
\providecommand{\argmin}{\operatorname*{argmin}}  
\providecommand{\argmax}{\operatorname*{argmax}}  
\providecommand*{\Span}[1]{\operatorname{span}\left\{{#1}\right\}}     
\providecommand{\Dim}{\operatorname{dim}}            
\providecommand{\dim}{\Dim}
\newcommand*{\Op}[1]{\mathsf{#1}} 
\providecommand{\CD}{{\cal D}}
\newcommand*{\LRP}[1]{\left(#1\right)}
\newcommand*{\LRS}[1]{\left[#1\right]}
\newcommand{\xibf}{\boldsymbol{\xi}}
\newcommand{\Amat}{{\textrm{A}}}
\newcommand{\Pmat}{{\textrm{P}}}
\newcommand{\Imat}{{\textrm{I}}}
\newcommand{\Rmat}{{\textrm{R}}}
\newcommand{\Qmat}{{\textrm{Q}}}
\newcommand{\Mmat}{{\textrm{M}}}
\newcommand{\Smat}{{\textrm{S}}}
\newcommand{\bsf}{\mathsf{b}}
\newcommand{\usf}{\mathsf{u}}
\newcommand{\vsf}{\mathsf{v}}
\newcommand{\wsf}{\mathsf{w}}
\newcommand{\psf}{\mathsf{p}}
\newcommand{\xsf}{\mathsf{x}}
\newcommand{\ssf}{\mathsf{s}}
\newcommand{\ysf}{\mathsf{y}}
\newcommand{\zsf}{\mathsf{z}}
\newcommand{\abs}{{\boldsymbol{\mathsf{a}}}}
\newcommand{\ebs}{{\boldsymbol{\mathsf{e}}}}
\newcommand{\bbs}{{\boldsymbol{\mathsf{b}}}}
\newcommand{\ubs}{{\boldsymbol{\mathsf{u}}}}
\newcommand{\vbs}{{\boldsymbol{\mathsf{v}}}}
\newcommand{\wbs}{{\boldsymbol{\mathsf{w}}}}
\newcommand{\pbs}{{\boldsymbol{\mathsf{p}}}}
\newcommand{\xbs}{{\boldsymbol{\mathsf{x}}}}
\newcommand{\ybs}{{\boldsymbol{\mathsf{y}}}}
\providecommand{\CV}{{\cal V}}
\providecommand{\CW}{{\cal W}}
\providecommand{\CI}{{\cal I}}
\newcommand{\zbs}{{\boldsymbol{\mathsf{z}}}}
\providecommand{\CS}{{\cal S}}
\providecommand{\CN}{{\cal N}}
\providecommand{\FP}{\mathfrak{P}}
\providecommand{\CT}{{\cal T}}
\newcommand{\Csv}{\boldsymbol{\CS}}
\providecommand{\bbR}{\mathbb{R}}
\providecommand{\FM}{\mathfrak{M}}
\providecommand{\bbP}{\mathbb{P}}
\newcommand{\Veta}{\boldsymbol{\eta}} %
\providecommand{\bbN}{\mathbb{N}}
\providecommand{\bbW}{\mathbb{W}}
\providecommand{\bbE}{\mathbb{E}}
\newcommand*{\EUSP}[2]{\left<{#1},{#2}\right>} 
\providecommand*{\N}[1]{\left\|{#1}\right\|} 
\newcommand*{\SN}[1]{\left|{#1}\right|}      
\providecommand{\Id}{\Op{Id}}                     
\newcommand{\ra}[1]{\renewcommand{\arraystretch}{#1}}
\date{}
\author[1]{Ernesto De Vito\thanks{Email: \texttt{devito@dima.unige.it}}}
\author[2]{\v Zeljko Kereta\thanks{Email: \texttt{zeljko@simula.no}}}
\author[3]{Valeriya Naumova\thanks{Email: \texttt{valeriya@simula.no}}}
\affil[1]{Universit\`a di Genova, Italy}
\affil[2,3]{Simula Research Laboratory, Norway}
\begin{document}
\setlength{\belowdisplayskip}{3pt} \setlength{\belowdisplayshortskip}{3pt}
\setlength{\abovedisplayskip}{3pt} \setlength{\abovedisplayshortskip}{3pt}

\title{Unsupervised parameter selection for denoising with the elastic net}

\maketitle

\begin{abstract}
  Despite recent advances in regularization theory, the issue of
  parameter selection still remains a challenge for most
  applications.  In a recent work the framework of
  statistical learning was used to approximate the optimal Tikhonov
  regularization parameter from noisy data.  In this work, we improve
  their results and extend the analysis to the elastic net regularization.  Furthermore, we design
  a data-driven, automated algorithm for the computation of an approximate regularization parameter.
  Our analysis combines statistical
  learning theory with insights from regularization theory. 
  We compare our approach with state-of-the-art parameter selection criteria and show that it has superior accuracy.
\end{abstract}
\textbf{Keywords: }{parameter selection, elastic net regularization, iterative thresholding, sub-gaussian vectors, matrix concentration inequalities}

\tableofcontents
\section{Introduction}

Inverse problems deal with the recovery of an unknown quantity of interest
$\xbs\in\bbR^d$ from a corrupted observation $\ybs\in\bbR^m$. 
In most cases the relationship between $\xbs$ and $\ybs$ is linear, and can be approximately
described by 
\begin{equation}\label{eqn:inverse_problem}
\ybs = \Amat \xbs +\sigma\wbs,
\end{equation}
where $\Amat\in\bbR^{m\times d}$ is a known linear forward operator, $\wbs$ is a zero-mean isotropic random
vector, modeling the noise, and $\sigma>0$ is the noise level. 
Inverse problems of this type are ubiquitous in image processing, compressed sensing and other scientific fields.
In image processing applications they model tasks such as: denoising, where $\Amat$ is the identity; deblurring, where $\Amat$ is a convolution operator; and inpainting, where $\Amat$ is a masking operator.

The recovery of the original signal $\xbs$ from the corrupted observation $\ybs$ is an ill-posed inverse problem.
Thus, theory of inverse problems suggests the use of suitable regularization techniques \cite{zbMATH00936298}.
Specifically, in case of Gaussian noise, $\xbs$ is approximated with the minimizer of a regularized functional
  \begin{equation}\label{eqn:regularizer}
\argmin_{ \zbs\in\bbR^d} \N{\Amat \zbs- \ybs}_2^2 + \lambda
J(\zbs),
\end{equation}
where $\N{\cdot}_2$ is the Euclidean norm modeling \emph{data-fidelity}, $J$ is a \emph{penalty term}
encoding an \emph{a priori} knowledge on the nature of the true solution, and $\lambda$ is a \emph{regularization
parameter} determining a trade-off between these two terms. 
Having the penalty term fixed, a central issue concerns the selection of $\lambda$. 
The optimal parameter $\lambda$ is the one
that minimizes the discrepancy between the minimizer $\zbs^\lambda$
of~\eqref{eqn:regularizer} and
 the exact solution~$\xbs$
 \begin{equation}\label{eqn:lambda_ast}
\lambda_{\text{opt}}  = \argmin_{\lambda \in (0,+\infty)} \N{\zbs^\lambda
  -\xbs}_2.
\end{equation}
In the context of (semi-)supervised machine learning methods, regularization parameters are selected by evaluating \eqref{eqn:lambda_ast}, or another metric, over a training set of clean signals.
Unfortunately though, due to the curse of dimensionality accurate estimation of high-dimensional functions requires a number of samples that scales exponentially with
the ambient dimension. A common approach to mitigating these effects is to assume that the relevant data are supported on structures of substantially lower dimensionality.
On the other hand, in regularization theory  the clean image is unknown and hence $\lambda_\text{opt}$ is approximated using prior knowledge about the noise, such as the noise level.
Moreover, classical regularization theory is mostly concerned with the case when the data belongs to a function space (and is thus infinite dimensional).
In this case most existing parameter selection methods focus on the recovery of the minimum least-squares norm solution.
On the other hand, our work considers finite dimensional problems that incorporate additional constraints on the recovered solution in order to ensure it has the desired structure.
In many applications $\xbs$ is unknown (thus we cannot use supervised methods) and there is no available information about the noise $\wbs$ or the noise level $\sigma$.
Hence, $\lambda_{\text{opt}}$ needs to be approximated. 
Moreover, the lower level problem \eqref{eqn:lambda_ast} is often non-convex, even when \eqref{eqn:regularizer} is.

Choosing a good approximation to $\lambda_{\text{opt}}$ is a non-trivial, problem-dependent task that has motivated significant amounts of research over the last decades. 
However, there is still no framework that allows a fast and efficient parameter selection, particularly in a completely unsupervised setting.
In this paper, we aim at (partially) closing this gap and provide a novel concept for automated parameter selection by recasting the problem to the framework of statistical learning theory. 
Specifically, inspired by recent and (to our knowledge) first results in this spirit \cite{DFN16} we propose a method for learning the optimal parameter for elastic net regularization that uses a dimension reduction preprocessing step.
We emphasize that the method is unsupervised and requires minimal human interference.

\paragraph{Existing parameter selection methods.} 
Parameter selection rules used in regularization theory can be broadly classified as those based on the discrepancy principle \cite{morozov2012,AR2010}, generalized cross-validation (GCV)\cite{golub79}, balancing principle \cite{lepskii1991problem,DeVito2010}, quasi-optimality \cite{tikhonov1977,hofmann1986} and various estimations of the mean-squared error (MSE) (see \cite{stein1981,deledalle2014stein} and references therein).
GCV is a particularly popular parameter rule for linear methods since it gives a closed form for the regularization parameter and does not require tuning of any additional parameters or the knowledge of the noise. 
In specialized cases GCV can be extended to nonlinear problems \cite{wood2000}, but the regularization parameter is no longer given in closed form nor through an implicit equation. 
Balancing principle is a stable method that has received a lot of attention in the inverse problems community and has also been studied in the framework of learning theory, but requires tuning of additional parameters. 
Quasi-optimality is one of the simplest parameter choice methods. It does not require any information about the problem but it is not as stable as the balancing principle.
Discrepancy and MSE-based principles still remain the preferred methods for parameter selection for nonlinear estimators due to their simplicity and accuracy.
We refer to a recent rather comprehensive comparative study on the existing approaches \cite{zbMATH05929140}. 

In order to select the regularization parameter most existing methods require the regularized solution $\zbs^\lambda$ to be computed over a predefined grid of values of $\lambda$. 
The regularization parameters are then chosen according to some criteria, \emph{e.g.}, loss over a validation set.
To find regularization parameters by an exhaustive search is a computationally expensive task, especially in the high-dimensional data scenario, with often no guarantees on the quality of approximation.  
Moreover, most criteria presuppose that some \emph{a priori} information is  available, such as an accurate estimate of the noise level (in \emph{e.g.} discrepancy principle) or bounds on the noise error (in \emph{e.g.} balancing principle) and require additional, method-specific parameters to be preselected.

The main motivation of this work is to compute an accurate regularized solution $\zbs^\lambda$, with a nearly optimal $\lambda$, while ensuring low computational complexity and minimizing the need for manual intervention. 
In particular, we propose an unsupervised parameter selection method by recasting the problem to the framework of statistical learning, where we are interested in learning a function $\ybs \mapsto \widehat \lambda_{\text{opt}}$, from a training set of corrupted data, while ensuring that $\widehat\lambda_{\text{opt}}$ is a good approximation of the optimal parameter $\lambda_{\text{opt}}.$ 

\paragraph{Elastic net regularization.} 

Elastic net regularization was proposed by Zou and Hastie \cite{ZH05}, as
  \begin{equation}
  \zbs^{\lambda}(\ybs) = \argmin_\zbs \N{\Amat\zbs-\ybs}^2 +
        \lambda \LRP{\N{\zbs}_1 + \alpha\N{\zbs}_2^2},
        \label{eq:3}     
      \end{equation}
where  $\alpha\geq 0 $ is a hyperparameter controlling the
trade-off between $\ell_1$ and $\ell_2$ penalty terms.
Our main motivation for considering the elastic net is that it produces sparse models comparable to the Lasso (and is thus well suited for problems with data on lower dimensional structures), while often achieving superior accuracy in real-world and simulated data.
Moreover, the elastic net overcomes the main limitations of $\ell_1$ minimization.
Namely, it encourages the grouping effect, which is relevant for many real-life applications such as microarray classification and face detection (see \cite{DDR09} and references therein). 

To solve \eqref{eq:3}  the authors in \cite{ZH05} rewrite the elastic net functional as Lasso regularization with augmented datum, use LARS \cite{efron2004} to reconstruct the entire solution path, and apply cross-validation to select the optimal regularization parameter. 
Later work \cite{DDR09} studies theoretical properties of \eqref{eq:3} in the context of learning theory, analyzes the 
underlying functional and uses \emph{iterative soft-thresholding} to compute the solution.  
For the parameter choice the authors provide an adaptive version of the balancing principle \cite{lepskii1991problem,DeVito2010}. 
The rule aims to balance approximation and sample errors, which have contrasting behavior with respect to the tuning parameter, but requires (potentially) many evaluations of $\zbs^\lambda$.
We will rework some of the arguments from \cite{DDR09} for the computation of $\zbs^{\lambda}$, while keeping our focus on an efficient approach for parameter learning.
In \cite{JLS09} the authors propose an \emph{active set} algorithm for solving \eqref{eq:3}.  
Addressing the problem in the framework of classical regularization theory, the authors consider the discrepancy principle  \cite{morozov2012,bohesky2009} for determining the parameter. 
This requires estimations of the solution $\zbs^{\lambda}$ for many parameter values, and a pre-tuning of other, method-specific parameters. 
Moreover, it is assumed that the noise level is known, which is often not the case in practice. 

The authors in \cite{LR10} use a hybrid alternating method for tuning parameters $\lambda$ and $\alpha$ for the model fitting problem
$
\ysf_i = \abs_i^\top \xbs, i = 1,\ldots, n, 
$,
where $\ysf_i \in \bbR,$ $\abs_i \in \bbR^p$ and $\xbs \in \bbR^p.$ 
First step is to update the solution $\zbs^{\lambda}$, using coordinate descent, and then to update $\lambda$ and $\alpha$ in one iteration. 
The main advantage is the efficiency, as one does not need to calculate $\zbs^{\lambda}$ for multiple parameters at once, but rather on a much coarser parameter grid. 
The method is in spirit similar to LARS, but has better scalability. It requires that a non-convex problem is solved, and hence has inherent limitations. 
Moreover, it cannot be used in the setting of general inverse problems \ref{eqn:inverse_problem}, where the design matrix $\Amat$ is fixed and each response $\ybs$ is generated by a new clean signal $\xbs$.
{In summary, we are not aware of any parameter selection rule for the elastic net that allows to select $\lambda$ without \emph{a priori} assumptions and without extensive manual adjustments.}

This paper leverages the work \cite{DFN16} where the parameter selection is considered in the context of non-parametric regression with random design. 
In particular, the authors propose a data-driven method for determining the optimal parameter for Tikhonov regularization, under the assumption that
a training set of independent observations
\( \ybs_1,\ldots, \ybs_N\) is made available,
each of them associated with an (unknown) signal
$\xbs_1,\ldots, \xbs_N$ through
\(
\ybs_i = \Amat \xbs_i +\sigma \wbs_i.
\)
The starting point of the method is to find an empirical proxy $\widehat\xbs $ of the real solution $\xbs$ by assuming that $\xbs_1,\ldots, \xbs_N$ are distributed over a lower-dimensional linear subspace and then select the regularization parameter as
  \begin{equation}
\widehat{\lambda}_{\text{opt}} = \argmin_{\lambda \in (0,+\infty)} \N{\zbs_{\text{Tik}}^\lambda -\widehat\xbs}_2,\label{eq:2}
\end{equation}
where $\zbs_{\text{Tik}}^\lambda$ is the minimizer of the Tikhonov
functional 
\(
\min_{\zbs\in\bbR^d} \N{\Amat\zbs-\ybs}_2^2 +
        \lambda \N{\zbs}_2^2.
\)
The analysis and techniques related to $\widehat\xbs$ are independent of the choice of the optimization scheme, whereas the selection of $\widehat{\lambda}_{\text{opt}}$ is defined by the regularization scheme. 
However, it is worthwhile to mention that if $\Amat^\dagger$ is not injective, $\widehat{\xbs}$ is not a good proxy of $\xbs$. For Tikhonov regularization this is not an issue as, without loss of generality, we can always assume that $\Amat$ is injective. 
Specifically, one can replace  $\xbs$ in~\eqref{eqn:lambda_ast} with $\xbs^\dagger= \Amat^\dagger \Amat\xbs$ and recall that $\zbs_{\text{Tik}}^\lambda$ belongs to $\Null \Amat^{\perp}$ for all $\lambda$. 
Therefore, for wider applicability of the suggested framework, it is important to address the selection of $\widehat{\lambda}_{\text{opt}}$ for a wider class of regularizers and inverse problems.

In this paper, we extend the framework of \cite{DFN16} by providing the analysis for the elastic net regularization and improving the theoretical results.
Moreover, we develop an efficient, fully automated algorithm that is extensively tested on synthetic and real-world examples.
The last point is the main practical contribution of our paper. 
Namely, our goal is not to introduce a new regularization paradigm but rather to design a fast and unsupervised  method for determining a near optimal regularization parameter for existing regularization methods.
To do this, we analyze our problem in {two settings}:
\begin{enumerate}
\item {\it simplified case $\Amat = \Id$} (corresponding to image denoising):
We restate the lower level problem and show that in case of a bounded $\wbs$ it admits a unique minimizer, which motivates our algorithm.
Furthermore, we provide a bound on
$| \lambda_{\text{opt}} - \widehat{\lambda}_{\text{opt}}|$ for independent Bernoulli random noise and discuss the number of samples needed for optimal learning, see Proposition~\ref{prop:error_estimate}. 
Though the latter model might be oversimplified, it captures the essence of the problem and our experiments confirm the results in more general settings. 

\item {\it general case:} for a general matrix $\Amat$ we provide an unsupervised, efficient, and accurate algorithm for the computation of an approximate optimal parameter. 
We study the performance of our algorithm, comparing it to state-of-the-art parameter choice methods on synthetic and image denoising problems. The obtained results show that our approach achieves superior accuracy.
\end{enumerate}

\subsection{Outline}

In Section \ref{sec:setting}, we describe the main ingredients of our approach. 
We define and prove bounds regarding {empirical estimators}, discuss minimizers of the elastic net \eqref{eq:3}, and define loss functions that will be used for parameter selection. 
Section \ref{sec:error_analysis} provides the main theoretical results of the paper regarding loss functionals and their minimizers.
In Section \ref{sec:general_matrices} we present an efficient and accurate algorithm for the computation of an approximate optimal parameter.
We study the performance of our method through several numerical experiments on synthetic and imaging data in Section \ref{sec:experiments}. Therein, we  compare our method with state-of-the-art parameter selection criteria in terms of accuracy of the solution recovery, closeness to the optimal parameter, sparse recovery and computational time.
For imaging tasks our focus is on wavelet-based denoising where we work on synthetic images and real-world brain MRIs.
We conclude with a brief discussion about future directions in Section \ref{sec:conclusion}.
The Appendix contains proofs of auxiliary results.

\subsection{Notation}
The Euclidean and the $\ell_1$-norms of $\ubs=(\usf_1,\ldots,\usf_d)^\top$ are denoted by $\N{\ubs}_2$ and
$\N{\ubs}_1$, respectively. The modulus function $\SN{\cdot}$, the \emph{sign}
function  $\Sgn(\cdot)$, and the \emph{positive part} function
$(\cdot)_+$ are defined component-wise for $i=1,\ldots,d,$ by
\(
\SN{\ubs}_i =\SN{\usf_i},\) \(\Sgn(\ubs)_i = \Sgn(\usf_i),\) and \(
((\ubs)_+)_i= (\usf_i)_+, 
\)
where for any $\usf\in\bbR$
\[\Sgn(\usf)=
\begin{cases} 1,&\quad \textrm{if } \usf>0,\\
 0,&\quad \textrm{if } \usf=0,\\-1,&\quad \textrm{if }
\usf<0,\end{cases}\quad\textrm{and}\quad(\usf)_+=\max\{0,\usf\}.
\]
The canonical basis of $\bbR^d$ is denoted by $\{\ebs_i\}_{i=1,\ldots,d}$.
We denote the transpose of a matrix $\Mmat $ by $\Mmat^\top,$ the Moore-Penrose pseudo inverse by $\Mmat^\dagger$, and the spectral norm by $\N{\Mmat}_2$.
Furthermore, $\range(\Mmat)$ and $\Null(\Mmat)$ are the range and the null space of $\Mmat,$ respectively. For a square-matrix $\Mmat$, we use $\trace (\Mmat)$ to denote its trace.
The identity matrix is denoted by $\Id$ and we use $\mathbbm{1}_{\CD}$ for the indicator function of a set $\CD\subset\bbR^d$.
For any $\vbs\in\bbR^d$,
$\vbs\otimes \vbs$ is the rank one operator acting on $\wbs\in\bbR^d$ as
$(\vbs^\top \wbs) \vbs$. 

A random vector $\xibf$ is called \emph{sub-Gaussian} if
\[\N{\xibf}_{\psi_2} :=\sup_{\N{\vbs}=1}\sup_{q\geq 1} q^{-\frac{1}{2}} \bbE\LRS{\SN{\vbs^\top \xibf}^q}^{\frac{1}{q}}  < +\infty.\]
The value $\N{\xibf}_{\psi_2}$ is the sub-Gaussian norm of $\xibf$, with which the space of sub-Gaussian vectors becomes a normed vector space \cite{V18}. 
The (non-centered) covariance of a random vector $\xibf$ is denoted as
\[\Sigma({\xibf}):=\cov{\xibf} = \bbE [\xibf \otimes \xibf].\]
We write $a \lesssim b$ if there exists an absolute constant $C>0$ such that $a \leq Cb.$

\section{Problem setting}\label{sec:setting}
We consider the following stochastic linear inverse problem:
given a deterministic matrix  $\Amat\in\bbR^{m\times d}$, we are interested in recovering a vector $\xbs\in\bbR^d$ from a noisy observation $\ybs\in\bbR^m$ obeying
\begin{equation}\label{eqn:LinearModel}
\ybs = \Amat \xbs +\sigma\wbs, 
  \end{equation}
where 
\begin{enumerate}[label=(\textsf{A}\arabic*)]
  \item\label{ass:x} the unknown datum $\xbs\in\bbR^d$ is a sub-Gaussian vector, such that $\N{\xbs}_{\psi_2}=1$;
  \item\label{ass:V}  there exists a subfamily
            $1\leq i_1<\ldots i_h\leq d$ of $h$ indices, with $h\ll d$, such that
\[ \CV:=\range\LRP{\Sigma(\xbs)}=\Span{\ebs_{i_1},\ldots,\ebs_{i_h}}\]
and $\Null(\Amat)\cap\CV=\{\boldsymbol0\}$;
  \item\label{ass:noise} the noise $\wbs\in\bbR^m$ is an
          independent sub-Gaussian vector, such that
          $\N{\sigma\wbs}_{\psi_2}\leq 1$, $\Sigma({\wbs})=\Id$ and
          $\sigma>0$ is the noise level. 
\end{enumerate}
Conditions~\ref{ass:x} and~\ref{ass:noise} are standard assumptions on the distributions of the exact datum $\xbs$ and the noise $\sigma\wbs$, ensuring that the tails have fast decay.
Note also that normalization conditions on  $\xbs$ and $\wbs$ can always be satisfied by rescaling.  
Furthermore, it follows from the definition that $\CV$ is the smallest subspace such that $\xbs\in \CV$, almost surely. 
Thus, by~\ref{ass:V}, the exact datum $\xbs$ is almost surely $h$-sparse and since
$\Null\LRP{\Amat}\cap \CV=\{0\}$, it is a unique vector with that property. 
Define now
\( \CW=\range\LRP{\Sigma({\Amat \xbs})}.\) 
The following simple result  was shown for an injective $\Amat$ in \cite{DFN16}; here we extend it to the general case.

\begin{lemma}\label{lem:AV}
Under~Assumption~\ref{ass:V} we have $\dim\CW=h$ and $\CW =\Amat\CV$.
\end{lemma}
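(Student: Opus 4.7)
The plan is to reduce everything to the factorization $\Sigma(\Amat\xbs) = \Amat\,\Sigma(\xbs)\,\Amat^\top$, which follows from the definition $\Sigma(\xibf) = \bbE[\xibf\otimes\xibf]$ and linearity of expectation. Once this identity is in hand, the two claims $\CW = \Amat\CV$ and $\dim\CW = h$ become statements about the range and rank of a symmetric PSD matrix sandwiched by $\Amat$ and $\Amat^\top$.

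First I would show the inclusion $\CW \subseteq \Amat\CV$: any $v\in\CW$ equals $\Amat\,\Sigma(\xbs)\,\Amat^\top u$ for some $u\in\bbR^m$, and the middle factor $\Sigma(\xbs)\,\Amat^\top u$ lies in $\range(\Sigma(\xbs))=\CV$ by assumption~\ref{ass:V}, hence $v = \Amat w$ with $w\in\CV$. This step is immediate and uses only the definition of $\CV$; it does not need the kernel hypothesis.

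The kernel hypothesis enters in the dimension count. I would write $\Sigma(\xbs) = \Sigma(\xbs)^{1/2}\Sigma(\xbs)^{1/2}$ (taking the PSD square root) so that
\[
\Amat\,\Sigma(\xbs)\,\Amat^\top \;=\; \bigl(\Amat\,\Sigma(\xbs)^{1/2}\bigr)\bigl(\Amat\,\Sigma(\xbs)^{1/2}\bigr)^\top,
\]
and use the standard identity $\rank(MM^\top)=\rank(M)$ to get $\dim\CW = \rank\bigl(\Amat\,\Sigma(\xbs)^{1/2}\bigr)$. Since $\Sigma(\xbs)^{1/2}$ is symmetric PSD with the same range as $\Sigma(\xbs)$, namely $\CV$, and since by~\ref{ass:V} the restriction $\Amat|_{\CV}$ is injective (its kernel is $\Null(\Amat)\cap\CV=\{0\}$), the composition $\Amat\,\Sigma(\xbs)^{1/2}$ has rank equal to $\dim\CV = h$.

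Finally, combining $\CW \subseteq \Amat\CV$ with $\dim\CW = h = \dim(\Amat\CV)$ (the latter because $\Amat|_{\CV}$ is a linear isomorphism onto its image) forces equality $\CW = \Amat\CV$. The only delicate point is handling the non-injectivity of $\Amat$ on the whole of $\bbR^d$: the argument must invoke \ref{ass:V} precisely to restrict attention to $\CV$, where $\Amat$ is injective. Everything else is bookkeeping with ranges and ranks of PSD operators.
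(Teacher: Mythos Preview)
Your proof is correct and follows essentially the same approach as the paper: both start from the factorization $\Sigma(\Amat\xbs)=\Amat\,\Sigma(\xbs)\,\Amat^\top$ and exploit that $\Amat$ is injective on $\CV$ to match ranks. The only cosmetic difference is that the paper inserts the orthogonal projection $\Pmat$ onto $\CV$ (writing $\Amat\Pmat\,\Sigma(\xbs)\,(\Amat\Pmat)^\top$) and computes the range directly, whereas you use the square root $\Sigma(\xbs)^{1/2}$ and argue via inclusion plus a dimension count; both devices serve the identical purpose of isolating $\CV$ before applying $\Amat$.
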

\begin{proof}
A direct computation gives
  \[\Sigma({\Amat\xbs})=\bbE[\Amat\xbs\otimes\Amat\xbs]=\Amat\Sigma({\xbs})\Amat^\top = \Amat\Pmat
          \Sigma({\xbs}) (\Amat\Pmat)^\top,\]
where $\Pmat$ denotes the orthogonal projection onto $\CV$.  
Assumption \ref{ass:V} says that $\Amat$ is injective on $\CV$, and thus $\Sigma({\Amat\xbs})$ and $\Sigma({\xbs})$ have the same rank. 
Furthermore $(\Amat\Pmat)^\top$ maps $\bbR^d$ onto $\CV$, so that
\[ \range (\Sigma({\Amat\xbs})) = (\Amat \Pmat \Sigma({\xbs})) \CV=\Amat\Pmat \CV=\CW,\]
where $\Sigma({\xbs})\CV=\CV$, since $\Sigma({\xbs})$ is symmetric.
\end{proof}
\subsection{Empirical estimators} 
Lemma \ref{lem:AV} suggests that $\CV$ could be directly recovered if $\Amat$ were invertible and $\CW$ were known.
In most practical situations though, neither of those assumptions is satisfied: we only have access to noisy observations and $\Amat$ could not only be non-invertible, but also non-injective. 
We will address this issue by recasting the problem to a statistical learning framework, similar to \cite{DFN16}.
Namely, suppose we are given observation samples $\ybs_1,\ldots,\ybs_N$ such that $\ybs_i=\Amat\xbs_i+\sigma\wbs_i$ for $i=1,\ldots,N$, where
$\xbs_i,\wbs_i$ and $\sigma$ are unknown, and
let \[\widehat\Sigma(\ybs)= \frac{1}{N}\sum_{i=1}^N
\ybs_i\otimes\ybs_i\]
be the empirical covariance of the observations. 
Standard statistical learning theory suggests that $\widehat{\Sigma}\LRP{\ybs}$ is a good approximation to $\Sigma\LRP{\ybs}$ provided $N$ is large enough.
As a consequence, we will show that a vector space spanned by the first $h$ eigenvectors of $\widehat{\Sigma}\LRP{\ybs}$, denoted by $\widehat{\CW}$, is a good estimator of $\CW$. 

To justify the above claims, observe first that since $\Sigma(\wbs) = \Id$ holds by~\ref{ass:noise}, we have
\begin{equation}\label{eqn:SigmaAx}\Sigma\LRP{\ybs}  = \Sigma\LRP{\Amat\xbs} + \sigma^2\Id.\end{equation}
Therefore, $\Sigma\LRP{\ybs}$ and $\Sigma\LRP{\Amat\xbs}$ have the same eigenvectors and the spectrum of $\Sigma\LRP{\ybs}$ is just a
shift of the spectrum of $\Sigma\LRP{\Amat\xbs}$ by $\sigma^2$. 
Let $\lambda_1\geq\ldots\geq\lambda_h$ be the non-zero
eigenvalues of $\Sigma(\Amat\xbs)$, counting for multiplicity, and
$\alpha_1\geq\ldots\geq\alpha_m$ and
  $\widehat\alpha_1\geq\ldots\geq\widehat\alpha_m$ be the eigenvalues
of $\Sigma(\ybs)$ and $\widehat\Sigma(\ybs)$, respectively. 
From \eqref{eqn:SigmaAx} it follows
\begin{equation}
  \label{eq:1}
  \begin{cases}
    \alpha_i=\lambda_i+\sigma^2, & \text{for }  i=1,\ldots, h,\\
\alpha_i=\sigma^2,  &  \text{for }i=h+1,\ldots,m
  \end{cases}.
\end{equation}
Let $\Pi$ be the (orthogonal) projection onto $\CW$, which has rank
$h$ due to Lemma~\ref{lem:AV}, and let $\widehat{\Pi}$ be
the (orthogonal) projection onto $\widehat{\CW}$. 
We now show the fundamental tool of our study: that $\widehat{\Pi}$ is an accurate and an unbiased approximation of $\Pi$.
We distinguish between bounded and unbounded $\ybs$ and improve upon results in \cite{DFN16}.

\begin{lemma}\label{lem:proj_error}
Assume that $\sigma^2<\lambda_h$. Given $u>0$, with probability greater than $1-2\exp(-u)$
  \begin{equation}\label{eqn:unbounded}
\N{\widehat{\Pi}-\Pi}_2 \lesssim \frac{\lambda_1}{\lambda_h}
\LRP{\sqrt{\frac{{h+\sigma^2m +u}}{N}} +
  \frac{{h+\sigma^2m +u}}{N} },
\end{equation}
provided $N\gtrsim \LRP{h+\sigma^2m+u}$.  Furthermore, if $\ybs$ is
bounded, then with probability greater than $1 - \exp(-u)$
\begin{equation}\label{eqn:bounded}
\N{\widehat{\Pi}-\Pi}_2 \lesssim \frac{\lambda_1}{\lambda_h}
\LRP{\sqrt{\frac{{\log(h+m)+u }}{N}} +  \frac{\log(h+m)+u  }{N} },
\end{equation}
provided $N\gtrsim (\log(2m)+u)$.
\end{lemma}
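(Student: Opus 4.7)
The plan is to split the proof into two pieces: reduce $\N{\widehat\Pi - \Pi}_2$ to a deviation of the empirical covariance $\widehat\Sigma(\ybs)$ from $\Sigma(\ybs)$ via a Davis--Kahan perturbation bound, and then control that deviation in each of the two noise regimes with an appropriate matrix-concentration inequality.

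For the reduction step, observe that \eqref{eq:1} together with the hypothesis $\sigma^2<\lambda_h$ implies that $\Sigma(\ybs)$ has a spectral gap of size exactly $\lambda_h$ separating its top $h$ eigenvalues (whose eigenspace is $\CW$, by Lemma~\ref{lem:AV}) from the remaining $m-h$ eigenvalues (all equal to $\sigma^2$). Since $\widehat\Pi$ is the orthogonal projection onto the top-$h$ eigenspace of $\widehat\Sigma(\ybs)$, the Davis--Kahan $\sin\Theta$ theorem combined with Weyl's inequality gives, on the event $\{\N{\widehat\Sigma(\ybs)-\Sigma(\ybs)}_2\leq\lambda_h/2\}$,
\[
\N{\widehat\Pi - \Pi}_2 \;\lesssim\; \frac{\N{\widehat\Sigma(\ybs) - \Sigma(\ybs)}_2}{\lambda_h}.
\]
From here it suffices to bound $\N{\widehat\Sigma(\ybs)-\Sigma(\ybs)}_2$ with the stated probability and to check that the sample-size hypotheses keep us in this regime.

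For the unbounded case~\eqref{eqn:unbounded}, I invoke a sub-Gaussian sample-covariance deviation inequality of Koltchinskii--Lounici / Vershynin type: with probability at least $1-2e^{-u}$,
\[
\N{\widehat\Sigma(\ybs)-\Sigma(\ybs)}_2 \;\lesssim\; \N{\Sigma(\ybs)}_2\LRP{\sqrt{\tfrac{r+u}{N}} + \tfrac{r+u}{N}},
\]
where $r$ is the effective rank of $\Sigma(\ybs)$ and the $\psi_2$ constants are absolute thanks to \ref{ass:x}--\ref{ass:noise}. Using~\eqref{eq:1}, $\N{\Sigma(\ybs)}_2 = \lambda_1+\sigma^2\leq 2\lambda_1$ and $\tr(\Sigma(\ybs))=\sum_i\lambda_i+\sigma^2 m\leq h\lambda_1+\sigma^2 m$, so $\N{\Sigma(\ybs)}_2\cdot r\leq h\lambda_1+\sigma^2 m$; substituting into the Davis--Kahan bound and rearranging (using the normalization of $\xbs$ from~\ref{ass:x} to fold the extra $\lambda_1$ into the prefactor) yields~\eqref{eqn:unbounded}, and the sample-size condition $N\gtrsim h+\sigma^2 m+u$ is exactly what is required to ensure $\N{\widehat\Sigma-\Sigma}_2\leq\lambda_h/2$ with the required probability.

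For the bounded case~\eqref{eqn:bounded}, the summands $\ybs_i\otimes\ybs_i-\Sigma(\ybs)$ are bounded in operator norm, so matrix Bernstein applies directly. To replace the naive ambient-dimension factor $m$ by the smaller $\log(h+m)$ of the statement, I use Tropp's \emph{intrinsic-dimension} variant of matrix Bernstein, whose dimension parameter is (essentially) the effective rank of $\Sigma(\ybs)$, itself of order $h+m$ under our hypotheses; dividing by $\lambda_h$ as before delivers~\eqref{eqn:bounded}. The main difficulty in both cases is to reconcile the generic ``effective dimension'' produced by the off-the-shelf inequality with the clean quantities $h+\sigma^2 m$ and $\log(h+m)$ in the lemma; this relies on the splitting $\Sigma(\ybs)=\Sigma(\Amat\xbs)+\sigma^2\Id$, which makes the rank-$h$ signal part and the rank-$m$, scale-$\sigma^2$ noise part each contribute according to their own natural scale. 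The rest is bookkeeping: tracking absolute constants in the $\psi_2$ norms of $\xbs$ and $\wbs$, and verifying the Davis--Kahan threshold against the stated sample-size conditions.
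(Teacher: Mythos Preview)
Your outline is correct and matches the paper's strategy: Davis--Kahan reduces the projection error to a covariance deviation, and then sub-Gaussian covariance concentration (with effective rank) handles the unbounded case exactly as you describe. The unbounded part is essentially identical to the paper's argument.

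The bounded case is where you and the paper diverge. The paper does \emph{not} bound the full $m\times m$ deviation $\N{\widehat\Sigma-\Sigma}_2$ and then appeal to intrinsic-dimension Bernstein. Instead, it retains the projection $\Pi$ through the perturbation step, obtaining the sharper inequality
\[
\N{\widehat\Pi-\Pi}_2 \;\lesssim\; \frac{1}{\lambda_h}\,\N{\widehat\Sigma(\ybs)\Pi-\Sigma(\ybs)\Pi}_2,
\]
and then applies the \emph{standard} rectangular matrix Bernstein inequality to the $m\times h$ summands $\ybs_i\ybs_i^\top\Pi-\Sigma(\ybs)\Pi$. The dimensional factor $(m+h)$ in the statement then falls out directly from the rectangular Bernstein bound, with no need for the intrinsic-dimension refinement. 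Your route via intrinsic-dimension Bernstein is plausible but your justification is loose: the relevant intrinsic dimension is that of the \emph{variance} matrix $\bbE[(\ybs\otimes\ybs-\Sigma)^2]$, not of $\Sigma(\ybs)$ itself, and your claim that this is ``of order $h+m$'' is not argued. In fact the detour is unnecessary anyway, since $h\leq m$ forces $\log(h+m)\asymp\log(2m)$, so even naive matrix Bernstein on the full $m\times m$ matrices would suffice. The paper's projection trick is the cleaner way to see where $m+h$ comes from.
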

\begin{proof}
We will first show \eqref{eqn:unbounded}.
Using Theorem 9.2.4 and Exercise 9.2.5 in \cite{V18}, we have
\begin{equation}\label{eqn:low_rank_cov_estimation}
  \N{\Sigma(\ybs)-\widehat{\Sigma}(\ybs)}_2 \lesssim
  \N{\Sigma(\ybs)}_2\LRP{\sqrt{\frac{{r+u}}{N}} +
    \frac{{r+u}}{N}},\end{equation}
with probability greater than $1-2\exp(-u)$,  where
$r=\trace\LRP{\Sigma(\ybs)}/\N{\Sigma(\ybs)}_2$ is the \emph{stable rank} of $\Sigma(\ybs)$. 
Using \(\trace\LRP{\Sigma(\ybs)}\leq \lambda_1 h + m\sigma^2\)
and  $\N{\Sigma(\ybs)}_2=\lambda_1+\sigma^2\leq 2\lambda_1$, we get
  \begin{equation}\label{eqn:low_rank_writtenout}
\N{\Sigma(\ybs)-\widehat{\Sigma}(\ybs)}_2 \lesssim 
\lambda_1
\LRP{\sqrt{\frac{{h+\sigma^2m+u} }{N}} + \frac{{h+\sigma^2m+u }}{N}}.
\end{equation}

Let $\alpha^*=\alpha_h>\sigma^2$. By~\eqref{eq:1} it follows that $\Pi$ is the
projection onto the linear span of those eigenvectors of $\Sigma(\ybs)$
whose corresponding eigenvalue is greater than or equal to $\alpha^*$.  Using
\(
\alpha_{h}-\alpha_{h+1} = \lambda_h
\), by \eqref{eqn:low_rank_writtenout} we have
\begin{equation}\label{eqn:N_requirement} \epsilon:=\N{\Sigma(\ybs)-\widehat\Sigma(\ybs)}_2 < 
  \frac{\alpha_h -\alpha_{h+1}}{2}=\frac{\lambda_h}{2},\end{equation}
provided $N\gtrsim \LRP{h+\sigma^2m +u}$.
Let now $\Pi_{\alpha^*}$ be the projection onto the linear span of those eigenvectors of $\widehat\Sigma(\ybs)$
whose corresponding eigenvalue is greater than or equal to $\alpha^*$. 
As a consequence of Theorem 7.3.1 in \cite{bhatia97}, there exists an
eigenvalue $\widehat{\alpha}^*$ of $\widehat\Sigma(\ybs)$ such that
\begin{alignat}{1}
 |\alpha^*-\widehat{\alpha}^*|&\leq \epsilon, \text{ and } \operatorname{dim} \Pi_{\alpha^*} = \operatorname{dim}\Pi \label{eq:6c} \\
 \widehat{\alpha}_j &\leq \alpha_{h+1} + \epsilon =\sigma^2 +\epsilon, \quad \forall
 \widehat{\alpha}_j<\widehat{\alpha}^* \label{eq:6b} \\
\N{\Pi_{\alpha^*}-\Pi}_2 & \leq \frac{1}{\lambda_h-\epsilon}
\N{(\Id-\Pi_{\alpha^*})(\widehat\Sigma(\ybs)-\Sigma(\ybs))\Pi}_2 \label{eq:6d}.
\end{alignat}
By~\eqref{eq:6c} it follows that
$\widehat{\alpha}^*=\widehat{\alpha}_h$
so that $\Pi_{\alpha^*}=\widehat{\Pi}$ and hence
\begin{equation}
  \label{eq:7}
  \N{\widehat{\Pi}-\Pi}_2  \leq \frac{1}{\lambda_h-\epsilon}
\N{\widehat\Sigma(\ybs)\Pi-\Sigma(\ybs)\Pi}_2 \leq \frac{2}{\lambda_h} \N{\widehat\Sigma(\ybs)\Pi-\Sigma(\ybs)\Pi}_2 .
\end{equation}
Since $\N{\widehat\Sigma(\ybs)\Pi-\Sigma(\ybs)\Pi}_2\leq
\N{\widehat\Sigma(\ybs)-\Sigma(\ybs)}_2$, the claim follows by \eqref{eqn:low_rank_cov_estimation}.

Assume now that $\N{\ybs}_2 \leq \sqrt{L}$ holds almost surely and consider a family of independent $m\times
h$ matrices
\[
\Smat_i = {\ybs_i^\top \ybs_i \Pi - \Sigma(\ybs)\Pi},\qquad i=1,\ldots,N.
\]
Since $\frac{1}{N}\sum_{i=1}^N \Smat_i= \widehat\Sigma(\ybs)\Pi-\Sigma(\ybs)\Pi$ we can apply the matrix Bernstein inequality for rectangular matrices (Theorem 6.1.1. in \cite{Tro15}). Thus, for  $u>0$ we have
\[
\bbP\left(\N{\widehat\Sigma(\ybs)\Pi-\Sigma(\ybs)\Pi}_2\geq u
\right)\leq (m+h) \exp\LRP{\dfrac{-Ns^2 }{M+ 2L u/3}}
\]
where $M>0$ is a matrix variance constant independent of $m$, $h$, and $d$, such that
\begin{alignat*}{1}
  \max\left\{\bbE \N{\Smat_i^\top \Smat_i}_2,\bbE \N{\Smat_i\Smat_i^\top}_2\right\}  \leq M.
\end{alignat*}
A direction computation gives $M\leq L\N{\Sigma(\ybs)}_2$.
It follows that 
\[
\N{\widehat\Sigma(\ybs)\Pi-\Sigma(\ybs)\Pi}_2 \lesssim \lambda_1
\LRP{\sqrt{\frac{{\log(h+m)+u }}{N}} +  \frac{\log(h+m)+u  }{N} },
\]
holds with probability greater than $1-\exp(-u)$ for every $u>0$.
Moreover, by analogous argumentation \eqref{eqn:N_requirement} holds provided $N\gtrsim (\log(2m) + u)$, see \ref{sec:bounded_y_covs} for details. 
Thus, \eqref{eqn:bounded} follows by applying \eqref{eq:7}. 
\end{proof}
The previous result comes with a certain caveat.  
Namely, the proof implicitly assumes that either $h$ or the spectral gap are known (which informs the choice of the approximate projector $\widehat{\Pi}$).  
In practice however, the desired eigenspace can only be detected if there is a spectral gap and if it corresponds to the eigenspace we want to recover,  {\em i.e.},   if $ \lambda_h>\delta$, where
\begin{equation}\label{eqn:SPGAP}
\delta =\max_{i=1,\ldots,h-1}\LRP{\lambda_i-\lambda_{i+1}}=\max_{i=1,\ldots,h-1}\LRP{\alpha_i-\alpha_{i+1}}.
\end{equation}
\begin{proposition}\label{prop:sp_gap}
Assume \eqref{eqn:SPGAP} holds. Then the empirical covariance matrix has a spectral gap at the $h\textrm{-th}$ eigenvalue, with probability greater than $1-2\exp(-u)$, provided $\delta<\lambda_h$ and $N\gtrsim \frac{\lambda_1^2}{\LRP{\lambda_h-\delta }^2}(h+u)$.
\end{proposition}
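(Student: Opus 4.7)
The plan is to combine Weyl's eigenvalue perturbation inequality with the empirical covariance concentration bound from Lemma~\ref{lem:proj_error} to show that, with high probability, the gap $\widehat{\alpha}_h - \widehat{\alpha}_{h+1}$ strictly exceeds every other consecutive gap $\widehat{\alpha}_i - \widehat{\alpha}_{i+1}$, so the $h$-th eigenvalue is detectable from the empirical spectrum alone.

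First, I would set $\epsilon := \N{\widehat{\Sigma}(\ybs)-\Sigma(\ybs)}_2$ and invoke Weyl's inequality to deduce $|\widehat{\alpha}_i - \alpha_i| \leq \epsilon$ for every $i$. Combined with the block structure of the population spectrum in \eqref{eq:1}, this produces the lower bound
\[
\widehat{\alpha}_h - \widehat{\alpha}_{h+1} \;\geq\; (\alpha_h - \epsilon) - (\alpha_{h+1} + \epsilon) \;=\; \lambda_h - 2\epsilon,
\]
and the matching upper bound $\widehat{\alpha}_i - \widehat{\alpha}_{i+1} \leq (\alpha_i - \alpha_{i+1}) + 2\epsilon$ at every other index. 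For $i<h$ the right-hand side is at most $\delta + 2\epsilon$ by the definition of $\delta$ in \eqref{eqn:SPGAP}, and for $i>h$ it collapses to $2\epsilon$ since $\alpha_i = \alpha_{i+1} = \sigma^2$.

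Second, I would demand the $h$-th gap to strictly dominate both competitors, which is equivalent to $\lambda_h - 2\epsilon > \delta + 2\epsilon$, i.e.\ $\epsilon < (\lambda_h-\delta)/4$. Note that the assumption $\delta < \lambda_h$ is precisely what makes this event nonempty.

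Third, I would translate the condition $\epsilon < (\lambda_h-\delta)/4$ into a sample-size requirement using the concentration bound \eqref{eqn:low_rank_writtenout} (already established in the proof of Lemma~\ref{lem:proj_error}), which gives $\epsilon \lesssim \lambda_1\bigl(\sqrt{(h+u)/N} + (h+u)/N\bigr)$ with probability at least $1-2\exp(-u)$, once $N \gtrsim h + u$ so that the quadratic term is dominated by the square-root term. Solving $\lambda_1\sqrt{(h+u)/N} \lesssim \lambda_h - \delta$ for $N$ yields the stated threshold $N \gtrsim \lambda_1^2 (h+u)/(\lambda_h-\delta)^2$.

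The only real obstacle is the bookkeeping in the last step: one must verify that, under the stated lower bound on $N$, the lower-order $(h+u)/N$ term in \eqref{eqn:low_rank_writtenout} is indeed subdominant to the square-root term, and that the $\sigma^2 m$ contribution from the stable-rank calculation can be absorbed (say, into $h$ via the stable-rank normalization) so that the concentration collapses to the cleaner $h+u$ form appearing in the proposition. Everything else is a routine application of Weyl plus a union bound over the two gap comparisons.
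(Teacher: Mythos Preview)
Your proposal is correct and follows essentially the same route as the paper: Weyl's inequality to control $|\widehat\alpha_i-\alpha_i|\le\epsilon$, the resulting two-sided gap comparison yielding the threshold $\epsilon<(\lambda_h-\delta)/4$, and then the covariance concentration bound to convert this into the sample-size requirement. Your concern about the $\sigma^2 m$ term is legitimate but the paper treats it just as casually---it simply invokes \eqref{eqn:low_rank_cov_estimation} with the stable rank and writes the final bound in terms of $h+u$, tacitly absorbing (or ignoring) the noise contribution.
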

\begin{proof}
Assume $\N{\Sigma(\ybs)-\widehat\Sigma(\ybs)}_2 < \epsilon$ holds for $\epsilon >0$.
Since $\sup_{i=1,\ldots, m}\SN{\widehat{\alpha}_j-{\alpha}_{j}} \leq \N{\Sigma(\ybs)-\widehat\Sigma(\ybs)}$ we get
\[ \SN{\widehat{\alpha}_j-\widehat{\alpha}_{j+1}} \leq 2\epsilon + \SN{{\alpha}_j-{\alpha}_{j+1}},\] 
by adding and subtracting $\alpha_j$ and $\alpha_{j+1}$ inside the first term.
Thus, if $j>h$ then $\SN{\widehat{\alpha}_j-\widehat{\alpha}_{j+1}} \leq 2\epsilon$, and if $j<h$ then $\SN{\widehat{\alpha}_j-\widehat{\alpha}_{j+1}} \leq 2\epsilon + \delta $.
For $j=h$ on the other hand we have $\SN{\widehat{\alpha}_h-\widehat{\alpha}_{h+1}} > \SN{{\alpha}_h-{\alpha}_{h+1}} - 2\epsilon$.
In conclusion, 
\[\argmax_{i=1,\ldots,m-1} \LRP{\widehat \alpha_i - \widehat \alpha_{i-1}}
= h\]
holds provided
provided $\epsilon<\frac{\lambda_h-\delta }{4}$.
Using \eqref{eqn:low_rank_cov_estimation} the claim follows.
\end{proof}
It is clear that if $\delta>\lambda_h$ the  spectral gap of the empirical covariance matrix
is at \(\argmax_{i=1,\ldots,h-1}\LRP{\lambda_i-\lambda_{i+1}}\), which is smaller than $h$.  
In practice though, the situation is not as pessimistic as this observation would suggest and we can rely on a wealth of \emph{ad hoc} remedies.
We devote more attention to this question in Section \ref{sec:estimating_sparsity}, and suggest alternative heuristics for estimating the intrinsic dimension $h$.

We are ready to define our {empirical estimator} of $\xbs$. 
Let $\Qmat = \Amat\Amat^{\dagger}$ be the orthogonal projection onto $\Range(\Amat)=\Null^\perp(\Amat^\top)$, and $\Pmat=\Amat^{\dagger}\Amat$ the orthogonal projection onto $\Range(\Amat^\top)=\Null^\perp(\Amat)$.
The \emph{{empirical estimator}} of $\xbs$ is defined as
\begin{equation}\label{eqn:xhat}
  \widehat{\xbs}=\Amat^\dagger\widehat{\Pi}\ybs.  
\end{equation}
For \( \widehat{\boldsymbol{\Veta}} = \ybs - \widehat{\Pi}\ybs\) the {empirical estimator} $\widehat{\xbs}$  satisfies the (empirical) inverse problem
\begin{equation}
\label{eqn:empiricalprob}\Amat\widehat{\xbs} +
    \Qmat\widehat{\Veta} = \Qmat\ybs.
\end{equation}

A direct consequence of \eqref{eqn:empiricalprob} is that minimizers of the empirical and of the original problem coincide.
\begin{lemma}
  Let
  \(\widehat{\zbs}^\lambda{(\ybs)}=\argmin_{\zbs} \N{\Amat\zbs-\Qmat\ybs}^2 + \lambda\, J(\zbs).\) 
  Then $\widehat{\zbs}^\lambda{(\ybs)}=\zbs^\lambda(\ybs)$.
\end{lemma}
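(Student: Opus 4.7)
The plan is to exploit the fact that $\Qmat=\Amat\Amat^{\dagger}$ is the orthogonal projector onto $\Range(\Amat)$, so that the two data-fidelity terms differ only by a constant that does not depend on $\zbs$.

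First I would decompose the observation $\ybs$ with respect to the orthogonal splitting $\bbR^m = \Range(\Amat) \oplus \Range(\Amat)^{\perp}$, writing
\[
\ybs = \Qmat\ybs + (\Id-\Qmat)\ybs,
\]
where $\Qmat\ybs\in\Range(\Amat)$ and $(\Id-\Qmat)\ybs\in\Range(\Amat)^{\perp}=\Null(\Amat^\top)$. For any candidate $\zbs\in\bbR^d$, the vector $\Amat\zbs$ lies in $\Range(\Amat)$, hence so does the difference $\Amat\zbs-\Qmat\ybs$. Thus $\Amat\zbs-\Qmat\ybs$ and $(\Id-\Qmat)\ybs$ are orthogonal, and Pythagoras gives
\[
\N{\Amat\zbs-\ybs}_2^2 = \N{\Amat\zbs-\Qmat\ybs}_2^2 + \N{(\Id-\Qmat)\ybs}_2^2 .
\]

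The second term on the right is independent of $\zbs$. Adding $\lambda J(\zbs)$ to both sides therefore yields two objective functions that differ by an additive constant, so they attain their minima at exactly the same points. In particular the argmins coincide, which is the claim.

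There is essentially no obstacle here: the whole argument reduces to the orthogonality of $\Range(\Amat)$ and $\Null(\Amat^\top)$ together with the definition of $\Qmat$. No assumption on $J$ (convexity, coercivity, uniqueness of the minimiser) is needed, since the identity of the argmin sets follows purely from the fact that the two functionals are equal up to a constant independent of $\zbs$.
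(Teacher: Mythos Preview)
Your proof is correct and follows essentially the same approach as the paper: both use that $\Qmat$ is the orthogonal projection onto $\Range(\Amat)$, apply Pythagoras to split $\N{\Amat\zbs-\ybs}_2^2$ into $\N{\Amat\zbs-\Qmat\ybs}_2^2$ plus a $\zbs$-independent term, and conclude that the argmins coincide.
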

\begin{proof}
  We  compute
\(  \N{\Amat \zbs - \ybs}_2^2  =\N{\Amat\zbs -\Qmat\ybs + (\Qmat-\Id)\ybs}_2^2.\)
  Since $\Qmat$ is an orthogonal projection onto $\Range(\Amat)$ it follows $(\Qmat-\Id)\ybs\in\Range^\perp(\Amat)$. Using Pythagoras' theorem we thus have
\( \N{\Amat \zbs - \ybs}_2^2 = \N{\Amat\zbs -\Qmat\ybs}_2^2 + \N{(\Qmat-\Id)\ybs}_2^2.\)
Since the second term does not depend on $\zbs$ we get
\[ \argmin_\zbs\N{\Amat\zbs-\ybs}^2_2 +\lambda J(\zbs) = \argmin_\zbs \N{\Amat\zbs-\Qmat\ybs}^2_2 +\lambda\,J(\zbs).\]
\end{proof}

The definition of $\widehat{\xbs}$ is independent of the choice of the optimization scheme.
In the following, we use $\widehat{\xbs}$ to learn a nearly optimal regularization parameter for elastic net minimization.
Before doing so, we note that one might want to consider $\widehat{\xbs}$ as an approximate solution by itself, and completely avoid regularization and thus the issue of parameter choice. 
Experimental evidence in Section \ref{sec:experiments} shows that when $\Amat$ is not injective, the training set size $N$ is small, or when the noise level is small, $\|  \widehat\xbs - \xbs\|$ is larger than $\|  \zbs^\lambda - \xbs\|,$ some of which has also been observed in \cite{DFN16}. 
In addition, $\widehat\xbs$ does not preserve the structure of the original signal, \emph{e.g.}, $\widehat\xbs$ will in general not be sparse for a sparse $\xbs$, and regularization is needed. 
Lastly, we remind that we are interested in using an estimator $\widehat{\xbs}$ for which $\SN{\lambda_{\text{opt}} -\widehat{\lambda}_{\text{opt}}}$ is small with high probability (\emph{i.e.} the one that can be used to derive an accurate parameter selection) and we are not interested in directly controlling $\N{\xbs- \widehat{\xbs}}$, which is the {goal in manifold learning} \cite{bel06}.

\subsection{Elastic net minimization}\label{sec:enets}

From now on we focus on the parameter choice for the elastic net, where  $J(\zbs) = \N{\zbs}_1 + \alpha\N{\zbs}_2^2$, 
so that 
\begin{equation}\label{eq:enet_reg}
  \zbs^{\lambda}(\ybs) = \argmin_{\zbs\in\bbR^m} \N{\Amat\zbs-\ybs}_2^2 + \lambda \LRP{\N{\zbs}_1 + \alpha\N{\zbs}_2^2}.
\end{equation} 
The term $\N{\zbs}_1$, enforces the sparsity of the solution, whereas $\N{\zbs}_2^2$ enforces smoothness and ensures that in case of highly correlated features we  can correctly retrieve all the relevant ones.
We first recall some  basic facts about existence, uniqueness and sensitivity of elastic net solutions with respect to regularization parameters \cite{JLS09}.
\begin{lemma}\label{lem:solutions}
The elastic net functional is strictly convex and coercive. 
Moreover, for each $\lambda>0$ the minimizer of \eqref{eqn:regularizer} exists, is unique and the mapping $\lambda\mapsto\zbs^\lambda$ is continuous with respect to $\lambda>0$. 
\end{lemma}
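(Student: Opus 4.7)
The plan is to decompose the elastic net functional as $F_\lambda(\zbs)=\N{\Amat\zbs-\ybs}_2^2+\lambda\N{\zbs}_1+\lambda\alpha\N{\zbs}_2^2$ and verify the claims one at a time using standard convex analysis, since each ingredient is a textbook fact.

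First I would address strict convexity. The quadratic $\N{\Amat\zbs-\ybs}_2^2$ is convex as the composition of a convex quadratic with an affine map, $\N{\zbs}_1$ is convex as a norm, and $\lambda\alpha\N{\zbs}_2^2$ is strictly convex (since $\lambda>0$ and we work in the regime $\alpha>0$ characteristic of the elastic net). A sum of convex functions with at least one strictly convex summand is strictly convex. Coercivity is immediate from $\lambda\alpha\N{\zbs}_2^2\to\infty$ as $\N{\zbs}_2\to\infty$, the other two terms being nonnegative. Existence and uniqueness of the minimizer then follow from the classical fact that a continuous, strictly convex, coercive function on the finite-dimensional Euclidean space $\bbR^d$ attains its infimum at a unique point.

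For the continuity of $\lambda\mapsto\zbs^\lambda$ at an arbitrary $\bar\lambda>0$, I would take a sequence $\lambda_n\to\bar\lambda$ and work by the standard compactness/uniqueness argument. Comparing with the trivial vector gives $F_{\lambda_n}(\zbs^{\lambda_n})\leq F_{\lambda_n}(\boldsymbol{0})=\N{\ybs}_2^2$, and since $\lambda_n\alpha$ is eventually bounded below away from $0$, this yields a uniform bound on $\N{\zbs^{\lambda_n}}_2$. Passing to an arbitrary subsequence, one extracts a further convergent sub-subsequence $\zbs^{\lambda_{n_k}}\to\zbs^*$. Joint continuity of $(\lambda,\zbs)\mapsto F_\lambda(\zbs)$ together with the minimization inequality $F_{\lambda_{n_k}}(\zbs^{\lambda_{n_k}})\leq F_{\lambda_{n_k}}(\zbs^{\bar\lambda})$ gives, in the limit, $F_{\bar\lambda}(\zbs^*)\leq F_{\bar\lambda}(\zbs^{\bar\lambda})$, and uniqueness forces $\zbs^*=\zbs^{\bar\lambda}$. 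Since every subsequence has a sub-subsequence converging to the same limit, the whole sequence converges, proving continuity.

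Essentially no step is difficult: strict convexity and coercivity are one-line observations, existence and uniqueness a direct citation, and the continuity proof is the textbook subsequence argument. The only mildly delicate point is ensuring the uniform bound on the minimizers as $\lambda_n\to\bar\lambda$, which is why I would insist on comparing with $\boldsymbol 0$ rather than with $\zbs^{\bar\lambda}$ directly, so that the elastic net's quadratic penalty supplies the boundedness for free.
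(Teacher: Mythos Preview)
Your argument is correct and entirely standard. Note, however, that the paper does not actually prove this lemma: it is stated without proof and attributed to \cite{JLS09} as a known fact about elastic net solutions. Your write-up supplies precisely the routine convex-analysis details the paper chose to omit, so there is nothing to compare against beyond observing that your approach is the expected one.
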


In the remainder of this paper we will recast \eqref{eq:enet_reg} as
\begin{align}\label{eqn:enets_minimization}
  \zbs^{t}(\ybs) & = \argmin_{\zbs\in\bbR^m} t\N{\Amat\zbs-\ybs}_2^2 + (1-t)\LRP{\N{\zbs}_1 + \alpha\N{\zbs}_2^2},
\end{align}
where $t\in[0,1]$ and $\alpha>0$ is a fixed parameter. 
For $t\in(0,1)$ the solutions of \eqref{eqn:enets_minimization} correspond to solutions of \eqref{eq:enet_reg} for $\lambda=\frac{1-t}{t}.$
On the other hand, for $t=0$ we get $\zbs^0(\ybs)=\boldsymbol{0}$, and for $t=1$ we define $\zbs^1(\ybs):=\xbs^\alpha,$ where 
\begin{equation}\label{eqn:xalpha}
\xbs^\alpha=\argmin_{\zbs\in\CN} \LRP{\N{\zbs}_1 + \alpha\N{\zbs}_2^2},
\text{ for } \CN=\{\zbs\in\bbR^m\mid \Amat^\top \Amat\zbs=\Amat^\top\ybs\}.
\end{equation} 
This definition is driven by the following observations.
First, the set $\CN= \Amat^\dagger \ybs \oplus \ker\LRP{\Amat}$ is non-empty (since $\Amat$ has finite rank). 
Furthermore, it was shown in \cite{DDR09} and \cite{JLS09} that in case of elastic nets minimization
    \begin{equation}
\lim_{t\to 1} \zbs^t = {\xbs}^\alpha\label{eq:4}
\end{equation} 
In other words, $\xbs^\alpha$ plays the role of the Moore-Penrose solution in linear regularization schemes \cite{zbMATH00936298}.
By Lemma \ref{lem:solutions} the minimizer of
\eqref{eqn:enets_minimization} always exist and is unique, the map $t\mapsto \zbs^t$ is continuous for $t\in(0,1)$. 
Equation ~\eqref{eq:4} implies that $t\mapsto \zbs^t$ is continuous at $t=1$, and later in \eqref{eq:5} we show that the continuity also holds at $t=0$.
\paragraph{Solution via soft-thresholding. } 
The elastic net does not admit a closed form solution in case of a general forward matrix $\Amat$. 
In Zou and Hastie \cite{ZH05} the elastic net problem is recast as a Lasso problem with augmented data, which can then be solved by many different algorithms (\emph{e.g.} the LARS method \cite{efron2004least}).
Alternative algorithms compute the elastic net minimizer directly, and are generally either of the \emph{active set}  \cite{JLS09} or the \emph{iterative soft-thresholding}-type  \cite{DDR09}.
Here we adhere to iterative soft-thresholding, and rework the arguments in \cite{DDR09} to show that the solution to \eqref{eqn:enets_minimization} can be obtained through fixed point iterations for all $t\in[0,1]$.  
To begin, define the soft-thresholding function by
\begin{align}\label{eqn:Stw} \CS_\tau(\usf) &= \Sgn(\usf)\LRP{\SN{\usf}-\frac{\tau}{2}}_+
\end{align}
and the corresponding soft-thresholding operator
$\boldsymbol{\CS}_\tau(\boldsymbol{\ubs})$, acting component-wise on vectors $\ubs\in\bbR^m$.
The next lemma is a direct reworking of the arguments in \cite{DDR09} and states that \eqref{eqn:enets_minimization} is a fixed point  of a contractive map.
\begin{lemma}\label{lem:contraction}
The solution to \eqref{eqn:enets_minimization}, for
$\Amat\in\bbR^{m\times d}, \ybs\in\bbR^m$ and $t\in (0,1)$, satisfies $\zbs= \CT_t(\zbs)$, where the map $\CT_t:\bbR^d\rightarrow\bbR^d$  is a contraction and is defined by
\begin{equation}\label{eqn:TtDefn}\zbs^t =\CT_t(\zbs) = \frac{1}{\tau
    t +(1-t)\alpha}\Csv_{1-t}\LRP{t\LRP{\theta\Id-\Amat^\top\Amat}\zbs+t\Amat^\top\ybs},
\end{equation}
with the Lipschitz constant 
\[
\dfrac{t (\sigma^2_M- \sigma_m^2) }{t (\sigma^2_M+ \sigma_m^2)
  +2\alpha (1-t)}<1, 
\]
where $\theta= \frac{\sigma_m^2+\sigma_M^2}{2}$,  and $\sigma_m$ and
$\sigma_M$ are the smallest and the largest singular values of the matrix~$\Amat$, respectively.
\end{lemma}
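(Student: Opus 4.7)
The plan is to derive the fixed-point identity directly from the first-order (subdifferential) optimality condition for the elastic net, and then use the non-expansiveness of the soft-thresholding operator together with a single spectral norm computation to obtain the Lipschitz bound. Throughout, I read the $\tau$ in the statement as the constant $\theta=(\sigma_m^2+\sigma_M^2)/2$ defined immediately afterwards.

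First, since the objective in \eqref{eqn:enets_minimization} is strictly convex for $t\in(0,1)$ (Lemma \ref{lem:solutions}), its unique minimizer $\zbs$ is characterized by
\[ \boldsymbol{0}\in 2t\,\Amat^\top(\Amat\zbs-\ybs)+2(1-t)\alpha\,\zbs+(1-t)\,\partial\N{\zbs}_1.\]
The analytic tool I would exploit is the well-known fact that the componentwise soft-thresholding $\Csv_\tau$ is the proximal map of $(\tau/2)\N{\cdot}_1$, so
\(\zbs=\Csv_\tau(\vbs)\iff\vbs-\zbs\in (\tau/2)\,\partial\N{\zbs}_1.\)
By adding and subtracting $t\theta\zbs$ in the optimality inclusion and moving $(\theta t+(1-t)\alpha)\zbs$ to the left, I obtain
\[ t(\theta\Id-\Amat^\top\Amat)\zbs+t\Amat^\top\ybs-(\theta t+(1-t)\alpha)\zbs\in \tfrac{1-t}{2}\,\partial\N{\zbs}_1.\]
Using positive homogeneity of the $\ell_1$-subdifferential, namely $\partial\N{c\,\zbs}_1=\partial\N{\zbs}_1$ for any $c>0$, the right-hand side is identically $\tfrac{1-t}{2}\,\partial\N{(\theta t+(1-t)\alpha)\zbs}_1$, and the proximal characterization then yields exactly
\[ (\theta t+(1-t)\alpha)\zbs=\Csv_{1-t}\LRP{t(\theta\Id-\Amat^\top\Amat)\zbs+t\Amat^\top\ybs},\]
which is the claimed identity $\zbs=\CT_t(\zbs)$.

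Second, to compute the Lipschitz constant of $\CT_t$ I would use the 1-Lipschitz property of $\Csv_{1-t}$ (a standard property of proximal maps). This reduces the task to bounding the spectral norm of the linear map inside the argument:
\[ \N{\CT_t(\zbs_1)-\CT_t(\zbs_2)}_2\leq \frac{t}{\theta t+(1-t)\alpha}\,\N{\theta\Id-\Amat^\top\Amat}_2\,\N{\zbs_1-\zbs_2}_2.\]
Because $\Amat^\top\Amat$ is symmetric with spectrum contained in $[\sigma_m^2,\sigma_M^2]$ and $\theta$ is the midpoint of this interval, the eigenvalues of $\theta\Id-\Amat^\top\Amat$ lie in $[-(\sigma_M^2-\sigma_m^2)/2,\,(\sigma_M^2-\sigma_m^2)/2]$, giving $\N{\theta\Id-\Amat^\top\Amat}_2=(\sigma_M^2-\sigma_m^2)/2$. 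Substituting this and the definition of $\theta$ into the bound reproduces the stated Lipschitz ratio exactly; strictness below $1$ follows by observing that the denominator exceeds the numerator by $2t\sigma_m^2+2(1-t)\alpha>0$ for $t\in(0,1)$ and $\alpha>0$.

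The main obstacle I anticipate is not conceptual but bookkeeping: the rearrangement in the first step must be done carefully so that both the scaling of $\zbs$ by $\theta t+(1-t)\alpha$ and the invariance $\partial\N{c\,\zbs}_1=\partial\N{\zbs}_1$ are used in a coordinated way to turn a subgradient inclusion into a proximal identity. The role of $\theta=(\sigma_m^2+\sigma_M^2)/2$ becomes transparent only at the very end: it is the unique choice that symmetrises the spectrum of $\theta\Id-\Amat^\top\Amat$ and thereby minimises the spectral norm appearing in the Lipschitz estimate, yielding the sharpest contraction rate attainable by this gradient-plus-shrinkage scheme.
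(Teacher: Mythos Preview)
Your proof is correct. The paper does not supply its own proof of this lemma but merely states that it is a direct reworking of the arguments in \cite{DDR09}; your subdifferential derivation of the fixed-point identity (via the proximal characterisation of $\Csv_{1-t}$ and the invariance $\partial\N{c\zbs}_1=\partial\N{\zbs}_1$), followed by the non-expansiveness of soft-thresholding and the spectral-norm computation $\N{\theta\Id-\Amat^\top\Amat}_2=(\sigma_M^2-\sigma_m^2)/2$, is exactly that standard argument, and your reading of the typo $\tau\to\theta$ in the prefactor is correct.
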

For $t=0$, the solution is $\zbs^0=\boldsymbol{0}$, which is consistent with~\eqref{eqn:TtDefn}. 
Furthermore, by~\eqref{eqn:Stw} and~\eqref{eqn:TtDefn}, we get
\begin{equation}
  \label{eq:5}
  \zbs^t =0 \qquad \text{ if }\quad 0\leq  t\leq \frac{1}{1+2 \N{\Amat^\top\ybs}}.
\end{equation}
Our definition  of the solution at $t=1$ in \eqref{eqn:xalpha} also satisfies $\zbs= \CT_t(\zbs)$ since  $\Csv_{1-t}$ is identity and thus
$\zbs^{t=1}(\ybs)=\Amat^\dagger\ybs$, though  $\CT_1$ is not a contraction. 
In summary, the solutions are consistent with Lemma \ref{lem:solutions}, as expected.

\paragraph{Closed form solution. } 
In the case of orthogonal design \cite{ZH05}, \emph{i.e.} $\Amat^\top\Amat=\Id$, the solution of \eqref{eqn:regularizer} is given by
\begin{equation}\label{eqn:Naive} \zbs^\lambda = \frac{(\lvert\Amat^\top\ybs\lvert -\lambda/2)_+}{1+\alpha\lambda}\Sgn(\Amat^\top\ybs).\end{equation}
Plugging $\lambda=\frac{1-t}{t}$ into (\ref{eqn:Naive}) we have
\begin{equation}\label{eqn:naive} \zbs^{t}(\ybs) =
  \frac{(t(1+2\lvert\Amat^\top\ybs\lvert)-1)_+}{2(t(1-\alpha)+\alpha)}\Sgn\LRP{\Amat^\top\ybs}.
\end{equation}

\subsection{Quadratic loss functionals}
\label{subsec:qlf}

To select the regularization parameters we go back to the first principles and consider quadratic loss functionals. 

\begin{definition}\label{defn:loss}
  Functions $R,\widehat{R}\colon[0,1]\rightarrow\bbR$, defined by
  \begin{equation}\label{eqn:loss} R(t) = \N{\zbs^t(\ybs)-\xbs}_2^2, \quad \widehat{R}(t) = \N{\zbs^t(\ybs)-\widehat{\xbs}}_2^2,\end{equation}
  are called the \emph{true} and the \emph{empirical quadratic loss}, respectively.
  Furthermore, define
  \begin{equation}\label{eqn:topt} t_{\text{opt}} =  \argmin_{t\in[0,1]} R(t), \quad \widehat{t}_{\text{opt}}= \argmin_{t\in[0,1]} \widehat{R}(t).\end{equation}
\end{definition}

In view of Lemma \ref{lem:solutions} and the discussion in Section \ref{sec:enets}, the benefits of recasting $\lambda$ to $[0,1]$ are clear: $R$ and $\hat R$ are both continuous, defined on a bounded interval, and, hence, achieve a minimum. 
Thus, our aim is to minimize $\widehat{R}$ while ensuring $\SN{t_{\text{opt}}-\widehat{t}_{\text{opt}}}$ is small.

Let us discuss some difficulties associated with elastic net minimization which need to be addressed.
On one hand, d a closed form solution of \eqref{eqn:enets_minimization} is available only when  $\Amat^\top\Amat=\Id$ and is otherwise only approximated.
Furthermore, as we will see below, loss functionals $R$ and $\widehat{R}$ are  globally neither differentiable nor convex, but rather only piecewise.
These two issues suggest that their minimizers in general cannot be analyzed in full detail.
Therefore, in the following we split the analysis into a simplified case for $\Amat^\top\Amat = \Id$ where can provide guarantees, and the general case where we provide an efficient algorithm. 
Furthermore, we need to amend the empirical loss function $\widehat{R}$ in the case when $\Amat$ is non-injective.
This is due to the fact that in case of non-linear methods $R(t)$ cannot be reliably estimated outside the kernel of $\Amat$, see \cite{E09} and Figure \ref{fig:loss_functionals}. 
We follow the idea of SURE-based methods \cite{GEE11}, which provide an unbiased estimate of $R(t)$ by projecting the regularized solution onto $\Null^\perp(\Amat)$.
Namely, we define \emph{projected and modified loss functions} $\widehat{R}_{\FP},\widehat{R}_{\FM}\colon[0,1]\rightarrow\bbR$ by
\begin{equation}\label{eqn:loss_projeced} 
  \widehat{R}_{\FP}(t) =\N {\Pmat \zbs^t(\ybs)-\widehat{\xbs}}^2_2, \text{ and } \widehat{R}_{\FM}(t) =\N{\Amat\zbs^t(\ybs) - \widehat{\Pi}\ybs}_2^2, 
  \end{equation}
  where $\Pmat=\Amat^\dagger\Amat$ is the orthogonal projection onto $\Null^\perp(\Amat)$.
Define also 
\begin{equation}\label{eqn:topt_proj} 
   t_\FP= \argmin_{t\in[0,1]} \widehat{R}_\FP(t), \text{ and }  t_\FM= \argmin_{t\in[0,1]} \widehat{R}_\FM(t)
  \end{equation}
Note that to define $\widehat{R}_{\FM}(t)$ we used the fact that $\widehat{\xbs} = \Amat^\dagger \widehat{\Pi}\ybs$, and thus compared to $\widehat{R}_{\FP},$ we avoid the computation of the Moore-Penrose inverse $\Amat^\dagger$, which might be either costly to compute or indeed numerically unstable if $\Amat$ is poorly conditioned.
As we will show in Section \ref{sec:experiments}, and can see in the right-most panel in Figure \ref{fig:loss_functionals}, using $\widehat{R}_{\FP}$ and $\widehat{R}_{\FM}$ instead of $\widehat{R}$ when $\Amat$ is non-injective dramatically improves the performance.
Note that projecting onto $\Null^\perp(\Amat)$ affects makes the loss functional smoother (dampening the gradients).

\begin{figure}[ht!]
\centering
\begin{minipage}[b]{0.32\linewidth}
\centering
\includegraphics[width=\textwidth]{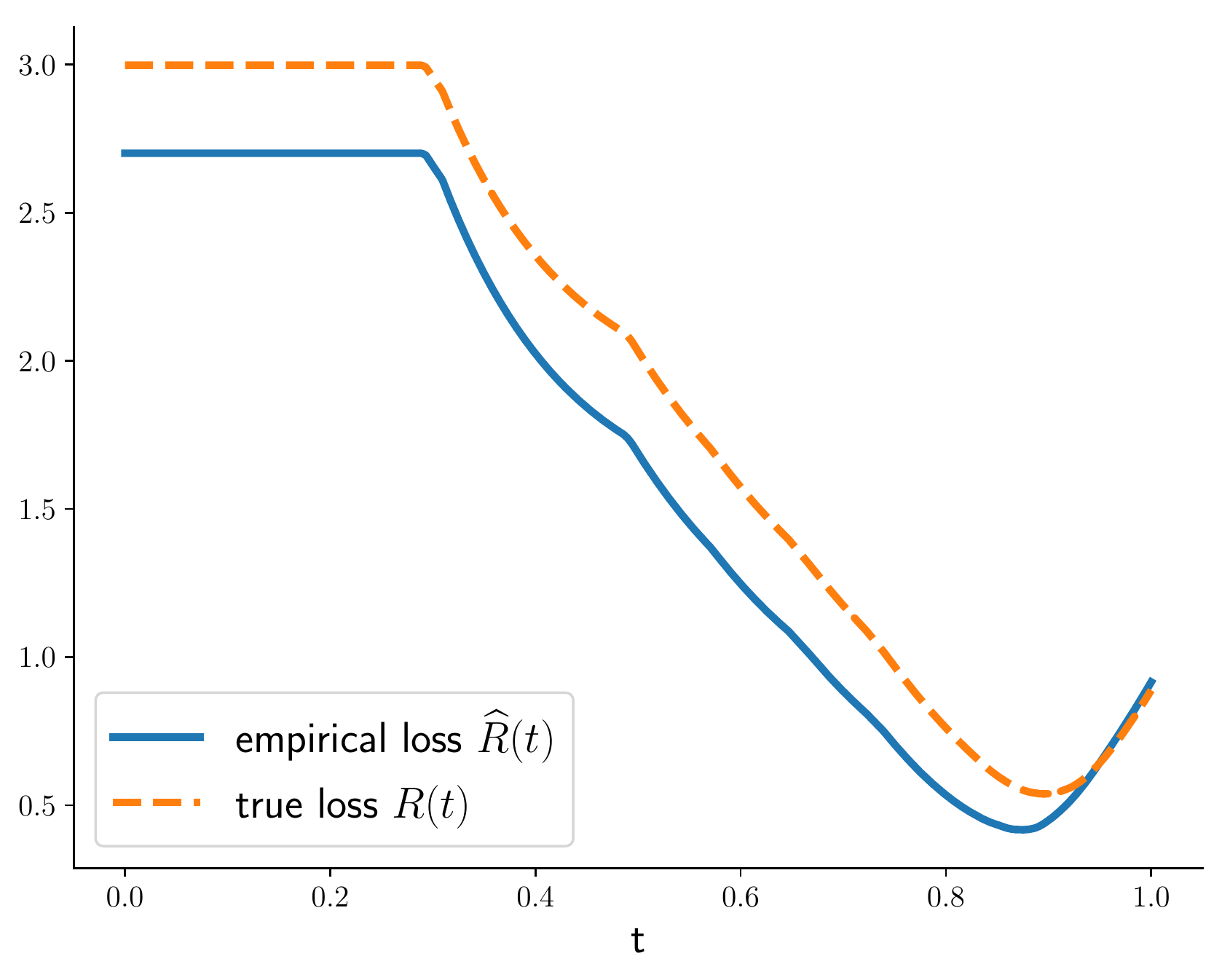}
\end{minipage}
\begin{minipage}[b]{0.32\linewidth}
\centering
\includegraphics[width=\textwidth]{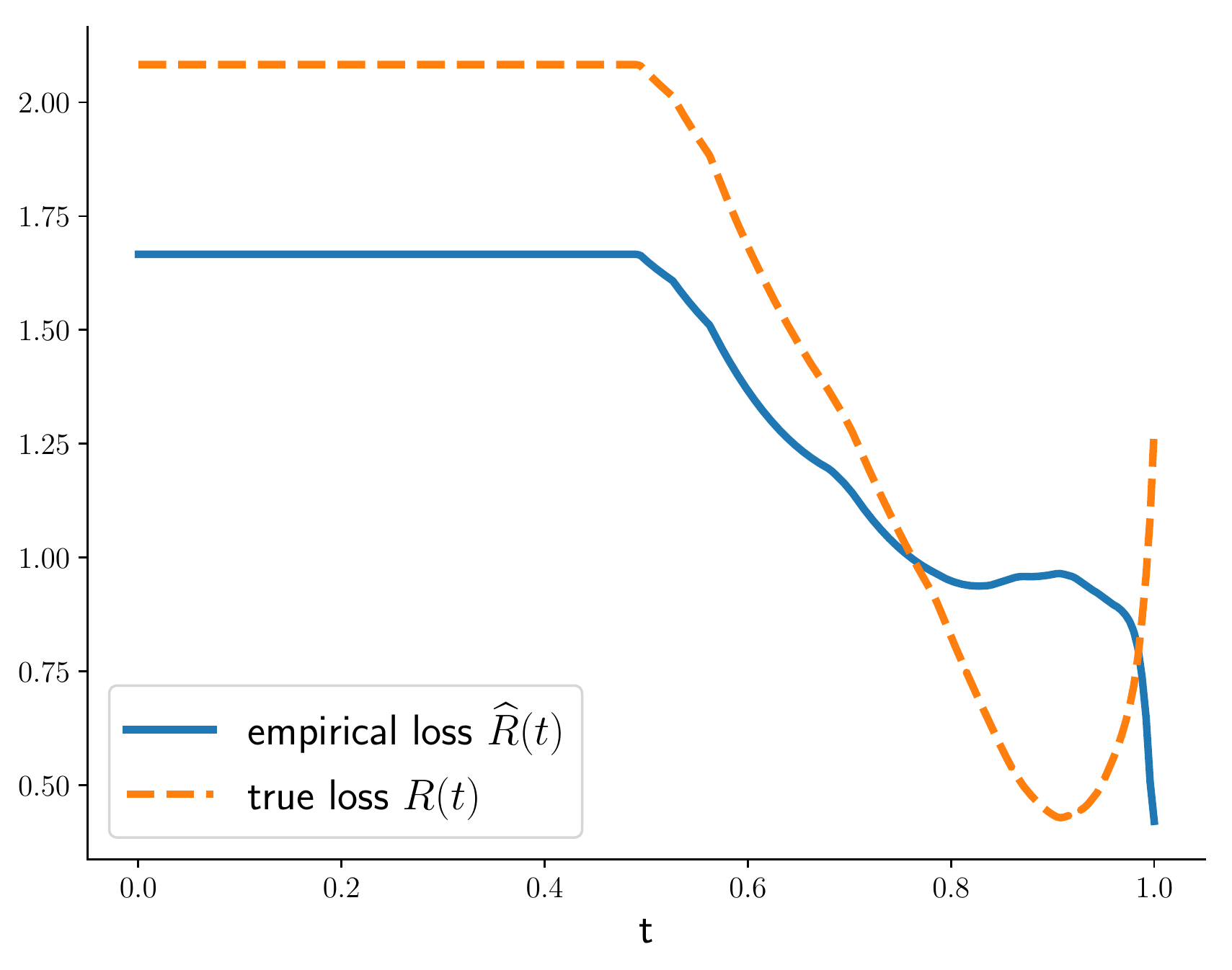}
\end{minipage}
\begin{minipage}[b]{0.32\linewidth}
\centering
\includegraphics[width=\textwidth]{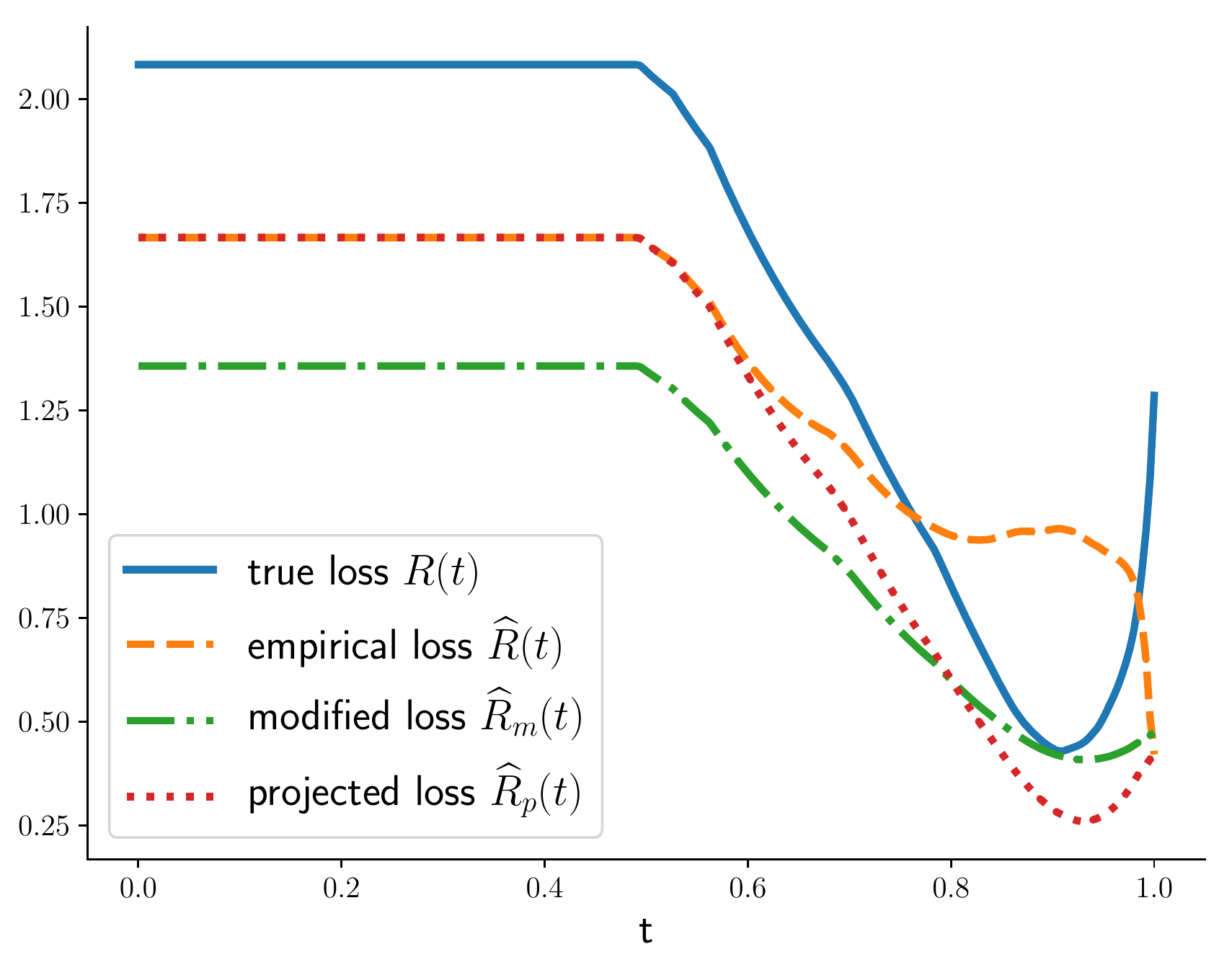}
\end{minipage}

\caption{Empirical and true losses for $m=500$, $d=60$, $h=5$, $N=50$, $\alpha=10^{-3}$, $\sigma =0.08$, and zero mean isotropic Gaussians $\xbs$ and $\wbs$. In the left panel $\Amat$ is injective and $\widehat{t}_{\textrm{opt}}$  is a good proxy for $t_{\textrm{opt}}$. In the middle panel $\rank(\Amat)=40$ and we see that $\widehat{t}_{\textrm{opt}}$ does not approximate $t_{\textrm{opt}}$ well. On the other hand, the right panel shows that in case of a non-injective matrix, $\widehat{R}_\FM$ and $\widehat{R}_\FP$ improve the performance}
\label{fig:loss_functionals}
\end{figure}

\section{Parameter error}\label{sec:error_analysis}

Since the elastic net solution is in general  not available in closed form, a rigorous study of the parameter error is unfeasible in full generality.
Therefore, we restrict our attention to simplified cases, though we emphasize that our approach in practice performs well on significantly broader model assumptions, which we will show in Section \ref{sec:experiments}.
In case of orthogonal design $\Amat^\top\Amat=\Id$ we can, without loss of generality,  assume $\Amat=\Id$ (otherwise redefine $\ybs$ as $\Amat^\top\ybs$).
Let now $\ybs = \xbs+\sigma\wbs$ and assume $\lvert\ysf_1\lvert\geq\ldots\geq\lvert\ysf_m\lvert$.
Plugging \eqref{eqn:naive} into \eqref{eqn:loss} we get
\[ R(t) = \sum_{i=1}^m  \left(\frac{(t(1+2\lvert\ysf_i\lvert)-1)_+}{2(t(1-\alpha)+\alpha)}\Sgn(\ysf_i) - \xsf_i\right)^2.\]
Define {$\edge_i= {{1+2\SN{\ysf_i}}}$, for $i =1,\ldots, m$}. 
Loss function $R(t)$ is continuous on $[0,1]$, and differentiable on intervals 
\begin{align*} 
\CI_0 &= \Big[0, \edge_1^{-1}\Big),\quad \CI_{m} = \Big(\edge_{m}^{-1}, 1\Big], \text{ and } \CI_k = \LRP{\edge_k^{-1}, \edge_{k+1}^{-1}}, \textrm{ for } k=1,\ldots, m-1.
\end{align*}
Considering one interval at a time a direct computation yields that for $k=1,\ldots, m-1$ the minimizer of $R\lvert_{{\CI}_k}$ is
\begin{align}\label{eqn:tstark} t^{\star,k} &= \varphi_k\left(\frac{\sum_{i=1}^k a_i d_i}{\sum_{i=1}^k a_ic_i} \right), \quad
\textrm{ where }\nonumber\\
\varphi_k(t)&=\begin{cases} \edge_k, &\textrm{for } t<\edge_k \\t, & \textrm{for } t\in\CI_k \\ \edge_{k+1}, &\textrm{for } t>\edge_{k+1}
\end{cases}, \textrm{ and } \quad \begin{aligned}
  a_i &= \Sgn(\ysf_i)(1+2\alpha\SN{\ysf_i}), \\ 
  c_i &= \Sgn(\ysf_i)+2\xsf_i(\alpha-1) + 2\ysf_i, \\ 
  d_i &= \Sgn(\ysf_i)+2\alpha\xsf_i.
\end{aligned}
\end{align}
An analogous expression holds for $k=m$, whereas $R(t)$ is constant on $\CI_0$, as argued in \eqref{eq:5}. 
Therefore, the minimizer of $R(t)$ is $t_{\textrm{opt}}=\argmin_{k=0,\ldots,m} R(t^{\ast,k})$. 
The empirical loss function $\widehat{R}(t)$ is also continuous on $[0,1]$ and is piecewise differentiable on the same set of intervals since they depend only on $\ybs$.
Consequently, minimizers $\widehat{t}^{\star,k}$ of $\widehat{R}(t)$ are also of the form \eqref{eqn:tstark}, where we only ought to replace $\xsf_i$ by $\widehat{\xsf}_i$.

Notice that unless further assumptions are made,
minimizers $t_\textrm{opt}$ and $\widehat{t}_\textrm{opt}$ are not given explicitly: we still need to evaluate $R(t)$ and $\widehat{R}(t)$ at $m+1$ locations, and it is not clear that there are no local minima or that the minimizer is unique.
We will now show that in case of bounded sub-Gaussian noise there is indeed only one minimum and that it concentrates near $t=1.0$ for moderate noise levels.
This analysis will also give a theoretical intuition that will drive our algorithm.
Furthermore, we will show that in a simplified case of Bernoulli noise we get explicit bounds on the parameter error.

\subsection{Bounded noise}\label{sec:bounded_noise}
Consider now the case of bounded noise such that there is a gap between the noise and the signal. 
We show that there exists a unique minimizer and there are no local minima.
For simplicity of computation, we let $\alpha=1$, though the results hold for all $\alpha>0$.
Let $\ybs= \xbs+\sigma\wbs$ and assume $\xbs = \LRP{\xsf_1,\ldots,\xsf_h, 0,\ldots,0}^\top$ where $\SN{\xsf_i}>2\sigma\SN{\wsf_j}$ for all $i=1,\ldots,h$ and $j=1,\ldots, m$. 
Without loss of generality, we assume that $\ybs$ is ordered so that
\begin{equation*}\SN{\xsf_i+\sigma\wsf_i}> \SN{\xsf_j+\sigma\wsf_j} \text{ for }1\leq i < j\leq h \text{ and } \SN{\wsf_i}\geq\SN{\wsf_j}\text{ for }1\leq i<j\leq m.
\end{equation*}
The loss functional $R(t) = \N{\zbs^t - \xbs}_2^2$ is thus piecewise differentiable on intervals $\CI_k$, where $\edge_i = 1+2\SN{\xsf_i+\sigma\wsf_i}$ for $i=1,\ldots,h$, and 
$\edge_i = 1+2\sigma\SN{\wsf_i}$ {for } $i=h+1,\ldots,m$.
Also, we have $\edge_i\geq\edge_j$ for $i\leq j$.
We will show that $R(t)$ is decreasing\footnote{We will show that $R(t)$ can be monotonically increasing only for a large enough $t$. Thus, for all $t$ smaller than that value (denoted as $\vartheta_j$), it will be a monotonously decreasing function} for all $t\leq \bsf_{h+1}^{-1}$.
Let thus $t\in\CI_j$ for $j<h$.
The function $R(t)$ is continuously differentiable in $\CI_j$, so it is sufficient to show that $R'(t)$ is positive.
By a direct computation it follows
\begin{equation}\label{eqn:Rderiv} R'(t) \geq 0 \text{ if } t\geq \frac{\sum_{i=1}^j \edge_i\LRP{1+2\Sgn(\ysf_i)\xsf_i }}{\sum_{i=1}^j \edge_i^2}=:\vartheta_j.\end{equation}
It suffices to show $\vartheta_j\geq \edge_{j+1}^{-1}.$
Since $\Sgn(\ysf_i)=\Sgn(\xsf_i)$ for $i\leq j < h$, we have
\begin{align}\label{eqn:outside_the_edge}
\edge_{j+1}\LRP{1+2\SN{\xsf_i}} - \edge_i >4\SN{\xsf_i} \SN{\xsf_{j+1}+\sigma\wsf_{j+1}} >0.
\end{align}
Therefore, $ \edge_i\LRP{1+2\Sgn(\ysf_i)\xsf_i }\geq \edge_{j+1}^{-1}\edge_i^2$, and the claim follows.
Extending the same analysis to  $t\in\CI_h$ what we ought to show is $\edge_{h+1}\LRP{1+2\SN{\xsf_i}} - \edge_i>0$ for  $1\leq i \leq h$.
A direct computation gives
\[\edge_{h+1}\LRP{1+2\SN{\xsf_i}} - \edge_i > 2 \sigma\SN{\wsf_{h+1}}\LRP{1+2\SN{\xsf_i}} >0.\]
Hence, $t_{\text{opt}} > \edge_{h+1}^{-1}$, as desired. 
We will now show that $R(t)$ admits only one minimizer.
Assume there exists $t^\star$ such that $t^\star\in\CI_{j^\star}$ for some $j^\star >h$ and $R'(t^\star)=0$.
This means
\[ t^\star = \frac{\sum_{i=1}^h \edge_i\LRP{1+2\Sgn(\ysf_i)\xsf_i } + \sum_{i=h+1}^{j^\star} \edge_i }{\sum_{i=1}^{j^\star} \edge_i^2}=\vartheta_{j^\star}, \text{ and } \edge_{j^\star}^{-1} < \vartheta_{j^\star}<\edge_{j^\star+1}^{-1}.\]
We proceed by induction showing that $R(t)$ is increasing for all $j>j^\star$.
For $t\in\CI_j$ with $j>h$, it follows
\[ R'(t) \leq 0 \text{ if } t\leq \frac{\sum_{i=1}^h \edge_i\LRP{1+2\Sgn(\ysf_i)\xsf_i } + \sum_{i=h+1}^{j} \edge_i }{\sum_{i=1}^{j} \edge_i^2}=:\vartheta_j.\]
Let us show $\vartheta_j<\edge_j^{-1}$ for  $j=j^\star+1$. 
We have
\begin{align*} 
\vartheta_j &= \frac{\sum_{i=1}^h \edge_i\LRP{1+2\Sgn(\ysf_i)\xsf_i } + \sum_{i=h+1}^{j} \edge_i }{\sum_{i=1}^{j} \edge_i^2}= \frac{\vartheta_{j^\star}{\sum_{i=1}^{j^\star} \edge_i^2} + \edge_j}{\sum_{i=1}^{j} \edge_i^2}\leq \edge_{j}^{-1},
\end{align*}
where we used the fact $\vartheta_{j^\star}\leq \edge_j^{-1} = \edge_{j^\star+1}^{-1}$, and $\edge_j>\edge_{j^\star}$.
The rest of the proof then follows by mathematical induction.
Analogous computation yields the same type of a result for the empirical loss function $\widehat{R}$. 
\begin{lemma}\label{lem:non_bernoulli}
Let the above assumptions hold.
Loss function $R(t)$ is then either monotonically decreasing on the entire interval $[0,1]$, or it is decreasing until some interval $\CI_{j^\star}$, for $j^\star > h+1$ where it achieves a (unique) minimum, and it is monotonically increasing on all the subsequent intervals. 
The same holds for $\widehat{R}$ and all $\alpha>0$.
\end{lemma}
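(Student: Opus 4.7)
The plan is to exploit the piecewise-smooth structure of $R$ already laid out before the statement: on each interval $\CI_k$ determined by the ordered edges $\edge_1\geq\edge_2\geq\ldots\geq\edge_m$, the regularized solution is a smooth explicit function of $t$ (cf.\ \eqref{eqn:naive}), so $R$ is continuously differentiable on $\CI_k$ and one needs only to track the sign of $R'$ on each piece and check continuity at the breakpoints $\edge_k^{-1}$.

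First I would handle the ``signal'' intervals $\CI_j$ with $j\le h$. Here every active index $i\le j$ satisfies $\Sgn(\ysf_i)=\Sgn(\xsf_i)$ because of the assumed gap $|\xsf_i|>2\sigma|\wsf_j|$, so a direct differentiation gives the threshold $\vartheta_j$ in \eqref{eqn:Rderiv} above which $R'(t)\ge 0$. The main step is then to verify $\vartheta_j\ge \edge_{j+1}^{-1}$, which reduces to the elementary inequality $\edge_{j+1}(1+2|\xsf_i|)-\edge_i > 4|\xsf_i|\,|\xsf_{j+1}+\sigma\wsf_{j+1}|>0$, exactly as in \eqref{eqn:outside_the_edge}. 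Since $\vartheta_j$ lies to the right of $\CI_j$, the derivative is negative throughout $\CI_j$, so $R$ is strictly decreasing on each such interval, and continuity at the breakpoints patches these together to monotonic decrease on $[0,\edge_h^{-1}]$. A slight adaptation of the same computation, using $\edge_{h+1}\geq 1+2\sigma|\wsf_{h+1}|$, gives monotonic decrease also on $\CI_h$, so $t_{\text{opt}}>\edge_{h+1}^{-1}$.

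Next I would analyze the ``noise'' intervals $\CI_j$ with $j>h$. Two scenarios are possible: either $R$ stays strictly decreasing on every such interval (in which case the global minimum is at $t=1$ and we are in the first alternative), or some first interval $\CI_{j^\star}$ with $j^\star>h+1$ contains a stationary point $t^\star$ satisfying $t^\star=\vartheta_{j^\star}$ and $\edge_{j^\star}^{-1}<\vartheta_{j^\star}<\edge_{j^\star+1}^{-1}$. In the second case I would use induction on $j\ge j^\star+1$ to prove $\vartheta_j\le \edge_j^{-1}$, which forces $R'>0$ throughout $\CI_j$ and thus monotonic increase after the minimum. The inductive step is the algebraic identity
\[
\vartheta_j=\frac{\vartheta_{j^\star}\sum_{i=1}^{j^\star}\edge_i^2+\sum_{i=j^\star+1}^{j}\edge_i}{\sum_{i=1}^{j}\edge_i^2},
\]
combined with $\vartheta_{j^\star}\le \edge_j^{-1}$ (since edges decrease in $i$) and $\edge_i\le \edge_{j^\star}$ for $i>j^\star$, yielding $\vartheta_j\le \edge_j^{-1}$. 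Uniqueness of the minimizer falls out of this induction, since strict increase precludes another stationary point.

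Finally, the statement about $\widehat R$ and arbitrary $\alpha>0$ requires only cosmetic changes: the breakpoints $\{\edge_k^{-1}\}$ depend only on $\ybs$, and replacing $\xsf_i$ by $\widehat\xsf_i$ (and re-inserting the general $\alpha$ via the coefficients $a_i,c_i,d_i$ from \eqref{eqn:tstark}) leaves every inequality of the form ``$\vartheta_j\ge \edge_{j+1}^{-1}$'' or ``$\vartheta_j\le \edge_j^{-1}$'' structurally unchanged. The hardest part I anticipate is presenting the induction cleanly: keeping track of signs when $\Sgn(\ysf_i)\ne \Sgn(\xsf_i)$ for $i>h$ is delicate, and one must be careful that the recursive identity for $\vartheta_j$ continues to hold through every breakpoint where a new index joins the support of the soft-thresholded vector.
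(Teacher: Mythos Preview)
Your proposal is correct and follows essentially the same route as the paper: you use the piecewise structure and the thresholds $\vartheta_j$ from \eqref{eqn:Rderiv}, verify $\vartheta_j\ge\edge_{j+1}^{-1}$ on the signal intervals via the inequality \eqref{eqn:outside_the_edge}, and then run the same induction on the noise intervals using the recursive identity for $\vartheta_j$. Your presentation of the inductive step via the general formula $\vartheta_j=\bigl(\vartheta_{j^\star}\sum_{i\le j^\star}\edge_i^2+\sum_{i=j^\star+1}^{j}\edge_i\bigr)\big/\sum_{i\le j}\edge_i^2$ is in fact slightly cleaner than the paper's, which writes out only the base case $j=j^\star+1$ and then says ``the rest follows by induction''; otherwise the arguments coincide, including the handwave for $\widehat R$ and general $\alpha$.
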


\subsection{Bernoulli noise}\label{sec:bernoulli_noise}
Lemma \ref{lem:non_bernoulli} states that $R(t)$ and $\widehat{R}(t)$ achieve a unique minimum in $[0,1]$, and that they are monotonically decreasing before, and monotonically increasing after this minimum.
Furthermore, the minimizer is bigger than $(1+2\sigma\SN{\wsf_{h+1}})^{-1}$, which means that for moderate noise levels, it will be close to $1$.
The issue is that minimizers $\vartheta_{j^\star}$ and $\widehat{\vartheta}_{\widehat{j}^\star}$ do not need to lie in the same interval, that is $j^\star\neq\widehat{j}^\star$, and thus they cannot be directly compared.
Instead, we consider a simplified model that still encodes the main features of the problem. 
In particular, let
\[ \ybs = \xbs + \sigma\wbs, \textrm{ where } \bbP\LRP{ \wsf_i = \pm 1} = \frac{1}{2},\]
and assume
\( \xbs=(\xsf_1,\ldots,\xsf_h,0,\ldots,0)^\top,\) and \(\SN{\xsf_i}\geq 2\sigma,\) for  \(i=1,\ldots, h\).
As before, without loss of generality we can assume $\lvert\ysf_1\lvert\geq\lvert\ysf_2\lvert\geq\ldots\geq\lvert\ysf_m\lvert$. It then follows 
\( \edge_i=2\SN{\xsf_i\pm\sigma}+1,\) for $1\leq i \leq h$, and $\edge_i=2\sigma  + 1$  otherwise. Moreover, $\edge_j>\edge_{h+1},$ for all $j=1,\ldots, h$.
In the following, we will for the sake of simplicity consider the case $\alpha=1$. 
The details regarding the general case, $\alpha \neq 1$, are in the Appendix.
First, as in Section \ref{sec:bounded_noise} we know that $t_\textrm{opt}\geq\edge^{-1}_{h+1}.$
We can now explicitly compute the minimizer of $R(t)$.

\begin{lemma}\label{lem:bernoulli_minimizer}
True loss functional $R(t)$ is minimized for $t_{\textrm{opt}} = \min\{t^\star,1\}\in [\edge_{h+1}^{-1},1]$, where
\begin{equation}\label{eqn:bernoulli_tstar} t^\star = \frac{\sum_{i=1}^h\edge_i\LRP{1+2\Sgn\LRP{\ysf_i}} + \edge_{h+1}(m-h)}{\sum_{i=1}^h \edge_i^2 + (m-h)\edge_{h+1}^2}.\end{equation}
\end{lemma}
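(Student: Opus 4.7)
The plan is to build on Lemma~\ref{lem:non_bernoulli}, which already pins the minimizer into $[\edge_{h+1}^{-1},1]$ and rules out local minima. The special feature of the Bernoulli model is that $\edge_i=2\sigma+1=\edge_{h+1}$ for \emph{every} $i>h$, so the intervals $\CI_h,\CI_{h+1},\ldots,\CI_{m-1}$ collapse to points of length zero and the only non-trivial region to the right of $\edge_{h+1}^{-1}$ is $\CI_m=(\edge_{h+1}^{-1},1]$. Thus the unique minimizer must lie in the closed interval $\CI_m\cup\{\edge_{h+1}^{-1}\}$, and the proof reduces to finding the critical point of $R$ restricted to $\CI_m$.

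First I would compute $R$ in closed form on $\CI_m$. With $\alpha=1$ and every index active, the closed-form \eqref{eqn:naive} gives $\zsf_i^t=\Sgn(\ysf_i)(t\edge_i-1)/2$, so
\begin{equation*}
R(t)=\sum_{i=1}^m\LRP{\frac{t\edge_i-1}{2}\Sgn(\ysf_i)-\xsf_i}^{2},
\end{equation*}
a quadratic in $t$ on $\CI_m$. Setting $R'(t)=0$ yields the unique stationary point
\begin{equation*}
\vartheta_m=\frac{\sum_{i=1}^m \edge_i\LRP{1+2\Sgn(\ysf_i)\xsf_i}}{\sum_{i=1}^m \edge_i^{2}}.
\end{equation*}
Splitting the sum at $i=h$ and using $\xsf_i=0$ and $\edge_i=\edge_{h+1}$ for $i>h$ produces exactly the stated expression \eqref{eqn:bernoulli_tstar}.

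Next I would verify that this critical point actually lies in the interval $\CI_m$, \emph{i.e.}\ that $\vartheta_m>\edge_{h+1}^{-1}$. Cross-multiplying, this is equivalent to
\begin{equation*}
\sum_{i=1}^{h}\edge_i\LRP{\edge_{h+1}(1+2\Sgn(\ysf_i)\xsf_i)-\edge_i}>0.
\end{equation*}
Because $\SN{\xsf_i}\geq 2\sigma$ we have $\Sgn(\ysf_i)=\Sgn(\xsf_i)$ for $i\leq h$, so $\Sgn(\ysf_i)\xsf_i=\SN{\xsf_i}$ and $\edge_i=2\SN{\xsf_i\pm\sigma}+1\leq 2\SN{\xsf_i}+2\sigma+1$. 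A direct expansion then gives $\edge_{h+1}(1+2\SN{\xsf_i})-\edge_i\geq 4\sigma\SN{\xsf_i}>0$, proving the inequality. This is the same mechanism as \eqref{eqn:outside_the_edge}, adapted to the Bernoulli situation.

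Finally I would assemble the conclusion. Since $R$ is quadratic with positive leading coefficient on $\CI_m$, the restriction has a unique minimizer: if $\vartheta_m\leq 1$, then $\vartheta_m\in\CI_m$ is this minimizer; if $\vartheta_m>1$, then $R'<0$ throughout $\CI_m$ and the minimum on $\CI_m$ is attained at the right endpoint $t=1$. Combining this with Lemma~\ref{lem:non_bernoulli}, which excludes the existence of any minimizer to the left of $\edge_{h+1}^{-1}$, yields $t_{\text{opt}}=\min\{\vartheta_m,1\}=\min\{t^\star,1\}\in[\edge_{h+1}^{-1},1]$. The main bookkeeping obstacle is the gap argument $\vartheta_m>\edge_{h+1}^{-1}$: everything else is a straightforward quadratic optimization once one realizes that the degeneracy of the Bernoulli noise reduces the piecewise analysis of Section~\ref{sec:bounded_noise} to a single polynomial piece.
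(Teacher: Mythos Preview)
Your proof is correct and follows essentially the same approach as the paper: both compute $R'(t)$ on the interval $(\edge_{h+1}^{-1},1]$, identify its unique root, and verify via the mechanism of~\eqref{eqn:outside_the_edge} that this root exceeds $\edge_{h+1}^{-1}$. Your added remark that the Bernoulli degeneracy collapses the intermediate intervals $\CI_h,\ldots,\CI_{m-1}$ is a helpful piece of exposition that the paper leaves implicit; note also that your derived numerator $\sum_{i=1}^h\edge_i(1+2\Sgn(\ysf_i)\xsf_i)$ is the correct one, and the missing factor $\xsf_i$ in the lemma statement~\eqref{eqn:bernoulli_tstar} and in~\eqref{eqn:der_at_one} is a typo in the paper (compare with~\eqref{eqn:Rderiv}).
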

\begin{proof}
Considering \eqref{eqn:Rderiv} for $t\in \LRP{\edge_{h+1}^{-1},1}$ we get
\begin{align}\label{eqn:der_at_one} 
2R'(t) &= \LRP{\sum_{i=1}^h \edge_i^2 + (m-h)\edge_{h+1}^2}t -\sum_{i=1}^h\edge_i\LRP{1+2\Sgn\LRP{\ysf_i}} - \edge_{h+1}(m-h).
\end{align}
The root of \eqref{eqn:der_at_one} is exactly \eqref{eqn:bernoulli_tstar}.
Arguing as in \eqref{eqn:outside_the_edge} we have $t^\star>\frac{1}{\edge_{h+1}}$.
Restricting to $[0,1]$ the claim follows.
\end{proof}
\begin{remark}
The minimizer given by Lemma \ref{lem:bernoulli_minimizer} will be in $[0,1]$ provided $\sum_{i=1}^h \SN{\ysf_i}\leq (m-h)\sigma$ and $h\leq m/2$.
\end{remark}
For the empirical loss function it is in general not true that $\widehat{\xsf}_i=0$ for $i>h$, nor is $\frac{\ysf_i-\widehat{\xsf}_i}{\sigma}$ a Bernoulli random variable.
However, $\ybs$ and $\edge_i$'s remain the same, and an entirely analogous computation gives 
\begin{equation}\label{eqn:t-t} \widehat{t}^\star = t^\star +
  \frac{\sum_{i=1}^m \edge_i \Sgn\LRP{\ysf_i}(\xsf_i-\widehat{\xsf}_i)
  }{\sum_{\edge_i>1/t^{\star}}\edge_i^2}.
\end{equation}
We can now bound the approximation error for the optimal regularization parameter.
\begin{theorem}\label{prop:error_estimate}
Assume that $t_{\textrm{opt}}<1$ and $\sigma\frac{h}{m}<1$. Given $u>0$,   with probability of at least $1-2\exp(-u)$ we have
  \begin{equation}\label{eqn:bernoulli_unbounded} \SN{t_{\textrm{opt}} - \widehat{t}_{\textrm{opt}}} \leq
          \frac{\lambda_1}{\lambda_h}
          \LRP{\sqrt{\frac{h+\sigma^2 m+u }{N}} +
            \frac{h+\sigma^2 m+u}{N}}  + \sigma\sqrt{\frac{h}{m}},\end{equation}
provided $N\gtrsim \LRP{h+\sigma^2m +u}$. 
{Assume now $\ybs$ is bounded.  With
probability greater than $1 - 3 \exp\LRP{-u}$ we then have
\begin{equation}\label{eqn:bernoulli_bounded} \SN{t_{\textrm{opt}} - \widehat{t}_{\textrm{opt}}} \leq
          \frac{\lambda_1}{\lambda_h}
          \LRP{\sqrt{\frac{\log(h+m)+u}{N}} +
            \frac{\log(h+m)+u}{N} } + \sigma\sqrt{\frac{h}{m}} ,\end{equation}
provided $N\gtrsim \LRP{\log{2m} + u}$. }
\end{theorem}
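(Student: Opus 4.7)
The plan is to leverage the explicit descriptions of $t^\star$ and $\widehat{t}^\star$ from Section~\ref{sec:bernoulli_noise} to reduce the parameter error to the projection error controlled by Lemma~\ref{lem:proj_error}. Since $t_{\text{opt}}<1$ by hypothesis, Lemma~\ref{lem:bernoulli_minimizer} identifies $t_{\text{opt}}=t^\star$, and the analogous derivation applied to $\widehat{R}$ yields $\widehat{t}_{\text{opt}}=\min\{\widehat{t}^\star,1\}$. Because truncation at $1$ is a contraction and $t_{\text{opt}}\le 1$, I will first reduce to proving $|t_{\text{opt}}-\widehat{t}_{\text{opt}}|\le|t^\star-\widehat{t}^\star|$, and then bound the right-hand side using identity~\eqref{eqn:t-t}.

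The second step is to control the ratio in~\eqref{eqn:t-t}. The proof of Lemma~\ref{lem:bernoulli_minimizer} shows the strict bound $t^\star>\edge_{h+1}^{-1}$, so every $\edge_i\ge\edge_{h+1}>1/t^\star$ and the denominator equals $\sum_{i=1}^m\edge_i^2\ge(m-h)(1+2\sigma)^2$. Under the implicit regime $h\le m/2$ (from the remark following Lemma~\ref{lem:bernoulli_minimizer}, and compatible with the hypothesis $\sigma h/m<1$), this is $\gtrsim m$. Cauchy--Schwarz applied to the numerator against the vector with entries $\edge_i\Sgn(\ysf_i)$ then yields
\[
|\widehat{t}^\star - t^\star| \le \frac{\sqrt{\sum_i\edge_i^2}\,\N{\xbs-\widehat{\xbs}}_2}{\sum_i\edge_i^2} \lesssim \frac{\N{\xbs-\widehat{\xbs}}_2}{\sqrt{m}}.
\]

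The third step will be to decompose $\widehat{\xbs}=\widehat{\Pi}\ybs$ (which holds because $\Amat=\Id$) and use $\Pi\xbs=\xbs$ to write $\xbs=\Pi\ybs-\sigma\Pi\wbs$; subtracting gives
\[
\xbs-\widehat{\xbs}=(\Pi-\widehat{\Pi})\ybs-\sigma\Pi\wbs,
\]
so $\N{\xbs-\widehat{\xbs}}_2\le\N{\Pi-\widehat{\Pi}}_2\N{\ybs}_2+\sigma\N{\Pi\wbs}_2$. For Bernoulli noise $\N{\wbs}_2=\sqrt{m}$ deterministically, while $\N{\xbs}_2\lesssim\sqrt{h}$ holds with exponentially small failure probability by sub-Gaussian concentration on the $h$ non-zero coordinates, hence $\N{\ybs}_2\lesssim\sqrt{m}$. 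The projected noise $\Pi\wbs$ is a quadratic form in Rademacher variables with $\bbE\N{\Pi\wbs}_2^2=\trace\Pi=h$ and $\N{\Pi}_F^2=h$, so Hanson--Wright gives $\N{\Pi\wbs}_2\lesssim\sqrt{h+u}$ with probability at least $1-\exp(-u)$. Substituting,
\[
\frac{\N{\xbs-\widehat{\xbs}}_2}{\sqrt{m}} \lesssim \N{\Pi-\widehat{\Pi}}_2+\sigma\sqrt{\frac{h}{m}}.
\]

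To close, I will invoke Lemma~\ref{lem:proj_error}: the unbounded bound~\eqref{eqn:unbounded} inserted into the display above yields~\eqref{eqn:bernoulli_unbounded}, and the bounded bound~\eqref{eqn:bounded} yields~\eqref{eqn:bernoulli_bounded} after a union bound with the signal and noise concentration events, giving the stated $1-3\exp(-u)$. The main obstacle I anticipate is handling the denominator uniformly---obtaining $\sum_i\edge_i^2\gtrsim m$ with a constant independent of the signal scale requires the strict inequality $t^\star>\edge_{h+1}^{-1}$ together with $h\le m/2$---and the probability bookkeeping that makes the dominant $\lambda_1/\lambda_h$ contribution from Lemma~\ref{lem:proj_error} appear cleanly next to the additive $\sigma\sqrt{h/m}$ noise term without stray dimensional factors; everything else is a routine substitution.
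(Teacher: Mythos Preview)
Your approach mirrors the paper's: reduce via Lemma~\ref{lem:bernoulli_minimizer} and~\eqref{eqn:t-t} to bounding $\N{\xbs-\widehat{\xbs}}_2/\N{\bbs}_2$, decompose using $\widehat{\xbs}=\widehat{\Pi}\ybs$ and $\xbs=\Pi\ybs-\sigma\Pi\wbs$, and invoke Lemma~\ref{lem:proj_error}. The only substantive difference is that the paper replaces both of your probabilistic detours by deterministic identities. First, since $\edge_i=1+2|\ysf_i|$ one has $\N{\bbs}_2^2=m+4\N{\ybs}_1+4\N{\ybs}_2^2\ge 4\N{\ybs}_2^2$, so $\N{\ybs}_2/\N{\bbs}_2\le 1/2$ without any sub-Gaussian concentration on $\xbs$. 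Second, because $\Amat=\Id$ here, $\Pi$ is the coordinate projection onto the first $h$ basis vectors and $\wsf_i=\pm1$, so $\N{\Pi\wbs}_2=\sqrt{h}$ exactly---Hanson--Wright is unnecessary. With these two observations the only random ingredient is Lemma~\ref{lem:proj_error} itself, which is how the unbounded case attains precisely the stated $1-2\exp(-u)$; your extra union bounds would degrade that constant.
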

\begin{proof}
By assumption and Lemma \ref{lem:bernoulli_minimizer} $t_{\textrm{opt}} =t^\star$. We can now rewrite \eqref{eqn:bernoulli_tstar} and \eqref{eqn:t-t} as
  \[ t^\star = \frac{\N{\bbs}_1 + 2\EUSP{
  (\Sgn{\ybs}\cdot\bbs)}{\xbs} } {\N{\bbs}_2^2},\quad \widehat
  t^\star = \frac{\N{\bbs}_1 + 2\EUSP{
  (\Sgn{\ybs}\cdot\bbs)}{\widehat\xbs} } {\N{\bbs}_2^2}.\]
  Therefore, using \eqref{eqn:t-t} we have
\begin{align*} t^\star - \widehat{t}^\star =\frac{ \EUSP{\vbs}{\xbs-\widehat\xbs}}{\N{\bbs}_2^2},
\end{align*}
with $\vbs$ defined by $\vsf_j=2\Sgn(\ysf_j)\edge_j$ for $j=1,\ldots, h$, and $\vsf_j=\Sgn(\wsf_j)$ for $j>h$.
Thus, $\N{\vbs}_2\leq\N{\bbs}_2$ and we have
\begin{align*}
\SN{t^\star - \widehat{t}^\star} &\leq 2 \frac{\N{\xbs-\widehat\xbs}_2} {\N{\bbs}_2} \leq 2\LRP{\frac{\N{\ybs}_2}{\N{\bbs}_2}\N{\Pi-\widehat\Pi}_2
                                             +\sigma\frac{\N{\Pi\wbs}_2} {\N{\bbs}_2}}. 
\end{align*}
Using $\N{\bbs}_2^2 = m + 4\N{\ybs}_1 + 4\N{\ybs}_2^2$ and $\N{\Pi\wbs}_2={h}$, and provided $N\gtrsim \LRP{h+u+\sigma^2m }$, by Lemma \ref{lem:proj_error} we have
\[
\SN{t^\star - \widehat{t}^\star} \leq C \frac{\lambda_1}{\lambda_h}
          \LRP{\sqrt{\frac{h+\sigma^2 m+u}{N}} +
            \frac{h+\sigma^2 m+u}{N}}  + \sigma\sqrt{\frac{h}{m}},\] 
with probability greater than $1-2\exp(-u)$, and $C>0$ is a constant. 
Since $\SN{t^\star-\widehat{t}^\star}\geq \SN{t^\star - \widehat{t}_{\textrm{opt}}} $ the claim follows. 
The proof for a bounded  $\ybs$ is entirely analogous.  
\end{proof}

\begin{remark}
Theorem \ref{prop:error_estimate} is valid not only for $\alpha=1$ but for all $\alpha$.
Namely, in Appendix \ref{app:alphaneq1} we show
\[\SN{t_{\textrm{opt}} - \widehat{t}_{\textrm{opt}}} \lesssim
          \frac{\lambda_1}{\lambda_h}
          \LRP{\sqrt{\frac{h+u+\sigma^2 m }{N}} +
            \frac{h+u+\sigma^2 m}{N}}  + \sigma\sqrt{\frac{h}{m}},\] 
holds for any $\alpha>0$.
\end{remark}

\section{OptEN algorithm}\label{sec:general_matrices}

Driven by insights in Section \ref{sec:error_analysis}, we are ready to present an efficient heuristic algorithm for learning the {\it Opt}imal regularization parameter for the {\it E}lastic {\it N}et (OptEN). 
The algorithm is based on the minimization of a given loss function ($\widehat{R}$, $\widehat{R}_\FM$ or $\widehat{R}_\FP$).
In Section \ref{sec:error_analysis} we showed that in a simplified, yet instructive, setting that the optimal parameter tends to be in the vicinity of $t=1$, depending on the noise level and the signal-to-noise gap.
This is supported by experimental evidence in more general situations such as for non-injective $\Amat$, as we will see in Section \ref{sec:experiments}.
Moreover, the loss function is monotonically decreasing as we get away from $t=1$.
These observations drive our algorithm which assumes that the minimizer lies in a valley not too far from $t=1,$ see Figure \ref{fig:loss_functionals}.
Therefore, we will perform a \emph{line search} on the graph of a given loss function, starting from $t=1$. 

Line search methods follow iterations $t_{k+1} = t_k + s_k \pbs_k$, where $\pbs_k$ is the \emph{search direction} and $s_k$ the \emph{step size}:
\begin{itemize} 
\item {\bf Search direction.}
We select $p_k$ by estimating $\widehat{R}'(t_k)$ with central differences,
\( \widehat{R}'(t) \sim \Delta_\epsilon R(t) := \frac{R(t+\epsilon)-R(t-\epsilon)}{2\epsilon}\)
where $\epsilon>0$.
For $t=1$ we instead use  $\tilde\Delta_\epsilon R(1) := \frac{R(1)-R(1-\epsilon)}{\epsilon}$. Then set $p_k = - \Delta_\epsilon R(t).$
\item {\bf Step size.} We estimate $s_k$ with the backtracking line search (consult \cite{A06} for an overview of line search methods). 
\end{itemize}
Our approach is presented in Algorithm \ref{alg:algo}, while an extensive numerical study is provided in the next section.
\begin{algorithm}[ht!]
\begin{algorithmic}
  \small{
   \STATE {\bfseries Input:} $\ybs_1,\ldots,\ybs_N,\ybs\in\bbR^m,\,\Amat\in\bbR^{m\times d}$;
\vspace{5pt}
\STATE Compute $\widehat{\xbs}$ according to \eqref{eqn:xhat}. 
\STATE Set a loss {function} $\rightarrow\,\,\widehat{R} \textrm{ or } \widehat{R}_\FP \textrm{ or } \widehat{R}_\FM$. In the rest of the algorithm we will refer to it as $R$;
\STATE Set $\epsilon>0$, $\texttt{tol}>0$, $\texttt{tol2}>0$, $0<\alpha<1$, and $c_1,\beta,\gamma>0$;
\STATE Set $k\leftarrow0$, $t_0 = 1$
\STATE Compute $r_1=R(1)$, $\tilde{r}_1=R(1-\epsilon)$, $\psf_0=(r_1-\tilde r_1)/\epsilon$, and $r_2 = \varphi(\gamma_0)$;
   \REPEAT
    \STATE $\tilde{t} = t_k + \alpha \psf_k$;
    \STATE $\varphi_0=r_1$, $\varphi_0'=-\psf_k^2$, $\varphi_1=R(\tilde t)$;
    \IF{$\varphi_1-\varphi_0 < c_1 \varphi_0'\alpha$}
      \STATE $s_k=\alpha$;
    \ELSE 
      \STATE $s_k = -\frac{1}{2}\frac{\varphi_0'\alpha^2}{\varphi_1- \varphi_0 -\varphi_0'\alpha}$;
    \ENDIF
    \IF{$|s_k| < \texttt{tol2}$ or $|s_{k-1}/s_k| > \gamma$}
      \STATE $s_k = s_{k-1} \cdot \beta$;
    \ENDIF
    \STATE Set $t_{k+1}=t_k + s_k \psf_k$;
    \STATE Compute $r_1=R(t_{k+1})$, $\psf_{k+1} = (R(t_{k+1}+\epsilon)-R(t_{k+1}-\epsilon)/(2\epsilon)$;
    \STATE $k\leftarrow k+1$;
   \UNTIL{$|p_k| < \texttt{tol}$ or $k<\texttt{max\_iter}$};
   }
  \STATE \textbf{Output}: Approximate regularization parameter $\hat{t}:=t_k$.
\end{algorithmic}
   \caption{{\bf OptEN algorithm} for approximating the optimal elastic net regularization parameter using backtracking line search}
   \label{alg:algo}
\end{algorithm}

\section{Experimental results}\label{sec:experiments}

We now study the performance of our approach and show its adaptivity to different scenarios by conducting experiments on synthetic and imaging data. 
In the first set of experiments we perform a thorough comparison of our method with state-of-the-art parameter selection rules by exploring their behavior with respect to noise level and other notions. 
The second set of experiments deals with image denoising where we use wavelet-based thresholding with elastic nets. 
We consider two data-sets: natural images and a real-world brain MRI data.
Note that we do not aim to compare our method with state-of-the art denoising methods, but rather only with state-of-the-art methods regarding the selection of the regularization parameter for the elastic net.
We start with a discussion of methods that can be used for the automatic detection of the sparsity level $h$ and show that when a sufficient amount of training points is given, we can reliably estimate $h$. 

\subsection{Estimating the sparsity level}\label{sec:estimating_sparsity}

In real applications the sparsity level of a vector is either not available or is only an approximate notion, \emph{i.e.} the desired vector is sparse only when we threshold its entries.
Such regimes require $h$ to be estimated, which in our case means looking at the spectrum of the corresponding covariance matrix.
This question belongs to the class of low-rank matrix recovery problems since what we are trying to recover is the geometry (\emph{i.e.} projection onto the range) of the noiseless, lower rank matrix $\Sigma(\Amat\xbs)$, using only the covariance matrix of noisy observations $\widehat\Sigma(\ybs)$, which is of full rank. 
Thus,  estimating $h$ boils down to thresholding singular values of the empirical covariance matrix according to some spectral criteria that exploits the underlying structure.

\begin{figure}[ht!]
\centering
\includegraphics[width=\textwidth]{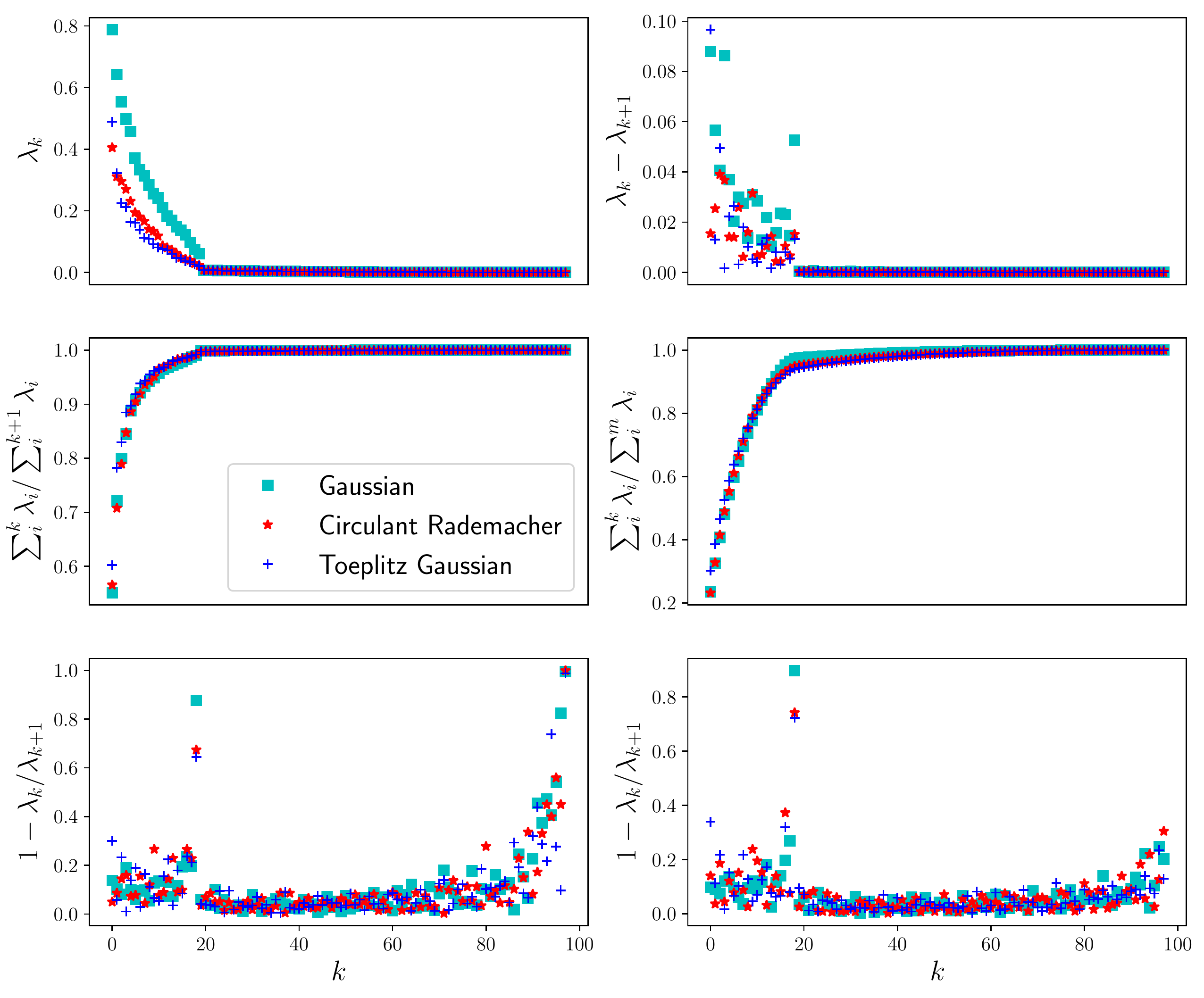}
\caption{Plot in the upper left corner shows the spectrum of three different types of matrices. The remaining plots (not including the one in the bottom right corner) consider different notions of the spectral gap for $N=100$. The plot in the bottom right considers the last criteria, $1-\frac{\lambda_k}{\lambda_{k+1}}$ but for $N=150$ samples, showing that the behavior for large $k$ changes dramatically, compared to the plot in the bottom right for $N=100$}
\label{fig:spectral_gap}
\end{figure}

For a positive definite matrix  with singular values $\lambda_1\geq \ldots\geq \lambda_m\geq 0$,  commonly used spectral criteria are (a) the spectral gap $\argmax_k \SN{\lambda_{k} - \lambda_{k+1}}$; (b) the relative gap $\argmax_{k}\LRP{1 - \frac{\lambda_k}{\lambda_{k+1}}}$; (c) the cumulative spectral energy $\sum_{i=1}^k \lambda_i/\sum_{i=1}^m \lambda_i$, and (d) the relative cumulative spectral energy $\sum_{i=1}^k \lambda_i/\sum_{i=1}^{k+1} \lambda_i$.
For the latter two criteria one sets a threshold, say $0.95$, and selects $\tilde h$ as the first $k$ for which the corresponding spectral energy reaches that threshold.

We study the behavior of these four criteria on three different types of forward matrices $\Amat$: random Gaussian, random circulant Rademacher, and random Toeplitz Gaussian matrices.
These matrices were chosen because they have different spectral behavior and commonly appear in inverse problems. 
In each case $\Amat$ is a $100\times 100$ real matrix, normalized so that $\N{\Amat}=1$, and we take $N=100$ samples $\xbs_i$, sampled according {to \ref{data:D2}} for $h=20$, $\wbs_i\sim\CN(\boldsymbol{0},\Imat_{100})$ and $\sigma=0.3$. We compute $\widehat\Sigma(\ybs) = \frac{1}{N} \sum_{i=1}^N \ybs_i\otimes\ybs_i$ for $\ybs_i=\Amat\xbs_i+\sigma\wbs_i$.

In Figure \ref{fig:spectral_gap}, we show the application of the aforementioned spectral criteria to $\widehat{\Sigma}(\ybs)$.
It is clear from the results that all four methods would fail if used without taking further information into account. 
For example, the spectral gap criterion (in the upper right panel) would dictate {the selection of $h=1$}, but a more careful look at the plot suggests that the behavior of the spectral gap changes dramatically around $h=20$, which corresponds to the true $h$.
Such \emph{ad hoc} solutions are sensible and often improve the performance but can be hard to quantify, especially on real data.

The last spectral criteria, $1-{\lambda_k}/{\lambda_{k+1}}$, is perhaps the most promising, but is also subject to  demands on $N$, as shown in the bottom row of Figure \ref{fig:spectral_gap}. 
Namely, if $N$ is not large enough then $1-{\lambda_k}/{\lambda_{k+1}}$ has a heavy tail on the spectrum of $\widehat{\Sigma}(\ybs)$ and would thus suggest a large $h$, as in the bottom left corner of the Figure.
Instead, in Section \ref{sec:synth_experiments} we look for the relative gap within the first $m/2$ singular vectors.
We note that in the case of the first three spectral criteria the situation does not change as the number of samples increases.
On the other hand, for the last criterion it does: heavy tails flatten back to zero for all three choices of random matrices, see the plot in the bottom right corner of Figure \ref{fig:spectral_gap}.

In Section \ref{sec:wavelet_denoising} we will apply our algorithm to wavelet denoising where there is no natural choice of $h$ since wavelet coefficients of images are not truly sparse.
We will instead consider two scenarios; when we are given an oracle $h$ (\emph{i.e.} the $h$ giving the highest PSNR), and when $h$ has to be estimated from data.

\subsection{Synthetic examples}\label{sec:synth_experiments}

\paragraph{Experimental setting.} 
We consider the inverse problem of the type $\ybs=\Amat\xbs + \sigma \wbs$, where the data are generated according to $\xbs = \xibf + {\vbs}$ , {where
\begin{enumerate}[label=(\textsf{D}\arabic*)]
  \item\label{data:D1} $\Amat\in\bbR^{m\times d}$ is a random Gaussian matrix such that $\N{\Amat}_2=1$,  
  \item\label{data:D2} $\xi_i\sim\CN(0, 1)$, and $\vsf_i= 4\Sgn(\xi_i)$ for $1\leq i \leq h$; $\xi_i=\vsf_i=0$ otherwise,
  \item\label{data:D3} $\wbs\sim\CN(\boldsymbol{0},\Id_m)$.
\end{enumerate}
}
{
The rationale behind distributional choices in \ref{data:D2} is twofold. 
First, having $\vbs$ be a non-constant vector ensures that $\CV=\range (\cov{\xbs})$ is truly and fully $h$-dimensional (\emph{i.e.} if $\vbs$ were constant there would be one dominant singular vector). 
Second, forcing $\SN{\vsf_i}= 4$ for $i=1,\ldots, h$ ensures  that the data are not concentrated around the origin (which happens \emph{e.g.} if $\vsf_i=0$ for all $i$ in \ref{data:D2}) and that there is a gap between the original signal and the noise.
The gap can be measured by $\operatorname{SparseSNR}:=\frac{\max_{1\leq i\leq m} \SN{\sigma \wsf_i}}{\min_{1\leq i\leq h} \SN{\xsf_i}}$. }
We add that the conclusions and the results of this section, and of Section \ref{sec:estimating_sparsity}, stay the same in case of the more usual distribution assumptions, \emph{i.e.} for $\xbs\sim\CN(\boldsymbol{0}, \Id_h)$ where $\Id_h$ is a diagonal matrix with exactly $h$ entries set to $1$ and the rest to $0$, and noise levels as in Section \ref{sec:wavelet_denoising}.

\paragraph{Comparison.}  
We compare our algorithm with the following parameter selection methods:
the discrepancy principle \cite{morozov2012}, monotone error rule \cite{TH99}, quasi optimality \cite{TG65}, L curve method \cite{hansen1992}, (Monte-Carlo) balancing principle \cite{lepskii1991problem} and its elastic net counterpart \cite{DDR09}, (Monte-Carlo) generalized cross-validation \cite{GEE11} and nonlinear cross validation \cite{F05}. In the remainder of this paper we refer to other methods by their acronyms, and to our method as \emph{OptEN}. 
The first five methods are commonly used in inverse problems (a detailed account and an experimental study can be found in \cite{zbMATH05929140}), 
whereas Monte-Carlo and nonlinear cross-validation are adaptations of generalized cross-validation for non-linear regularization methods. 

Before presenting the results, we provide a concise description of considered methods. 
Most of the methods require some additional information about the problem, predominantly the noise level $\sigma$, to be either known or estimated, which affects their performance.
We provide the true noise level whenever a given method requires it and furthermore, we perform judicious testing and tuning of all other quantities, taking into account recommendations from relevant literature.
We consider a regularization parameter sequence $t_n = \frac{1}{1+\mu_0 q^n}$, 
where $n\in\{0,1,\ldots, N_{\max}\}$, and $\mu_0>0$, $q>0$ and $N_{\max}\in\bbN$ are preselected\footnote{This is an adaptation of the parameter sequence from \cite{zbMATH05929140} that reflects our reparametrization from $\lambda$ to $t$, as in \eqref{eqn:enets_minimization}}. 
For each $n$ we denote the corresponding elastic nets solution as $\zbs_n:=\zbs^{t_n}$. 

\paragraph{Discrepancy Principle [DP]\\}
Discrepancy principle is one of the oldest parameter choice rules which selects a solution so that the norm of the residual is at the noise level. 
Thus, the regularization parameter is chosen by the first $n\in\bbN$ such that
\begin{equation}\label{eqn:dp} \N{\Amat \zbs_n - \ybs}_2 \leq \tau \sigma \sqrt{m},\end{equation}
where we fix $\tau=1$.

\paragraph{Monotone Error Rule [ME]\\}

This rule is based on the observation that the monotone decrease of the error $\N{\zbs_n - \xbs}_2$ can only be guaranteed for large values of the regularization parameter. Therefore, the best parameter $t_{n^*}$ is chosen as the first $t$-value for which one can ensure that the error is monotonically decreasing. The parameter is then chosen by the smallest $n$ such that
\begin{equation}\label{eqn:mer} 
\frac{\EUSP{\Amat\zbs_n-\ybs}{\Amat^{-\top}\LRP{\zbs_n - \zbs_{n+1}}} }{\|  \Amat^{-\top}\LRP{\zbs_n - \zbs_{n+1}} \|_2} \leq \tau \sigma \sqrt{m}.
\end{equation}
We fix $\tau=1$ for our experiments.  
The left hand side of \eqref{eqn:mer} is replaced with \eqref{eqn:dp} whenever the denominator is $0$.

\paragraph{Quasi-Optimality Criterion [QO]\\}

Quasi-optimality is a parameter rule that does not need the noise level, and thus has enjoyed reasonable success in practice, especially for Tikhonov regularization and truncated singular value decomposition.
The regularization parameter is chosen according to
\begin{equation}\label{eqn:qoc} n_\star = \argmin_{n\leq N_{\max}} \N{\zbs_n-\zbs_{n+1}}_2.\end{equation}

\paragraph{L-curve method [LC] \\}
The criterion is based on the fact that  the $\log-\log$  plot of $(\N{\Amat\zbs_n-\ybs}_2,\N{\zbs_n}_2)$ often has a distinct L-shape.
As the points on the vertical part correspond to under-smoothed solutions, and those on the horizontal part correspond to over-smoothed solutions, the optimal parameter is chosen at the elbow of that \emph{L-curve}. There exist several versions of the method; here we use the following criterion
\begin{equation}\label{eqn:lc}n_\ast = \argmin_{n\leq N_{\max}} \{\N{\Amat\zbs_n-\ybs}_2\N{\zbs_n}_2\}.\end{equation}

\paragraph{(Monte-Carlo) Balancing Principle [BP]\\}
The principle aims to balance two error contributions, approximation and sampling errors, which have an opposite behavior with respect to the tuning parameter. More precisely, we select the parameter by 
\[
n^* = \argmin_n \{ t_n |   \N{\zbs_n-\zbs_k}_2 \leq 4 \kappa \sigma \rho(k), k = n,\ldots, N_{\max}  \},
\]
where $\kappa>0$ is a tuning parameter. 
More computationally friendly, yet equally accurate, versions of the balancing principle are also available  \cite{zbMATH05929140}. As our main focus on the accuracy of the parameter choice, we will use the original and more computationally heavy version of the balancing principle. 

The value of $\sigma \rho(k)$ is in general unknown but it can be estimated in case of white noise. Following \cite{zbMATH05929140}, we calculate
\( \rho(k)^2\approx \operatorname{mean} \{  \N{\Amat_n^{-1}\xibf_i}_2^2\},\)
where $\xibf_i\sim\CN(\boldsymbol{0},\Id_m)$, $1\leq i \leq L$ (we use $L=4$), and $\Amat_n^{-1}$ is the map that assigns $\ybs$ to $\zbs_n.$

\paragraph{Elastic Nets Balancing Principle [ENBP]\\}
In \cite{DDR09} the authors propose a reformulation of the balancing principle for elastic net, 
\[
n^* = \argmin_n \{ t_n |   \N{\zbs_k-\zbs_{k+1}}_2 \leq \frac{4 C}{\sqrt{d}\alpha \mu_0 q^{k+1} }, k =N_{\max} -1 ,\ldots,n     \},
\]
The method stops the first time two solutions are sufficiently far apart. The constant $C$ needs to be selected, and in our experience this task requires a delicate touch.

\paragraph{(Monte-Carlo) Generalized Cross-Validation [GCV]\\}

The rule stems from the ordinary cross-validation, which considers all the \emph{leave-one-out} regularized solutions and chooses the parameter that minimizes the average of the squared prediction errors. 
Specifically, GCV selects $n$ according to 
\begin{equation}\label{eqn:gcv} n_\ast = \argmin_{n\leq N_{\max}} \frac{m^{-1}\N{\Amat\zbs_n - \ybs}_2^2}{\LRP{m^{-1} \tr(\Id_m - \Amat\Amat_n^{-1})}^2},\end{equation}
where $\Amat_n^{-1}$ is the map such that $\zbs_n = \Amat_n^{-1} (\ybs)$.
In the case of elastic nets the map $\Amat_n^{-1}$ is not linear and, thus, we cannot assign a meaning to its trace.
Instead, we follow the ideas of \cite{GEE11} and estimate the trace stochastically using only one data sample.

\paragraph{Nonlinear Generalized Cross-Validation [NGCV]\\}
In \cite{F05} the authors reconfigure GCV for non-linear shrinkage methods, and $n$ is selected according to
\[ n_\ast = \argmin_{n\leq N_{\max}} \frac{\N{\Amat\zbs_n - \ybs}_2^2}{m^{-1}\LRP{1- ds/m}^2},\]
where $s = \frac{\N{\zbs_n}_\gamma}{\N{\zbs^\dagger}_\gamma}$ with $\N{\cdot}_\gamma := \N{\cdot}_1+\alpha\N{\cdot}^2_2.$

\begin{table}[th]
\centering
\ra{1.1} 
\begin{tabular}{cccccc}Method & $\frac{|t_{opt} - \widehat{t}|}{t_{opt}}$& $\frac{\|\xbs - \zbs^{\widehat{t}}\|}{\|\xbs\|}$& $\text{FDP}(\widehat{t})$& $\text{TPP}(\widehat{t})$& $\genfrac{}{}{0pt}{}{\textrm{computational}}{\textrm{time [s]}}$\\
\hline\hline
$x_{t_{opt}}$ & 0 & 0.0984 & 0.1583 & 1.000 & 0\\
\hline
$\widehat\xbs$ & N/A & 0.1004 & \textbf{0.000} & \textbf{1.000} & N/A\\
OptEN & \textbf{0.0254} & \textbf{0.0994} & 0.160 & \textbf{1.000} & 3.716\\
DP & 0.1196 & 0.1118 & 0.087 & \textbf{1.000} & 0.7421\\
ME & 0.2493 & 0.1376 & 0.010 & \textbf{1.000} & \textbf{0.1757}\\
QO & 0.4543 & 0.1843 & 0.716 & \textbf{1.000} & 7.193\\
LC & 0.1414 & 0.1174 & 0.188 & \textbf{1.000} & 1.564\\
BP & 0.0877 & 0.1057 & 0.095 & \textbf{1.000} & 36.77\\
ENBP & 0.2728 & 0.1450 & \textbf{0.007} & \textbf{1.000} & 4.355\\
GCV & 0.4548 & 0.1844 & 0.716 & \textbf{1.000} & 15.91\\
NGCV & 0.3597 & 0.1577 & 0.638 & \textbf{1.000} & 7.306
\end{tabular} 
\caption{Comparison of errors for regularization parameter selection methods, with an injective matrix $\Amat\in\bbR^{500\times 100}$ and $h=10$, $\alpha=10^{-3}$, $\sigma=0.3$. The values are averages over $100$ independent runs}
\label{tab:rando_label_3119} 
\end{table}

\paragraph{Comparison/Error.} 
For each method we compute: the (normalized) error in approximating the optimal regularization parameter, the (normalized) error, false discovery proportion (FDP), true positive proportion (TPP), and the computational time.
TPP and FDP are measures that quantify the notions of true and false discovery of relevant features in sparsity based regression tasks \cite{SBC16}.
FDP is the ratio between false discoveries and the total number of discoveries,
\[ \operatorname{FDP}(t) = \frac{\#\left[{j:\, \zsf^t_j \neq 0 \textrm{ and } \xsf_j = 0}\right]}{\max\LRP{\#\left[{j:\, \zsf^t_j \neq 0}\right], 1}}.\]
TPP on the other hand is the ratio between  true (\emph{i.e.} correct) discoveries in the reconstruction and true discoveries in the original signal,
\[ \operatorname{TPP}(t) =\frac{\#\left[{j\in\{1,\ldots, h\}:\, \zsf^t_j \neq 0 \textrm{ and } \xsf_j \neq 0}\right]}{h} .\]
Thus, to recover the structure of the original sparse data we want FDP close to $0$ and TPP close to $1$.
It is known that there is often an explicit (and sometimes even quantifiable) trade-off between FDP and TPP, in the sense that the support overestimation is an (undesirable) side-effect of full support recovery. 
In other words, a consequence of true support discovery is often a non-trivial false support discovery \cite{SBC16}.
When computing FDP and TPP we will rather than demand for an entry to be exactly zero, instead threshold the values (with $0.5$ being the threshold).

\paragraph{Testing setup.} 
To compute the true optimal parameter $t_{opt},$ we run a dense grid search on $[0,1]$ using the true expected loss $\N{\zbs^t-\xbs}_{2}^2$. 
As suggested in \cite{zbMATH05929140} we use $\tau=1$ and provide the true noise level $\sigma$ for discrepancy principle and monotone error rule; balancing principle uses $\kappa = 1/4$ and true $\sigma$; elastic net balancing principle uses $C=1/2500$. 
The parameter grid for DP, ME, BP, QO, LC, BP, GCV, and NGCV is defined by $\mu_0=1$, $q=0.95$, and $N_{\max}= 100$ (thus, $t_0=0.5$ and $t_{N_{\max}}= 0.9941143171)$, whereas for  
ENBP, we use $t_0=0.05$, $q=1.05$, and $N_{\max}= 100$.
The tests are conducted for $m\in\{500, 900\},\,d\in\{100,200\}$ and $h\in\{10,20,30\}$, where all combinations of $\alpha\in\{10^{-5}, 10^{-3},10^{-2},10^{-1}\}$ and $\sigma\in\{0.05, 0.1, 0.2, 0.3\}$ are considered.
To compute the {empirical estimator} $\widehat{\xbs}$, we generate $N = 50$ independent random samples of the training data ($\xbs, \ybs$).

Results in Table \ref{tab:rando_label_3119} are averaged over $100$ independent runs for $\alpha=10^{-3}$, $m=500$, $d=100$, $h=10$, where $\CV=\Span{\ebs_1,\ldots,\ebs_h}$, and $\sigma =0.3$, which corresponds to $\operatorname{SparseSNR}\approx 0.17$. 
The first row in the table, $x_{t_{opt}}$, describes the elastic net minimizer for which the \emph{true} optimal regularization parameter is provided. 

\paragraph{Discussion.}
OptEN always returns the value which is the closest to the optimal regularization parameter, and its results are in general comparable to the ones provided by the minimizer with the optimal parameter. 
However, one can observe that other methods, \emph{e.g.}, discrepancy principle, provide a better balance between FDP and TPP (returning solutions that are more sparse), though at a cost of a larger approximation error. 
Balancing principle  also provides very good results, but it is slow unless an effort is made to improve its computational time.
Moreover, we observed that the performance of all methods that require the noise level $\sigma$ to be known deteriorates if we do not provide the exact value of the noise level, but only its rough estimate.

The overall results are mostly consistent over all experimental scenarios we looked at, with a couple of exceptions.
As expected, FDP and estimation errors deteriorate  not only for larger $\sigma$ but also for larger $\alpha$, though the ranking of the methods and the patterns of behavior remain the same.
This is due to the fact that as $\alpha$ increases elastic nets sacrifice sparsity for smoothness. 
The {empirical estimator} $\widehat\xbs$ is a very accurate estimator of the original signal, and it sometimes outperforms even the elastic net solution that uses the optimal parameter. However, as has been observed in \cite{DFN16} for Tikhonov regularization and confirmed in Figure \ref{fig:noise_levels}, the performance of the {empirical estimator} worsens in the small noise regime.

\paragraph{Comparison with {empirical estimator}: effects of $\sigma$ and $N_{\text{train}}$\\}
We study the behavior of the relative estimation error with respect to $\sigma$ and the number of training samples.
We compare OptEN with the {empirical estimator} $\widehat\xbs$, DP, NGCV, and BP. 
We use $m=500,\,d=100, \,h=10$ and \ref{data:D1}-\ref{data:D3} with $\sigma$ ranging from {$0.1$ to $0.5$} in the first experiment, whereas we vary the number of training samples $N$ from $20$ to $60$ in the second experiments as depicted in Figure \ref{fig:noise_levels}. 
Our method again outperforms other considered parameter selection rules.
On the other hand, the {empirical estimator} performs  slightly better than OptEN for larger noise levels (it is also better than the elastic nets solution with the optimal parameter), and it performs worse for lower noise levels.
This is essentially due to the fact that $\widehat{\xbs}$ is never truly sparse, but has a lower (thresholded) FDP.
Namely, as a projection onto an $h$-dimensional space the non-zero entries of $\widehat{\xbs}$ are very small, whereas the non-zero entries obtained with elastic net are larger and their size depending on the noise level.
The parameter $\alpha$ plays a similar role; for small $\alpha$ OptEN beats $\widehat\xbs$, and for larger $\alpha$ the situation is reversed.

\begin{figure}[ht!]
\centering
\begin{minipage}[b]{0.49\linewidth}
\centering
\includegraphics[width=\textwidth]{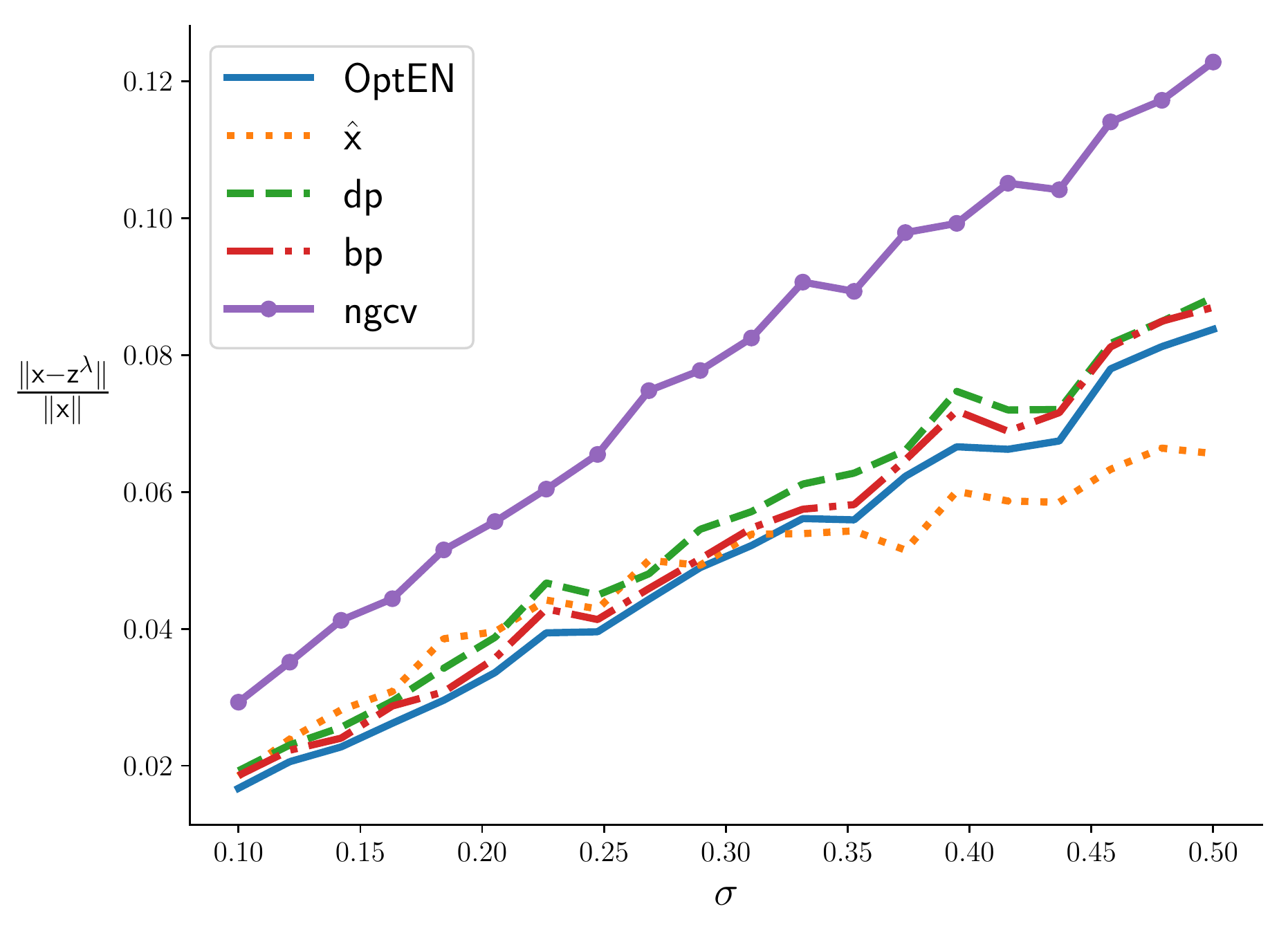}
\end{minipage}
\begin{minipage}[b]{0.49\linewidth}
\centering
\includegraphics[width=\textwidth]{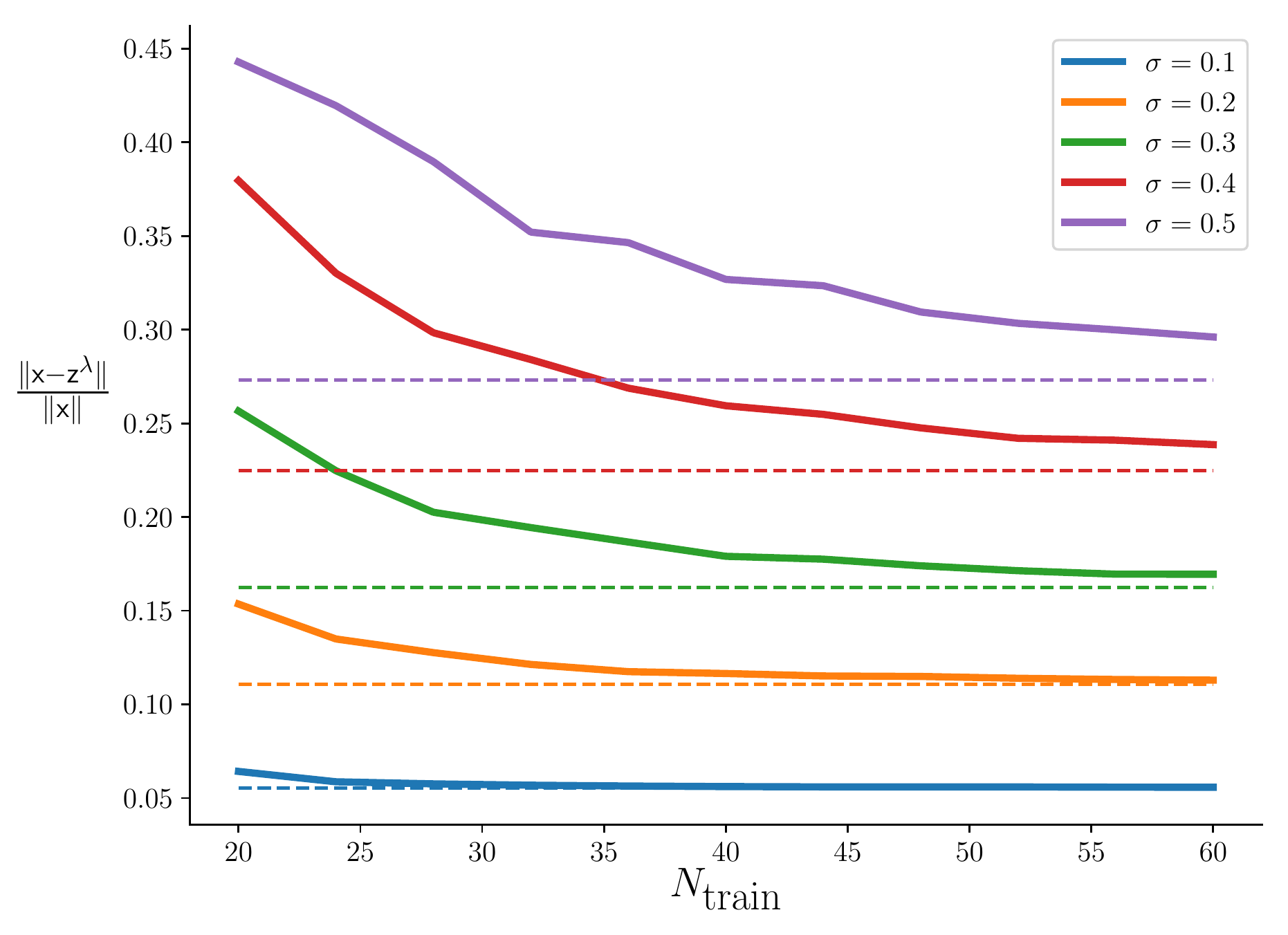}
\end{minipage}
\caption{In the left panel is the behavior of the {empirical estimator}, OptEN, discrepancy principle, balancing principle, and nonlinear GCV with respect to $\sigma$ is shown. 
In the right panel the behavior of our method for different values of $\sigma$ as the number of samples $N$ increases is shown. 
Dashed lines represent the error achieved by taking the true optimal parameter for the corresponding $\sigma$}
\label{fig:noise_levels}
\end{figure}

\paragraph{Non-injective matrices\\}
We now conduct experiments with non-injective matrices. 
The setting is as in Table \ref{tab:rando_label_3119}, where now $\Amat\in\bbR^{500\times 100}$ with $\rank\LRP{\Amat}=40$. 
As mentioned in Section \ref{subsec:qlf}, we test our method by minimizing the projected loss functional $\widehat{R}_\FP$ and the modified error functional $\widehat{R}_\FM$.
The results can be found in Table \ref{tab:rando_label_7614}.
Our method (using both the projected and modified functionals) again outperforms standard parameter selection rules in terms of the precision accuracy, and loses out to some methods when it comes to FDP and TPP.
We also observe that the performance of the {empirical estimator} deteriorates and $\widehat\xbs$ indeed should not be used as the solution itself but some additional regularization is required.
\begin{table}[th]
\centering
\ra{1.1} 
\begin{tabular}{cccccc}Method & $\frac{|t_{opt} - \widehat{t}|}{t_{opt}}$& $\frac{\|\xbs - \zbs^{\widehat{t}}\|}{\|\xbs\|}$& $FDP(\widehat{t})$& $TPP(\widehat{t})$& $\genfrac{}{}{0pt}{}{\textrm{computational}}{\textrm{time [s]}}$\\
\hline\hline
using $t_{opt}$ & 0 & 0.5704 & 0.4918 & 0.9450 & 0\\
\hline
{empirical estimator} $\widehat\xbs$ & N/A & 0.7978 & 0.8047 & \textbf{1.000} & N/A\\
projected OptEN & \textbf{0.0718} & \textbf{0.6033} & 0.507 & 0.930 & 8.16\\
modified OptEN& 0.0763 & 0.6046 & 0.497 & 0.926 & 15.57\\
DP & 0.1316 & 0.6343 & 0.528 & 0.926 & 1.93\\
ME & 0.3203 & 0.7234 & \textbf{0.290} & 0.765 & \textbf{0.15}\\
QO & 0.3167 & 0.7857 & 0.819 & 0.997 & 7.04\\
LC & 0.3389 & 0.7426 & 0.304 & 0.749 & 7.00\\
GCV & 0.3172 & 0.7865 & 0.819 & 0.997 & 14.04\\
NGCV & 0.3172 & 0.7865 & 0.819 & 0.997 & 7.06\\
BP & 0.2636 & 0.7179 & 0.757 & 0.974 & 35.01\\
ENBP & 0.2133 & 0.6549 & 0.362 & 0.847 & 3.69
\end{tabular} 
\caption{Comparison of errors for regularization parameter selection methods, with a matrix $\Amat\in\bbR^{500\times 100}$, $\rank(\Amat)=40$, and $h=10$, $\alpha=10^{-3}$, $\sigma=0.3$. The values are averages over $100$ independent runs}
\label{tab:rando_label_7614} 
\end{table}

\subsection{Image denoising}\label{sec:wavelet_denoising}
The task of image denoising is to find an estimate $ \boldsymbol Z$ of an unknown image $\boldsymbol{X}$ from a noisy measurement $\boldsymbol{Y}$, where 
$\boldsymbol{Y} = \boldsymbol X + \sigma\boldsymbol{\Xi}$, and $\boldsymbol{\Xi}$ denotes isotropic white noise. The goal is to improve the image quality by removing noise while preserving important image features such as edges and homogeneous regions.

There are a large number of methods addressing image denoisig , starting from 'classical' wavelet thresholding  \cite{DJ94, D95} and non-linear filters, to stochastic and variational methods \cite{CL97,MR1669536}. 
Since the primary goal of this paper is to evaluate how does the proposed approach perform as a parameter selection method for the elastic net, here we only compare  our method with other -art parameter selection methods for elastic nets, and do not compare elastic nets with image denoising methods in general. 
In particular, we compare OptEN with the discrepancy principle and the balancing principle (\emph{i.e.} the top performers from previous experiments).
In all cases the results show that OptEN has superior performance and selects nearly optimal parameters, see Table \ref{tab:iqas}. 

\paragraph{Wavelet-based denoising.}  
We denoise the noisy image $\boldsymbol{Y}$ by minimizing
\begin{equation}\label{eq:wavelet}  (1-t)\N{\boldsymbol{Z} - \boldsymbol{Y}}_2^2 + t( \N{\bbW\boldsymbol{Z}}_1 + \alpha\N{\boldsymbol{Z}}^2_2), \textrm{ for } \boldsymbol{Z}\in[0,1]^p,\end{equation}
 where $\bbW$ is the  wavelet transform using the family of \texttt{db4} wavelets, and $\alpha  = 10^{-3}$. %
Wavelet transform sparsify natural images, and we thus select the {empirical estimator} in the wavelet domain.
Moreover, in the limit with respect to the number of samples $N\rightarrow\infty$, the empirical projection $\widehat \Pi Y$ is for a given $h$  equivalent to a hard thresholding of $\bbW\boldsymbol{Y}$ that preserves its $h$ largest wavelet coefficients. 
Thus, for image denoising we do not use samples $\boldsymbol{Y}_i$ but instead only threshold $\bbW\boldsymbol{Y}$ for a well chosen $h$.
Here $h$ cannot be chosen by searching for a gap in $\bbW\boldsymbol{Y}$, since it most often does not exist.
Instead, we say that the \emph{true} $h$ is the one that minimizes the MSE of the reconstructed image.
In our first set of experiments the {empirical estimator} $\widehat{\boldsymbol{X}}$  is chosen by hard thresholding $\boldsymbol{Y}$, where $h$ is optimal.

\subsubsection{Denoising with an oracle \it{\normalfont{h}}}\label{sec:OracleH}

\paragraph{Data and learning setup.} 
We consider five grayscale images: \texttt{space shuttle}, \texttt{cherries}, \texttt{cat}, \texttt{mud flow}, and \texttt{IHC}, each of size $512\times512$ pixels. 
For BP and DP the regularization parameter is selected from a sequence of parameter values  $t_n = \frac{1}{1+\mu_0 q^n}$ with $\mu_0=1$, $q=0.95$, and $N_{\max} = 100,$ same as before. 
Moreover, we fix $\tau=1$, $\kappa=1/4$ for BP and DP, and provide them with the true noise level. 
For OptEN
the {empirical estimator} is computed with an oracle $h$, \emph{i.e.} the one returning the lowest MSE.

\paragraph{Comparison/Error.} 
We use two  performance metrics: peak signal-to-noise ratio (PSNR) and the similarity index (SSIM) between the original image $\boldsymbol{X}$ and the recovered version $\boldsymbol{Z}$.
PSNR is a standard pixel-based performance metric, defined through the MSE by
\[\quad \text{PSNR}(\boldsymbol{X},\boldsymbol{Z}) = 10 \log_{10}\LRP{\frac{\max_{1\leq i\leq p} X_i - \min_{1\leq i\leq p} X_i}{\text{MSE}(\boldsymbol{X},\boldsymbol{Z})}},  \text{MSE}(\boldsymbol{X},{\boldsymbol{Z}}) = \frac{1}{p} \N{\boldsymbol{X} - \boldsymbol{Z}}^2.\]
MSE and PSNR are ubiquitous in image and signal analysis due to their simplicity and suitability for optimization tasks, but are also infamous for their inability to capture features salient for human perception of image quality and fidelity \cite{WB+04}.
SSIM on the other hand, is a structure-based performance metric that tries to address this issue by using easy-to-compute structural statistics to estimate image similarity.
It is defined through
  \[\text{SSIM}(\boldsymbol{X},\boldsymbol{Z})=\LRP{\frac{2{\overline{\boldsymbol{X}}}\,{\overline{\boldsymbol{Z}}} + C_1}{{\overline{\boldsymbol{X}}}^2{\overline{\boldsymbol{Z}}}^2 + C_1}}\LRP{\frac{2 std(\boldsymbol{X}) std(\boldsymbol{Z}) + C_2}{std(\boldsymbol{X})^2std(\boldsymbol{Z})^2 + C_2}},\]
where $\overline{\boldsymbol{X}},\,\overline{\boldsymbol{Z}}$ are the means, and $std(\boldsymbol{X}),\,std(\boldsymbol{Z})$ are the standard deviations of pixels of corresponding images $\boldsymbol{X}$ and $\boldsymbol{Z}$, and $C_1,C_2$ are positive constants\footnote{We take $C_1 = 0.01,\, C_2 = 0.03$ by the convention of \texttt{python}'s \texttt{skimage} package}.

\begin{table*}\centering
\ra{1.1}
{\footnotesize
\begin{tabular}{@{}cccccccccccc@{}}
& \multicolumn{5}{c}{{\normalsize PSNR}} && \multicolumn{5}{c}{{\normalsize SSIM}} \\
\cmidrule{2-6} \cmidrule{8-12}  & $\genfrac{}{}{0pt}{}{\textrm{using}}{t_{\textrm{opt}}}$ & noisy 
& OptEN & DP  & BP && $\genfrac{}{}{0pt}{}{\textrm{using}}{t_{\textrm{opt}}}$ & noisy& OptEN & DP & BP  \\
\midrule
$\genfrac{}{}{0pt}{}{\textrm{\textsf{space}}}{\textrm{\textsf{shuttle}}}$\\
$\sigma=0.05$ & 32.33& 26.27& \textbf{32.32}& 30.26& 30.63 && 0.929& 0.687& \textbf{0.929}& 0.850& 0.862\\
$\sigma=0.075$ & 30.26& 22.87& \textbf{30.25}& 27.32& 27.69 && 0.908& 0.516& \textbf{0.908}& 0.748& 0.765\\
$\sigma=0.1$ & 28.77& 20.50& \textbf{28.76}& 25.32& 25.60 && 0.892& 0.394& \textbf{0.891}& 0.660& 0.676\\
\midrule
\textsf{cherries}\\
$\sigma=0.05$ & 35.80& 26.08& \textbf{35.79}& 30.52& 31.08 && 0.973& 0.647& \textbf{0.972}& 0.837& 0.855\\
$\sigma=0.075$ & 33.59& 22.63& \textbf{33.59}& 27.40& 27.77 && 0.964& 0.463& \textbf{0.964}& 0.720& 0.737\\
$\sigma=0.1$ & 31.94& 20.24& \textbf{31.82}& 25.26& 25.54 && 0.958& 0.339& \textbf{0.942}& 0.617& 0.633\\
\midrule
\textsf{cat}\\
$\sigma=0.05$ & 29.34& 26.02& \textbf{29.33}& 28.72& 28.93 && 0.890& 0.767& \textbf{0.890}& 0.861& 0.868\\
$\sigma=0.075$ & 27.06& 22.51& \textbf{27.03}& 25.85& 26.04 && 0.825& 0.612& \textbf{0.823}& 0.758& 0.767\\
$\sigma=0.1$ & 25.69& 20.04& \textbf{25.24}& 23.88& 24.03 && 0.771& 0.486& \textbf{0.741}& 0.664& 0.671\\
\midrule
\textsf{mud flow}\\
$\sigma=0.05$ & 28.38& 26.02& \textbf{28.37}& 28.20& 28.30 && 0.870& 0.777& \textbf{0.869}& 0.856& 0.860\\
$\sigma=0.075$ & 26.07& 22.50& \textbf{26.06}& 25.46& 25.60 && 0.795& 0.629& \textbf{0.795}& 0.755& 0.762\\
$\sigma=0.1$ & 24.71& 20.01& \textbf{24.46}& 23.58& 23.70 && 0.735& 0.506& \textbf{0.717}& 0.662& 0.668\\
\midrule
\textsf{IHC}\\
$\sigma=0.05$ & 29.33& 26.02& \textbf{29.33}& 28.73& 28.92 && 0.890& 0.767& \textbf{0.889}& 0.861& 0.868\\
$\sigma=0.075$ & 27.06& 22.51& \textbf{27.04}& 25.86& 26.04 && 0.825& 0.612& \textbf{0.823}& 0.759& 0.767\\
$\sigma=0.1$ & 25.70& 20.04& \textbf{25.39}& 23.85& 24.04 && 0.772& 0.486& \textbf{0.748}& 0.662& 0.671\\
\bottomrule
\end{tabular}}
\caption{Results on wavelet denoising of noisy images with different noise levels using elastic nets minimization. Each column defines the method used to select the regularization parameter. In bold is the method that achieved the best result. Columns titled \emph{noisy} correspond to PSNR and SSIM values of the initial noisy image. Columns titled \emph{using $\lambda_{\textrm{opt}}$} correspond to the best values achievable for the selected elastic nets functional, where we find the optimal parameter by a grid search on the true loss functional}
\label{tab:iqas}
\end{table*}

\begin{figure}
\includegraphics[width=\textwidth]{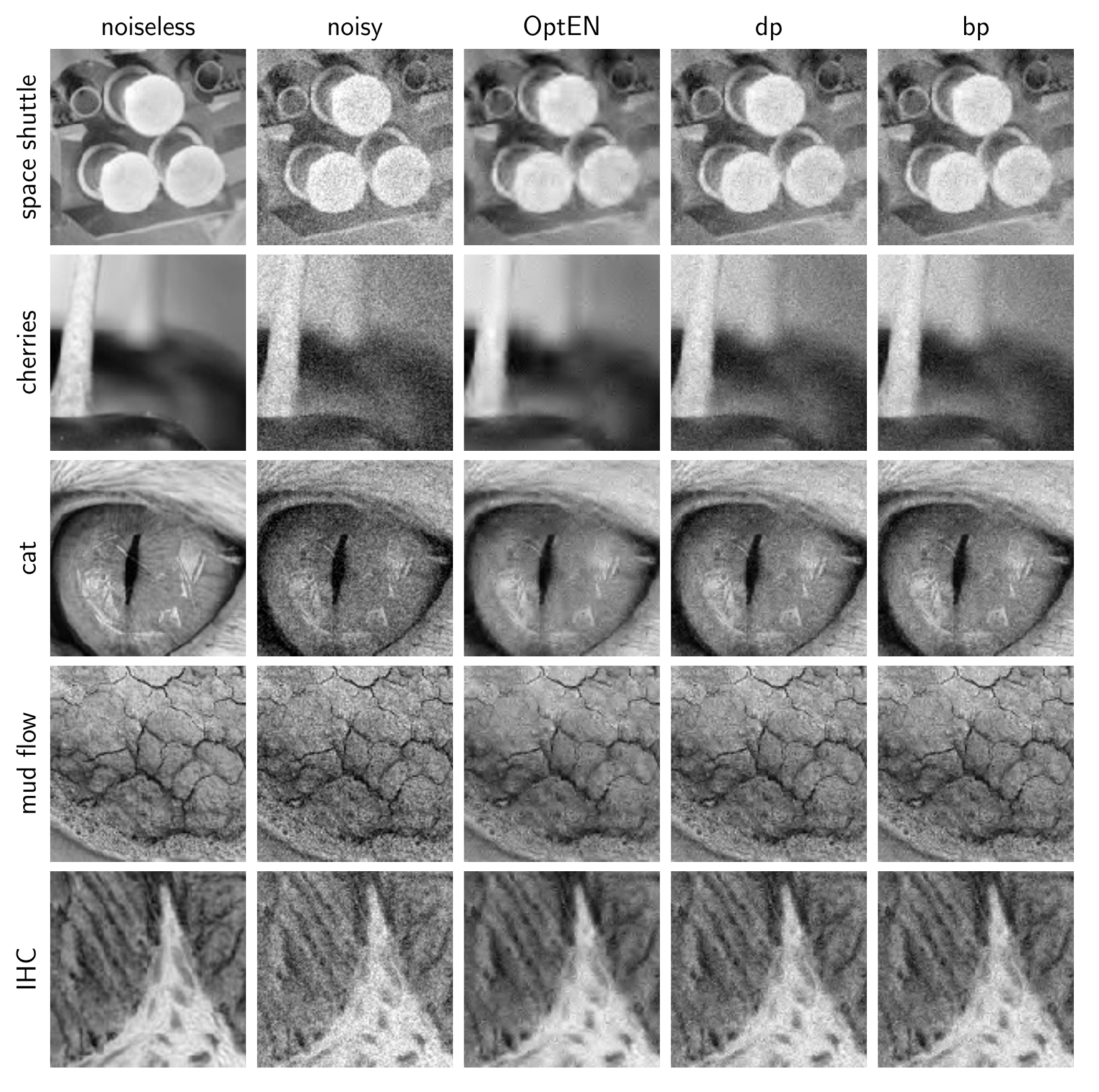}
\caption{Comparing the effects of denoising for different parameter selection rules}\label{fig:visual}
\end{figure}
Table \ref{tab:iqas} provides the PSNR and SSIM values generated by all algorithms on the considered images, while Figure \ref{fig:visual} shows the result of denoising on a $128\times128$ detail of each image for $\sigma=0.075$.
We can see that our method achieves the highest PSNR on all images and that this effect is more pronounced for larger noise values.
\subsubsection{Denoising with a heuristically chosen $h$}
{\begin{figure}
\centering
\includegraphics[width=0.5\textwidth]{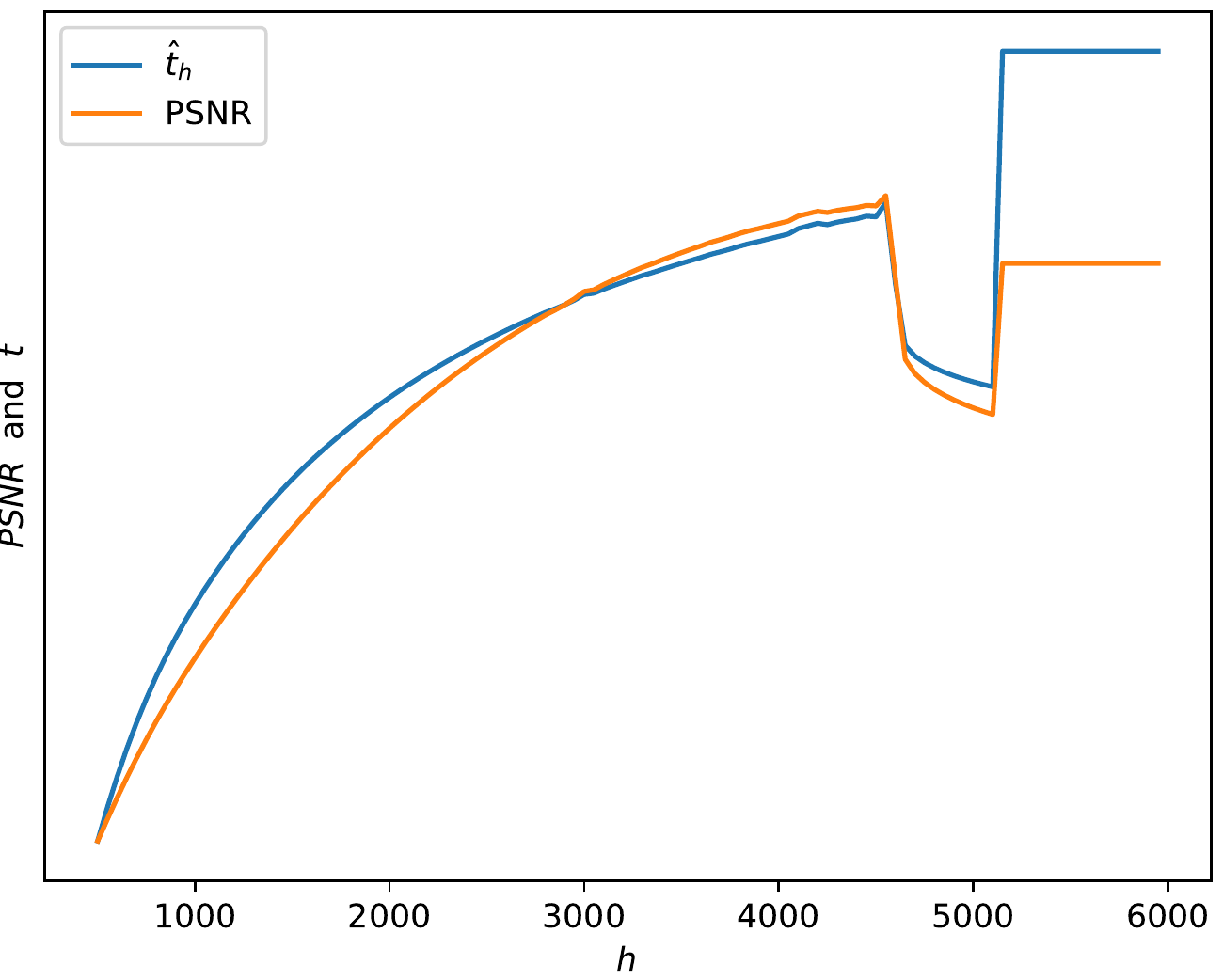}
\caption{A visual representation of our heuristic criteria for selecting $h$. Here we choose $h=4400$. The flat line at the end corresponds to $\hat{t}_k=1$.}
\label{fig:h_criteria}
\end{figure}
In this set of experiments we study the performance of our method in a situation where the optimal $h$ is not known \emph{a priori}.
We run experiments on a real-world dataset of brain images\footnote{Obtained from \url{http://nist.mni.mcgill.ca/?page_id=672}, therein referred to as \emph{group 2}}, 
 which  consists of \emph{in vivo} MRIs of 13 patients with brain tumor, taken pre-surgery.
For each patient we took an MRI slice, isolated the area around the brain and then added additional isotropic white noise with $\sigma\in\{0.05, 0.075, 0.1\}$.
{We then select $h$ by a heuristicaly driven procedure.
Namely, for each image $\boldsymbol{Y}$ we set an initial $h_0\in\bbN$ and determine $\hat{t}_0$ by performing Algorithm \ref{alg:algo}, where $\widehat{\boldsymbol{X}}_{h_0}$ is constructed by taking $h_0$ largest coefficients of $\bbW \boldsymbol{Y}$.
We then set $h_1= h_0+h_\text{step}$, repeat the procedure, and continue iteratively for $h_k$.
The iterations are stoppped once the corresponding  $\hat{t}_k$ start to decrease or become discontinuous (since heuristically this corresponds to a decrease in the PSNR of the corresponding elastic-net regularized solution)}.
$h_0$ and $h_\text{step}$ are chosen according to the size of the image.
The behaviour of this criteria can be seen in Figure \ref{fig:h_criteria}, and it shows that if $h$ is too large the empirical estimator is virtually the same as $\boldsymbol{Y}$. In other words, the minimizer of $\N{\boldsymbol{Z}^t-\widehat{\boldsymbol{X}}_h}$ is $t=1$ (\emph{i.e.} $\lambda=0$), which we observe in Figure \ref{fig:h_criteria}.}

The resulting reconstruction for $\sigma=0.1$ can be seen in Figure \ref{fig:brains} on four images.
The effects of denoising are visually not as striking as the results in Section \ref{sec:OracleH}.
We attribute this to the fact that PSNR gains with the best possible choice of parameter using elastic net are quite small, namely, PSNR of the noisy image improves only by around $5-7\%$, when taking the optimal parameter (see Table \ref{tab:brainpsnr}).
Other parameter selection rules (discrepancy principle and balancing principle) did not  improve the PSNR and are thus not presented.
\begin{table}[ht!]
\centering
\ra{1.1} 
\begin{tabular}{ccc}
using $t_{\textrm{opt}}$ & noisy & OptEN
\\
\hline\hline
28.608 & 27.297 & 28.432 \\
28.329 & 26.792 & 28.079 \\
28.221 & 26.735 & 27.935 \\
28.501 & 26.906 & 28.240
\end{tabular} 
\caption{PSNR values for the results in Figure \ref{fig:brains}}
\label{tab:brainpsnr} 
\end{table}

\section{Conclusion and future work}\label{sec:conclusion}
{In this paper, we presented an approach for the estimation of the optimal regularization parameter for elastic net.
The theoretical guarantees are possible only in simplified scenarios but we used insights gained therein to steer and create an efficient algorithm.
The algorithm exhibits excellent prediction accuracy, including in cases when there are no theoretical guarantees.
Comparison with state-of-the-art methods show a clear superiority of our method, under the studied testing scenarios.
Moreover, whereas other studied methods require adjsting a number of additional parameters in order to achieve satisfactory results, our method is entirely autonomous given a sufficient number of training samples.

\begin{figure*}[th!]
\centering
{%
  \includegraphics[clip,width=0.9\textwidth]{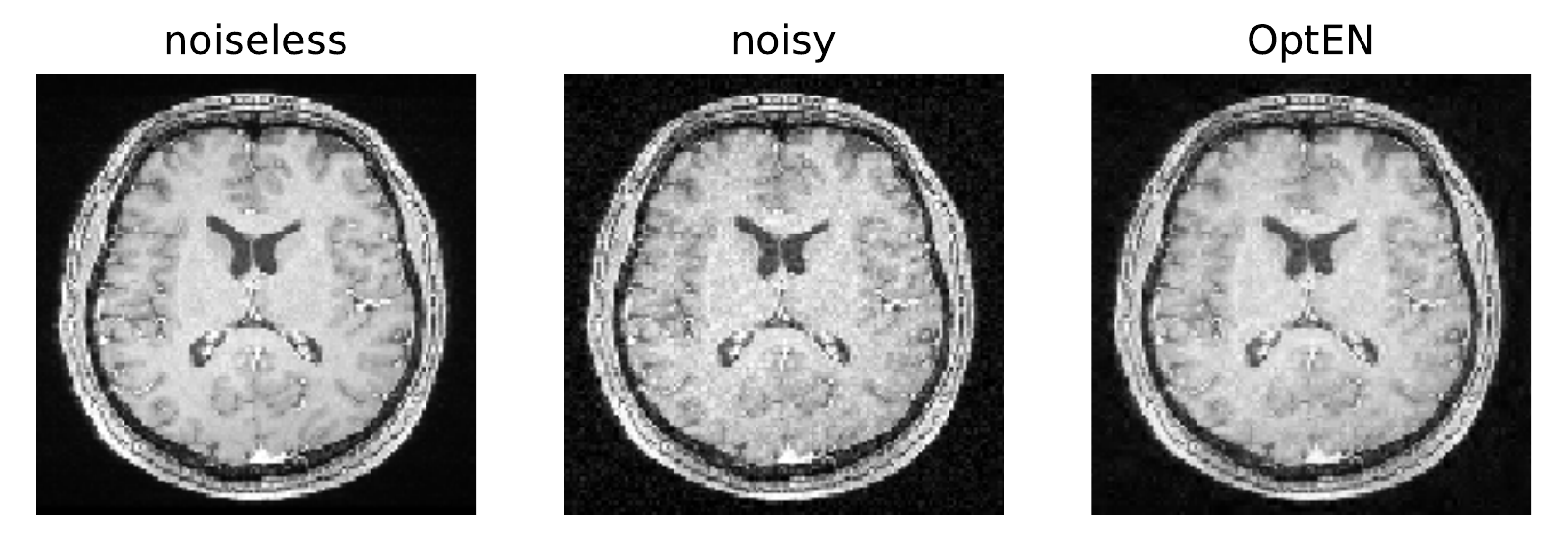}%
}
{%
  \includegraphics[clip,width=0.9\textwidth]{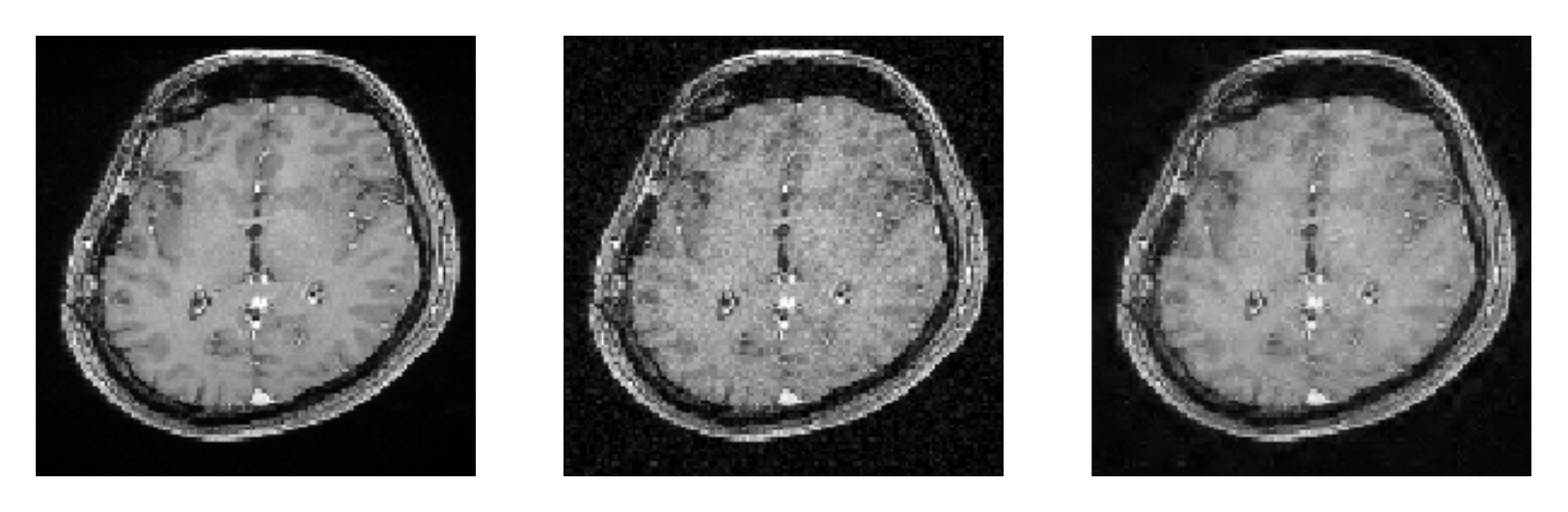}%
}
{%
  \includegraphics[clip,width=0.9\textwidth]{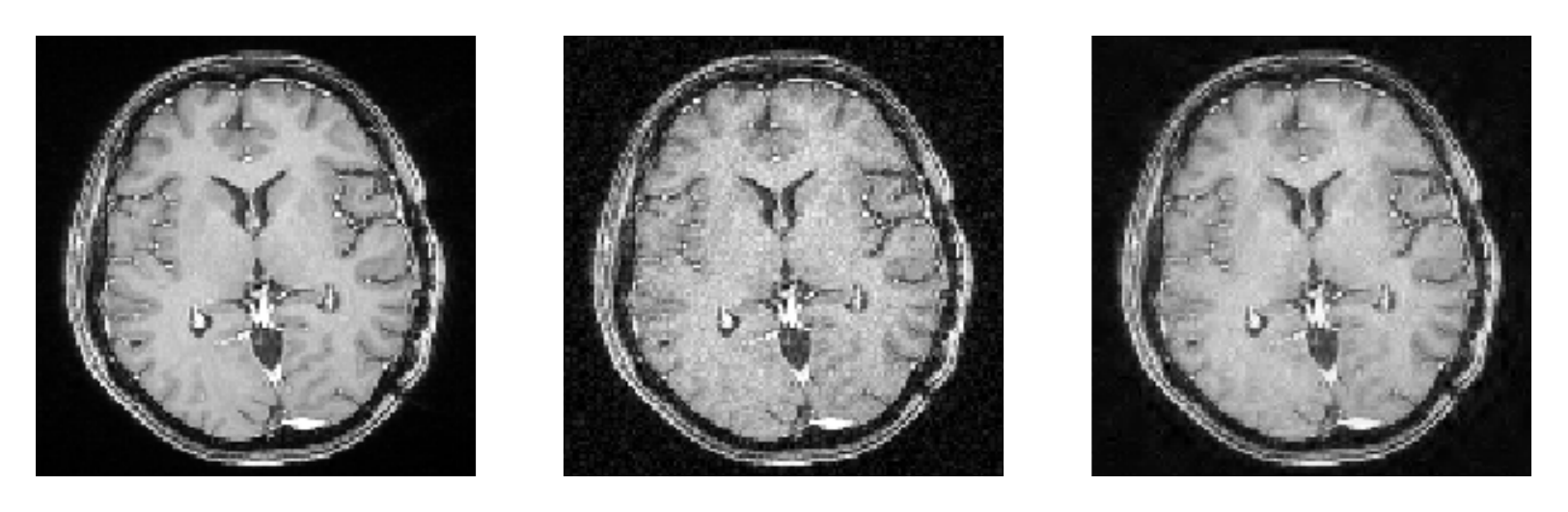}%
}
{%
  \includegraphics[clip,width=0.9\textwidth]{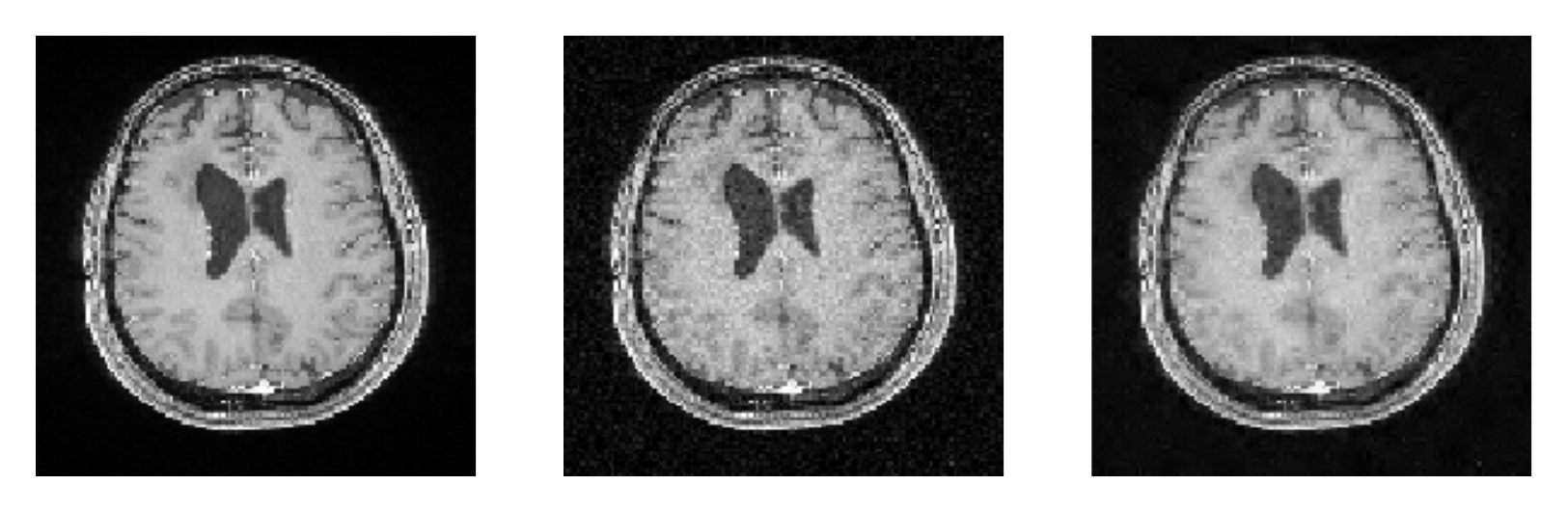}%
}
\caption{Denoising results on a brain image data set. The PSNR values are in Table \ref{tab:brainpsnr}}\label{fig:brains}
\end{figure*}

We aim to use the ideas presented in this paper in further studies.
Namely, we will study the behavior of the solution with respect to the other hyperparameter; $\alpha$, and consider other optimization schemes, predominantly focusing on imaging applications. 
We will also work on developing an optimization scheme for a joint minimization of both the regularisation functional and the loss functional for the regularization parameter.
}
\section*{Acknowledgements}
V. Naumova and Z. Kereta acknowledge  the support from RCN-funded FunDaHD project No 251149/O70.
E. De Vito is a member of the Gruppo Nazionale per l’Analisi Matematica, la Probabilità e le loro Applicazioni (GNAMPA) of the Istituto Nazionale di Alta Matematica (INdAM).
\appendix
\section{Appendix - Supplementary proofs}

\subsection{Proofs for Theorem \ref{lem:proj_error}}\label{sec:bounded_y_covs}
We will here add an analogue of equation \eqref{eqn:N_requirement} for the case of bounded $\ybs$.
Assume $\N{\ybs}_2\leq \sqrt{L}$ holds almost surely and consider a random matrix $\Rmat = \ybs^\top\ybs$.
Then $\bbE\Rmat=\Sigma(\ybs)$, and $\N{\Rmat} \leq L$.
Furthermore, $\Rmat^\top=\Rmat$ and 
\[ m_2(\Rmat) = \max \left\{ \N{\bbE[\Rmat\Rmat^\top]}, \N{\bbE[\Rmat^\top\Rmat]} \right\} = \N{\bbE\Rmat^\top\Rmat}\leq L\N{\Sigma(\ybs)}.\]
Let now $\ybs_i\sim\ybs$ and define a family of independent $m\times m$ matrices 
\[ \Rmat_i = \ybs_i^\top\ybs_i, \quad i=1,\ldots, N,\]
so that $\Rmat_i\sim \Rmat$.
The empirical covariance $\widehat{\Sigma}(\ybs)=\frac{1}{N}\sum_{i=1}^N\Rmat_i$ is then the matrix sampling estimator and by Corollary 6.2.1 from \cite{Tro15} we have that for all $s\geq 0$
\[\bbP\LRP{\N{\widehat{\Sigma}(\ybs)-\Sigma(\ybs) } \geq s } \leq 2m\exp\LRP{-\frac{Ns^2/2}{m_2(\Rmat) + 2Ls/3}}.\]
Writing now 
\( 2m\exp\LRP{-\frac{Ns^2/2}{m_2(\Rmat) + 2Ls/3}} = \exp(-u),\)
we have a quadratic equation for $s$, whose solution is
\[ s = \frac{4L\varepsilon + \sqrt{(4L\varepsilon)^2+ 72 Nm_2(\Rmat)\varepsilon }}{6N}\]
for $\varepsilon := u + 4\log(2m)$.
It then follows 
\begin{align*} 
s&\leq \frac{4L\varepsilon + 3\sqrt{2Nm_2(\Rmat)\varepsilon}}{3N} \leq \frac{\max(4L, 3\sqrt{2m_2(\Rmat)})}{3}\,\frac{\varepsilon + \sqrt{N\varepsilon}}{N}= C \LRP{\frac{u+\log(2m)}{N} + \sqrt{\frac{u+\log(2m)}{N}}}
\end{align*}
for $C=\frac{\max(4L, 3\sqrt{2m_2(\Rmat)})}{3}$.
Plugging it all together we have that with probability at least $1-\exp(-u)$
\[ \N{\widehat{\Sigma}(\ybs)-\Sigma(\ybs) } \lesssim \frac{u+\log(2m)}{N} + \sqrt{\frac{u+\log(2m)}{N}}.\]
Thus, provided $N\gtrsim u+\log(2m)$ we have \[\N{\widehat{\Sigma}(\ybs)-\Sigma(\ybs) }\leq \lambda_h/2.\]

\subsection{Computations for $\alpha\neq 1$ in Section \ref{sec:bernoulli_noise}}\label{app:alphaneq1}
Let \(\ybs = \xbs + \sigma\wbs,\) where \(\bbP\LRP{ \wsf_i = \pm 1} = \frac{1}{2},\)
and assume \( \xbs=(\xsf_1,\ldots,\xsf_h,0,\ldots,0)^\top,\) and \(\SN{\xsf_i}\geq 2\sigma,\) for  \(i=1,\ldots, h\).
In the following we will use $\vbs_{1:k}$ to denote a vector in $\bbR^k$ that consists of the first $k$ entries of a vector $\vbs\in\bbR^m$, and denote $\Veta=\sigma\wbs$.
In Section \ref{sec:error_analysis} we showed that the minimum of $R(t)=\N{\zbs^t(\ybs) - \xbs}_2^2$  and $\widehat R(t)=\N{\zbs^t(\ybs) - \widehat\xbs}_2^2$in each sub-interval $\CI_k$, for $k=1,\ldots,m$ of $[0,1]$ is of the form
\[ t^{\ast, k} = \frac{\sum_{i=1}^k a_id_i}{\sum_{i=1}^k a_ic_i}, \quad \widehat{t}^{\ast, k} = \frac{\sum_{i=1}^k a_i\widehat{d}_i}{\sum_{i=1}^k a_i\widehat{c}_i}\]
for 
\begin{align*}
  a_i &= \ssf_i(1+2\alpha\SN{\ysf_i}), \,\,\,
  c_i = \ssf_i+2\xsf_i(-1+\alpha) + 2\ysf_i,\,\,\, \widehat{c}_i = \ssf_i+2\widehat{\xsf}_i(-1+\alpha) + 2\ysf_i,\,
\,\,
  d_i = \ssf_i+2\alpha\xsf_i,\,\,\, \widehat{d}_i = \ssf_i+2\alpha\widehat{\xsf}_i,
\end{align*}  
  where $\ssf_i =\Sgn(\ysf_i)$.
  Denoting $\Verr =2(\widehat{\xbs}-\xbs)$, we write
\begin{align*} 
  \widehat{d}_i -d_i &= 2\alpha\LRP{\widehat{\xsf}_i - \xsf_i}=\alpha\mathsf{err}_i, \text{ and }
  \widehat{c}_i -c_i =2(\alpha-1)(\widehat{\xsf}_i -\xsf_i)=(\alpha-1)\mathsf{err}_i.
\end{align*}
We now have
\begin{align*}
t^{\ast,k} - \widehat{t}^{\ast,k} &= \frac{\sum_{i=1}^k a_id_i\sum_{i=1}^k a_i\widehat{c}_i - \sum_{i=1}^k a_i\widehat{d}_i \sum_{i=1}^k a_ic_i}{\sum_{i=1}^k a_ic_i\sum_{i=1}^k a_i\widehat{c}_i}\\
&=\frac{\sum_{i=1}^k a_i^2 \LRP{d_i\widehat{c}_i - \widehat{d}_ic_i} - \sum_{1\leq i<j\leq k} a_ia_j\LRP{d_i\widehat{c}_j+d_j\widehat{c}_i - \widehat{d}_ic_j-\widehat{d}_jc_i}}{\sum_{i=1}^k a_ic_i\sum_{i=1}^k a_i\widehat{c}_i}.
\end{align*}
Writing down each of the terms in the numerator we get
\begin{align*} 
  d_i\widehat{c}_i - \widehat{d}_ic_i=-2\alpha\mathsf{err}_i(\ysf_i-\xsf_i)  -\mathsf{err}_id_i = -\mathsf{err}_i a_i,
\end{align*}
where we use the fact that $c_i=d_i+2(\ysf_i-\xsf_i)$. 
We also get
\begin{align*}
  d_i\widehat{c}_j+d_j\widehat{c}_i - \widehat{d}_ic_j-\widehat{d}_jc_i &=\mathsf{err}_j\LRP{(\alpha-1)d_i-\alpha c_i} + \mathsf{err}_i\LRP{(\alpha-1)d_j -\alpha c_j}= -\LRP{\mathsf{err}_ja_i+\mathsf{err}_ia_j}.
\end{align*}  
Thus,
\begin{align*}
  \sum_{i=1}^k a_id_i\sum_{i=1}^k a_i\widehat{c}_i - \sum_{i=1}^k a_i\widehat{d}_i \sum_{i=1}^k a_ic_i &=-\LRP{\sum_{i=1}^k a_i^3\mathsf{err}_i + \sum_{1\leq i<j\leq k} a_ia_j (a_i\mathsf{err}_j+a_j\mathsf{err}_i)}\\
  &=-\sum_{i=1}^k \mathsf{err}_i\LRP{a_i^3 + a_i\sum_{i\neq j} a_j^2}=-\N{\abs_{1:k}}_2^2\sum_{i=1}^k a_i \mathsf{err}_i
\end{align*} 
Turning our attention to the denominator we have
\[\sum_{i=1}^k a_ic_i\sum_{i=1}^k a_i\widehat{c}_i = \sum_{i=1}^k a_i^2c_i\widehat{c}_i + \sum_{1\leq i<j\leq k} a_ia_j\LRP{c_i\widehat{c}_j+\widehat{c}_i c_j}=\LRP{\sum_{i=1}^k a_ic_i}^2 + (\alpha-1)\LRP{\sum_{i=1}^k a_ic_i} \sum_{j=1}^k a_j\mathsf{err}_j,\]
due to
\[ c_i\widehat{c}_j +\widehat{c_i}c_j = 2c_ic_j + (\alpha-1)\LRP{\mathsf{err}_jc_i+\mathsf{err}_i c_j}, \textrm{ and }\, c_i\widehat{c}_i = c_i^2 + (\alpha-1)c_i \mathsf{err}_i.\]
Putting it all together
and rewriting we have
\[ t^{\ast,k} - \widehat{t}^{\ast,k}
= \frac{-\N{\abs_{1:k}}_2^2\langle \abs_{1:k},{\Verr}_{1:k}\rangle}{ \LRP{\N{\abs_{1:k}}_2^2-2(\alpha-1)\langle \abs_{1:k},\Veta_{1:k}\rangle}\LRP{\N{\abs_{1:k}}_2^2- 2(\alpha-1)\langle\abs_{1:k}, (\ybs - \widehat{\xbs})_{1:k}\rangle} }   \]
 and recall $\Veta_{1:k}=(\ybs-\xbs)_{1:k}$.
Taking now $k=m$ we have by Cauchy-Schwartz inequality
\[ \SN{t_\text{opt} - \widehat{t}_\text{opt}} \leq \frac{\N{\abs}_2^3}{\SN{ \LRP{\N{\abs}_2^2-2(\alpha-1)\langle \abs,\Veta\rangle} \LRP{\N{\abs}_2^2- 2(\alpha-1)\langle\abs, (\ybs - \widehat{\xbs})\rangle} }}  \N{\xbs-{\widehat\xbs}}_2.\]
The term $\N{\xbs-\widehat\xbs}_2$ can be bounded as in Section \ref{sec:error_analysis}.
What is left is to bound the first factor. 
We compute
\[ {\SN{{\N{\abs}_2^2-2(\alpha-1)\langle \abs,\Veta\rangle} }}=\N{\abs}_2^2 \SN{1 -2(\alpha-1)\frac{\EUSP{\frac{\abs}{\N{\abs}_2}}{\Veta}}{\N{\abs}_2}}.\]
Provided\footnote{This holds for example if $(\alpha-1)^2\sigma^2m\lesssim  m + 4\alpha\N{\ybs}_1+4\alpha^2\N{\ybs}_2^2$} $\SN{2(\alpha-1){\EUSP{\frac{\abs}{\N{\abs}_2}}{\Veta}}} \leq \frac{\sqrt{2}}{2} {\N{\abs}_2}$ we have
\[\SN{1 -2(\alpha-1)\frac{\EUSP{\frac{\abs}{\N{\abs}_2}}{\Veta}}{\N{\abs}_2}}^{-1} \leq 2 \SN{1+2(\alpha-1)\frac{\EUSP{\frac{\abs}{\N{\abs}_2}}{\Veta}}{\N{\abs}_2}}\leq 2 \LRP{1+2\SN{\alpha-1}\frac{\N{\Veta}_2}{\N{\abs}_2}},\]
and
\[\SN{1 -2(\alpha-1)\frac{\EUSP{\frac{\abs}{\N{\abs}_2}}{\ybs-\widehat\xbs}}{\N{\abs}_2}}^{-1} \leq 2 \LRP{1+2\SN{\alpha-1}\frac{\N{\ybs-\widehat\xbs}_2}{\N{\abs}_2}}\leq2 \LRP{1+2\SN{\alpha-1}\frac{\N{\ybs}_2}{\N{\abs}_2}}. \]
where in the last line we used $\ybs-\widehat\xbs=(\Id-\widehat\Pi)\ybs$ and the fact $\N{\Id-\Pmat}_2=\N{\Pmat}_2$ for non-trivial (neither null nor identity) orthogonal projections $\Pmat$.
For $\alpha\geq1$ we have
\[ 2\SN{\alpha-1}\frac{\N{\ybs}_2}{\N{\abs}_2} \leq 1, \text{ and } 2\SN{\alpha-1}\frac{\N{\Veta}_2}{\N{\abs}_2} \leq 1,\]
using $\N{\abs}_2 \geq 2\SN{\alpha}\N{\ybs}_2$, and the signal-to-noise gap in the last inequality. 
On the other hand, for $0<\alpha<1$ we have $\N{\ybs}=\N{\xbs+\Veta}\lesssim \sqrt{h}+\sigma\sqrt{m}$, with high probability, giving
\[ \frac{\N{\abs}_2^3}{\SN{ \LRP{\N{\abs}_2^2-2(\alpha-1)\langle \abs,\Veta\rangle} \LRP{\N{\abs}_2^2- 2(\alpha-1)\langle\abs, \ybs - \widehat{\xbs}\rangle} }}\lesssim \frac{1}{\sqrt{m}}. \]
In conclusion, for $\alpha>0$ we have
\[ \SN{t_\text{opt} - \widehat{t}_\text{opt}} \lesssim \N{\Pi-\widehat{\Pi}}_2 + \sigma\sqrt{\frac{h}{m}},\]
as desired.
\bibliographystyle{plain}
\bibliography{biblioEN_ER.bib}

\end{document}